  \providecommand\BibTeX{{%
    \normalfont B\kern-0.5em{\scshape i\kern-0.25em b}\kern-0.8em\TeX}}}
\def\eqref#1{equation~\ref{#1}}
\def\1{\bm{1}}
\def\ve{{\bm{e}}}
\def\vu{{\bm{u}}}
\def\vx{{\bm{x}}}
\def\mA{{\bm{A}}}
\def\mC{{\bm{C}}}
\def\mH{{\bm{H}}}
\def\mI{{\bm{I}}}
\def\mK{{\bm{K}}}
\def\mL{{\bm{L}}}
\def\mP{{\bm{P}}}
\def\mQ{{\bm{Q}}}
\def\mV{{\bm{V}}}
\def\mW{{\bm{W}}}
\def\mX{{\bm{X}}}
\def\mY{{\bm{Y}}}
\def\mZ{{\bm{Z}}}
\DeclareMathAlphabet{\mathsfit}{\encodingdefault}{\sfdefault}{m}{sl}
\SetMathAlphabet{\mathsfit}{bold}{\encodingdefault}{\sfdefault}{bx}{n}
\def\gA{{\mathcal{A}}}
\def\gC{{\mathcal{C}}}
\def\gG{{\mathcal{G}}}
\def\gH{{\mathcal{H}}}
\def\gN{{\mathcal{N}}}
\def\gP{{\mathcal{P}}}
\def\gX{{\mathcal{X}}}
\def\sG{{\mathbb{G}}}
\def\sS{{\mathbb{S}}}
\newcommand{\R}{\mathbb{R}}
\newtheorem{definition}{Definition}[section]
\newtheorem{theorem}{Theorem}
\newtheorem{proposition}{Proposition}
\newtheorem{lemma}{Lemma}
\newcommand*{\ldblbrace}{\{\mskip-5mu\{}
\newcommand*{\rdblbrace}{\}\mskip-5mu\}}
\newenvironment{customthm}[1]
  {\innercustomthm}
  {\endinnercustomthm}
\begin{document}

\begin{abstract}

Graph Transformer has recently received wide attention in the research community with its outstanding performance, yet its structural expressive power has not been well analyzed. Inspired by the connections between Weisfeiler-Lehman (WL) graph isomorphism test and graph neural network (GNN), we introduce \textbf{SEG-WL test} (\textbf{S}tructural \textbf{E}ncoding enhanced \textbf{G}lobal \textbf{W}eisfeiler-\textbf{L}ehman test), a generalized graph isomorphism test algorithm as a powerful theoretical tool for exploring the structural discriminative power of graph Transformers. We theoretically prove that the SEG-WL test is an expressivity upper bound on a wide range of graph Transformers, and the representational power of SEG-WL test can be approximated by a simple Transformer network arbitrarily under certain conditions. With the SEG-WL test, we show how graph Transformers' expressive power is determined by the design of structural encodings, and present conditions that make the expressivity of graph Transformers beyond WL test and GNNs. Moreover, motivated by the popular shortest path distance encoding, we follow the theory-oriented principles and develop a provably stronger structural encoding method, Shortest Path Induced Subgraph (\textit{SPIS}) encoding. Our theoretical findings provide a novel and practical paradigm for investigating the expressive power of graph Transformers, and extensive synthetic and real-world experiments empirically verify the strengths of our proposed methods.

\end{abstract}

\title{On Structural Expressive Power of Graph Transformers}

\author{Wenhao Zhu}
\email{wenhaozhu@pku.edu.cn}
\orcid{1234-5678-9012}
\affiliation{%
  \institution{National Key Laboratory of General Artificial Intelligence, School of Intelligence Science and Technology, Peking University}
  \city{Beijing}
  \country{China}
}

\author{Tianyu Wen}
\email{tianyuwen@pku.edu.cn}
\affiliation{%
  \institution{Yuanpei College, Peking University}
  \city{Beijing}
  \country{China}
}

\author{Guojie Song}
\email{gjsong@pku.edu.cn}
\affiliation{%
  \institution{National Key Laboratory of General Artificial Intelligence, School of Intelligence Science and Technology, Peking University}
  \city{Beijing}
  \country{China}
}

\author{Liang Wang}
\email{liangbo.wl@alibaba-inc.com}
\affiliation{%
  \institution{Alibaba Group}
  \country{China}
}

\author{Bo Zheng}
\email{bozheng@alibaba-inc.com}
\affiliation{%
  \institution{Alibaba Group}
  \country{China}
}

\maketitle

\section{Introduction}
\label{sec1}

In the last decade, graph neural network (GNN) \citep{kipf2016semi,velivckovic2017graph} has become the prevalent neural architecture for deep learning on graph data. Following the message-passing scheme, GNNs learn the vector representation of node $v$ by iteratively aggregating and transforming features of its neighborhood nodes.
Recent studies \citep{xu2018powerful} have proved that Weisfeiler-Lehman (WL) graph isomorphism test can measure the theoretical expressive power of message-passing GNNs in distinguishing graph structures \citep{weisfeiler1968reduction}.

While in the last few years, the Transformer architecture \citep{vaswani2017attention} has achieved broad success in various machine learning tasks. On graph representation learning, though with higher complexity than GNNs, recent works \citep{ying2021transformers,kreuzer2021rethinking} have proved that graph Transformers can successfully model large-scale graph data and deliver state-of-the-art performance on real-world benchmarks. However, despite advances in empirical benchmark results, the theoretical expressive power of graph Transformers has not been deeply explored. Compared with GNN's message-passing strategy, which only includes neighborhood aggregation, most graph Transformers represent nodes by considering all pair-wise interactions in the input graph, meaning that every node has a \textbf{global receptive field} at each layer. Besides, since vanilla self-attention is ignorant of node ordering, like positional encodings in language models, graph Transformers must design various \textbf{structural encodings} as a soft inductive bias to leverage graph structural information. Therefore, previous methods like WL test can no longer be used to analyze the expressivity of graph Transformers, considering the substantial differences between two model architectures. The natural questions arise: \textit{How to characterize the structural expressive power of graph Transformers? How to build expressive graph Transformers that can outperform the WL test and GNNs?}

\begin{figure*}
\centering
\begin{subfigure}{.35\textwidth}
  \centering
  \includegraphics[width=\linewidth]{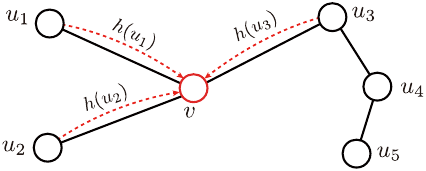}
  \caption{WL Test}
\end{subfigure}%
\begin{subfigure}{.35\textwidth}
  \centering
  \includegraphics[width=\linewidth]{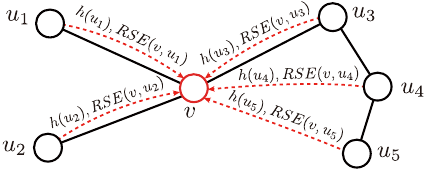}
  \caption{SEG-WL Test}
\end{subfigure}
\caption{An illustration of the node label update strategies of WL test and SEG-WL test.}
\label{fig1}

\end{figure*}

Our key to answering the questions above is \textbf{SEG-WL test} (\textbf{S}tructural \textbf{E}ncoding enhanced \textbf{G}lobal \textbf{W}eisfeiler-\textbf{L}ehman test), a generalized graph isomorphism test algorithm designed to characterize the expressivity of graph Transformer, as illustrated in Figure \ref{fig1}. Specifically, SEG-WL test represents a family of graph isomorphism test algorithms whose label update strategy is shaped by predefined \textit{structural encodings}. For every input graph, SEG-WL test first inserts \textit{absolute structural encodings} to the initial node labels. Then during each iteration, unlike WL test which updates the node label of $v$ by hashing  the multiset of its neighborhood node labels $\ldblbrace h(u):u\in \gN(v)\rdblbrace$, SEG-WL test \textit{globally} hashes $\ldblbrace(h(u),\text{RSE}(u,v)):u\in V\rdblbrace$, the collection of all node labels together with \textit{relative structural encodings} to the central node. We theoretically prove that SEG-WL test is an expressivity upper bound on any graph neural model that learns structural information via structural encodings, including most graph Transformers (Theorem \ref{thm1}). Moreover, with the universal approximation theorem of Transformers \citep{yun2019transformers}, we show under certain assumptions, the expressivity of SEG-WL test can be approximated at any precision by a simple Transformer network which incorporates relative structural encodings as attention biases (Theorem \ref{thm2}). These conclusions guarantee that SEG-WL test can be a solid theoretical tool for our deeper investigation into the expressivity of graph Transformers.

Since the label update strategy of SEG-WL test is driven by structural encoding, we next develop general theories to understand the characteristics of structural encodings better. Our central result shows that one can compare the expressivity and convergence rate of SEG-WL tests by looking into the relationship between their structural encodings (Theorem \ref{thm3}), which provides us with a simple and powerful solution to analyze the representational capacity of SEG-WL test and graph Transformers. We show WL test can be viewed as a nested case of SEG-WL test (Theorem \ref{thm4}), and theoretically characterize how to design structural encodings that make graph Transformers more expressive than WL test and GNNs. We demonstrate that graph Transformers with the shortest path distance (\textit{SPD}) structural encodings (like Graphormer \citep{ying2021transformers}) are strictly more powerful than the WL test (Theorem \ref{thm5}), and they have distinctive expressive power that differs from encodings that focus on local information (Proposition \ref{mpro1}). Based on \textit{SPD} encodings, we follow the theoretical guidelines and design \textit{SPIS}, a provably more powerful structural encoding (Theorem \ref{thm6}) with profound representational capabilities (Proposition \ref{mpro2}-\ref{mpro3}). Our synthetic experiments verify that \textit{SPIS} has remarkable expressive power in distinguishing graph structures, and the performances of existing graph Transformers can be consistently improved when equipped with the proposed \textit{SPIS}.


\paragraph{\textbf{Contributions.}} We summarize the main contributions of this work as follows:
\begin{itemize}
    \item We introduce the SEG-WL test algorithm and prove it well characterizes the expressive power of various graph Transformers (Section \ref{sec3}, Theorem \ref{thm1}-\ref{thm2}).
    \item Using the SEG-WL test, we develop a generalized theoretical framework on \textit{structural encodings} that determines the expressivity of graph Transformers, and show how to make graph Transformers more expressive than WL test and GNNs (Section \ref{sec4}, Theorem \ref{thm3}-\ref{thm4}).
    \item We conduct in-depth investigation into the expressivity of the existing \textit{SPD} structural encoding, and propose a provably more powerful encoding method \textit{SPIS} (Section \ref{sec5}, Theorem \ref{thm5}-\ref{thm6}).
    \item Synthetic and real-world experiments demonstrate that \textit{SPIS} has strong expressive power in distinguishing graph structures, and performances of benchmark graph Transformers are dominated by the theoretically more powerful \textit{SPIS} encoding (Section \ref{sec6}).
\end{itemize}

Overall, we build a general theoretical framework for analyzing the expressive power of graph Transformers, and propose the \textit{SPIS} structural encoding to push the boundaries of both expressivity and performance of graph Transformers.


\section{Related Work}
\label{asec2}
\subsection{WL Test and GNNs}
\paragraph{\textbf{Weisfeiler-Lehman Graph Isomorphism Test.}} The Weisfeiler-Lehman test is a hierarchy of graph isomorphism tests \citep{weisfeiler1968reduction,grohe2017descriptive}, and the 1-WL test is know to be an upper bound on the expressivity of message-passing GNNs \citep{xu2018powerful}. Note that in this paper, without further notations, we will use the term \textit{WL} to refer to 1-WL test. Formally, the definition of WL test is presented as

\begin{definition}[WL test]
Let the input be a labeled graph $G=(V,E)$ with label map $h_0:V\to\gX$. WL test iteratively updates node labels of $G$, where at the $t$-th iteration, the updated node label map $w_t: V\to\gX$ is computed as
\begin{align}
    w_t(v)=\Phi\left(w_{t-1}(v),\ldblbrace w_{t-1}(u):u\in\gN(v)\rdblbrace\right), \label{wl}
\end{align}
where $w_0=h_0$ and $\Phi$ is a function that injectively maps the collection of all possible tuples in the r.h.s. of Equation \ref{wl} to $\gX$. We say two graphs $G_1,G_2$ are distinguished as non-isomorphic by WL test if after $t$ iterations, the WL test generates $\ldblbrace w_{t}(v)|v\in V_1\rdblbrace\neq\ldblbrace w_{t}(v)|v\in V_2\rdblbrace$ for some $t$.
\end{definition}

\paragraph{\textbf{GNNs beyond the Expressivity of 1-WL.}} Since standard GNNs (like GCN \citep{kipf2016semi}, GAT \citep{velivckovic2017graph} and GIN \citep{xu2018powerful}) have expressive power bounded by the 1-WL, many works have proposed to improve the expressivity of GNNs beyond the 1-WL. High-order GNNs including \citep{morris2019weisfeiler,maron2019provably,azizian2020expressive,morris2020weisfeiler} build graph neural networks inspired from k-WL with $k>3$ to acquire the stronger expressive power, yet they mostly have high computational costs and complex network designs. Some works have proposed to use pre-computed topological node features to enhance the expressive power of GNNs, including \citep{monti2018motifnet,liu2020neural,bouritsas2022improving}. These additional features may contain the number of the appearance of certain substructures like triangles, rings and circles. And recent works like \citep{you2021identity,vignac2020building,sato2021random,wijesinghe2021new} show that the expressivity of GNNs can also be enhanced using random node identifiers or improved message-passing schemes.

\subsection{Graph Transformer}

\paragraph{\textbf{The Transformer Architecture.}} Transformer is first proposed in \cite{vaswani2017attention} to model sequence-to-sequence functions on text data, and now has become the prevalent neural architecture for natural language processing \citep{devlin2018bert}. A Transformer layer mainly consists of a multi-head self-attention (MHA) module and a position-wise feed-forward network (FFN) with residual connections. For queries $\mQ\in\R^{n_q\times d}$, keys $\mK\in\R^{n_k\times d}$ and values $\mV\in\R^{n_k\times d}$, the scaled dot-product attention module can be defined as
\begin{align}
    \text{Attention}(\mQ,\mK,\mV)=\text{softmax}(\mA)\mV,\mA=\frac{\mQ\mK^{\top}}{\sqrt{d}},
\end{align}
where $n_q, n_k$ are number of elements in queries and keys, and $d$ is the hidden dimension. Then, the multi-head attention is calculated as
\begin{align}
    &\text{MHA}(\mQ,\mK,\mV)=\text{Concat}(\text{head}_1,\ldots,\text{head}_h)\mW^O,\\
    &\text{head}_i=\text{Attention}(\mQ\mW_i^Q,\mK\mW_i^K,\mV\mW_i^V),\text{for }i=1,\ldots,h,
\end{align}
where $h$ is number of attention heads, $\mW_i^Q\in\R^{d\times d_k},\mW_i^K\in\R^{d\times d_k},$ $\mW_i^V\in\R^{d\times d_v}$ and $\mW^O\in\R^{hd_v\times d}$ are projection parameter matrices, $d,d_k,d_v$ are the dimension of hidden layers, keys and values. In encoder side of the original Transformer architecture, all queries, keys and values come from the input sequence embeddings.

After multi-head attention, the position-wise feed-forward network is applied to every element in the sequence individually and identically. This network is composed of two linear transformations, an activation function and residual connections in between. Layer normalization \cite{ba2016layer} is also performed before the multi-head self-attention and feed-forward network \cite{xiong2020layer}. A Transformer layer can be defined as below: 
\begin{align}
    &\text{Transformer}(\mQ,\mK,\mV)=\text{FFN}(\text{LN}(\mH))+\mH,\\
    &\mH=\text{MHA}(\text{LN}(\mQ,\mK,\mV))+\mQ.
\end{align}

\paragraph{\textbf{Graph Transformers.}} Along with the recent surge of Transformer, many prior works have attempted to bring Transformer architecture to the graph domain, including GT \cite{dwivedi2020generalization}, GROVER \cite{rong2020self}, Graphormer \cite{ying2021transformers}, SAN \cite{kreuzer2021rethinking}, SAT \cite{chen2022structure}, ANS-GT \cite{zhang2022hierarchical}, GraphGPS \cite{rampasek2022GPS}, GRPE \cite{park2022grpe}, EGT \cite{hussain2022global} and NodeFormer \cite{wunodeformer}. These methods generally treat input graph as a sequence of node features, and apply various methods to inject structural information into the network. GT \cite{dwivedi2020generalization} provides a generalization of Transformer architecture for graphs with modifications like using Laplacian eigenvectors as positional encodings and adding edge feature representation to the model. GROVER \cite{rong2020self} is a molecular large-scale pretrain model that applies Transformer to node embeddings calculated by GNN layers. Graphormer \cite{ying2021transformers} proposes an enhanced Transformer with centrality, spatial and edge encodings, and achieves state-of-the-art performance on many molecular graph representation learning benchmarks. SAN \cite{kreuzer2021rethinking} presents a learned positional encoding that cooperates with full Laplacian spectrum to learn the position of each node in the graph. Gophormer \cite{zhao2021gophormer} applies structural-enhanced Transformer to sampled ego-graphs to improve node classification performance and scalability. GraphGPS \citep{rampasek2022GPS} proposes a recipe on how to build a general, powerful, scalable (GPS) graph Transformer with linear complexity and state-of-the-art results on real benchmark tests. SAT \citep{chen2022structure} proposes the Structure-Aware Transformer with its new self-attention mechanism which incorporates structural information into the original self-attention by extracting a subgraph representation rooted at each node using GNNs before computing the attention.

\section{Preliminaries}
\paragraph{\textbf{Basic Notations.}} Let $G=(V,E)$ be a undirected graph where $V=\{v_1,v_2,\ldots,v_n\}$ is the node set that consists of $n$ nodes, and $E\subset V\times V$ is edge set. Let $h_0:V\to\gX$ defines the input feature vector (or label) attached to nodes, where $\gX\subset \R^d$ is the feature space. In this paper, we only consider simple undirected graphs with node features, and we use $\gG$ to denote the set of all possible labeled simple undirected graphs.

\paragraph{\textbf{Structural Encodings.}} Generally, \textit{structural encoding} is a function that encodes structural information in  $G$ to numerical vectors associated with nodes or node tuples of $V$. In the scope of this paper, we mainly use two types of structural encodings: \textit{absolute structural encoding} (ASE), which represents absolute structural knowledge of individual nodes, and \textit{relative structural encoding} (RSE), which represents the relative structural relationship between two nodes in the entire graph context. For a certain graph Transformer model, its structural encoding scheme consists of both absolute and relative encodings, and we present the formal definition below:

\begin{definition}[Structural Encoding]
A \textbf{structural encoding scheme} $S=(f_A,f_R)$ is a pair of functions, where for any graph $G=(V,E)$, $f_A(v,G)\in\gC$ is the absolute structural encoding of any node $v\in V$, $f_R(v,u,G)\in\gC$ is the relative structural encoding of any node pair $(v,u)\in V\times V$, and $\gC$ is the target space. A structural encoding scheme is called \textbf{regular} if the relative structural encoding function satisfies $f_R(v,v,G)\neq f_R(v,u,G)$ for $u,v\in V$ and $u\neq v$. 
\end{definition}

For example, we can use degree as an absolute structural encoding of a node, and use the shortest path distance between two nodes as the relative structural encoding of a node pair. We will discuss structural encodings more in the following sections. 

\section{SEG-WL Test and Graph Transformers}
\label{sec3}

In this section, we mathematically formalize the SEG-WL test algorithm and theoretically prove that SEG-WL test well characterizes the expressive power of graph Transformers. Note that Appendix \ref{asec1} provides detailed proofs for all theorems and propositions in the following sections.

\subsection{From WL Test to SEG-WL Test} 

Generally, previous GNN-based methods represent a node by summarizing and transforming its neighborhood information. This strategy leverages graph structure in a \textit{hard-coded} way, where the structural knowledge is reflected by removing the information exchange between non-adjacent nodes. WL test is a high-level abstraction of this learning paradigm. However, graph Transformers take a fundamentally different way of learning graph representations. Without any hard inductive bias, self-attention represents a node by aggregating its semantic relation between every node in the graph, and structural encodings guide this aggregation as a \textit{soft} inductive bias to reflect the graph structure. The proposed SEG-WL test then becomes a generalized algorithm for this powerful and flexible learning scheme by updating node labels based on the entire label set of nodes and their relative structural encoding to the central node, defined as follows:  

\begin{definition}[SEG-WL Test]
Let the input be a labeled graph $G=(V,E)$ with label map $h_0:V\to\gX$. For structural encoding scheme $S=(f_A,f_R)$, its corresponding SEG-WL test algorithm first computes the initial label mapping $g_0:V\to\gX$ by adding the absolute structural encodings:
\begin{align}
    g_0(v)=\Phi_0(h_0(v),f_A(v,G)),
\end{align}
where $\Phi_0$ is a injective function that maps the tulple to $\gX$. Then SEG-WL test iteratively updates node labels of $G$, where at the $t$-th iteration, the updated node label mapping $g_t: V\to\gX$ is computed as
\begin{align}
    g_t(v)=\Phi\left(\ldblbrace (g_{t-1}(u),f_R(v,u,G)):u\in V\rdblbrace\right), \label{gt}
\end{align}
where $\Phi$ is a function that injectively maps the collection of all possible multisets of tuples in the r.h.s. of Equation \ref{gt} to $\gX$. We say two graphs $G_1,G_2$ are distinguished as non-isomorphic by $S$-SEG-WL test if after $t$ iterations, $S$-SEG-WL generates $\ldblbrace g_{t}(v):v\in V_1\rdblbrace\neq\ldblbrace g_{t}(v):v\in V_2\rdblbrace$ for some $t$.
\end{definition}
Note that for structural encoding scheme $S$ we use $S$-SEG-WL to denote its corresponding SEG-WL test algorithm. Following its definition, we will show that SEG-WL test characterizes a wide range of graph neural models that leverage graph structure as a soft inductive bias:

\begin{theorem}
For any structural encoding scheme $S=(f_A,f_R)$ and labeled graph $G=(V,E)$ with label map $h_0:V\to\gX$, if a graph neural model $\gA:\gG\to\R^d$ satisfies the following conditions:
\begin{enumerate}
\itemsep0em
    \item $\gA$ computes the initial node embeddings with
    \begin{align}
        l_0(v)=\phi(h_0(v),f_A(v,G)),
    \end{align}
    \item $\gA$ aggregates and updates node embeddings iteratively with
    \begin{align}
        l_t(v)=\sigma(\ldblbrace(l_{t-1}(u),f_R(v,u,G)):u\in V\rdblbrace),
    \end{align}
    where $\phi$ and $\sigma$ above are model-specific functions,
    \item The final graph embedding is computed by a global readout on the multiset of node features $\ldblbrace l_t(v):v\in V\rdblbrace$.
\end{enumerate}
then for any labeled graphs $G_1$ and $G_2$, if $\gA$ maps them to different embeddings, $S$-SEG-WL also decides $G_1$ and $G_2$ are not isomorphic.
\label{thm1}
\end{theorem}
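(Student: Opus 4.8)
The plan is to prove the contrapositive via an induction establishing that the $S$-SEG-WL labeling is always at least as discriminative as the node embeddings produced by $\gA$. Concretely, I would show that for every iteration $t$ there is a single graph-independent function $\psi_t$ with $l_t(v)=\psi_t(g_t(v))$ for every node $v$ of every graph. Equivalently, I would maintain the invariant that, for any two nodes $v,v'$ drawn from $V_1\cup V_2$, $g_t(v)=g_t(v')$ implies $l_t(v)=l_t(v')$; this phrasing makes transparent that the partition induced by $\gA$ can only be coarser than that of $S$-SEG-WL. Once this is in hand, assume $S$-SEG-WL fails to distinguish $G_1$ and $G_2$, i.e.\ $\ldblbrace g_t(v):v\in V_1\rdblbrace=\ldblbrace g_t(v):v\in V_2\rdblbrace$ as multisets for all $t$. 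Applying $\psi_t$ elementwise gives $\ldblbrace l_t(v):v\in V_1\rdblbrace=\ldblbrace l_t(v):v\in V_2\rdblbrace$, and since by condition~(3) the graph embedding of $\gA$ is a function of this node-feature multiset, the two graphs receive identical embeddings. Contraposing yields the theorem; in particular, at the single iteration $T$ at which $\gA$ reads out, equality of the label multisets already forces equal embeddings, so no termination or fixed-point argument is needed.

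For the induction, the base case $t=0$ is immediate: since $\Phi_0$ is injective, $g_0(v)=\Phi_0(h_0(v),f_A(v,G))$ uniquely determines the pair $(h_0(v),f_A(v,G))$, hence determines $l_0(v)=\phi(h_0(v),f_A(v,G))$, and one may take $\psi_0=\phi\circ\Phi_0^{-1}$. For the inductive step, suppose $l_{t-1}=\psi_{t-1}\circ g_{t-1}$. Injectivity of $\Phi$ means $g_t(v)$ uniquely identifies the multiset $\ldblbrace(g_{t-1}(u),f_R(v,u,G)):u\in V\rdblbrace$; replacing each first coordinate $g_{t-1}(u)$ by $\psi_{t-1}(g_{t-1}(u))=l_{t-1}(u)$ recovers exactly $\ldblbrace(l_{t-1}(u),f_R(v,u,G)):u\in V\rdblbrace$, which is precisely the argument of $\sigma$ in condition~(2). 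Hence $l_t(v)=\sigma(\cdot)$ is a deterministic function of $g_t(v)$, which defines $\psi_t$ and closes the induction.

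The crux, and the step I expect to require the most care, is the well-definedness of $\psi_t$ uniformly across both graphs: it rests essentially on $\Phi$ (and $\Phi_0$) being \emph{globally} injective over the collection of all possible multisets of tuples, not merely injective on the data of a single graph. This is exactly the asymmetry that makes $S$-SEG-WL an upper bound rather than an equivalence: $\Phi$ is required to be injective, whereas the model aggregator $\sigma$ may collapse distinct multisets, so it can never separate nodes that $S$-SEG-WL identifies. I would therefore state the injectivity of $\Phi,\Phi_0$ as honest global properties and verify that $\psi_{t-1}$, being graph-independent, can be applied coordinatewise inside the multiset without reference to which graph $v$ belongs to; the remaining manipulations (pushing $\psi_{t-1}$ through the multiset, then applying $\sigma$) are routine once this point is pinned down.
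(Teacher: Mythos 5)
Your proposal is correct and follows essentially the same route as the paper's proof: an induction showing that the SEG-WL label $g_t(v)$ determines the model embedding $l_t(v)$ via a graph-independent map $\psi_t$ (using injectivity of $\Phi_0$ and $\Phi$ to recover the multiset and push $\psi_{t-1}$ through it coordinatewise), followed by applying $\psi_t$ elementwise to the label multisets to transfer indistinguishability to the readout. Your emphasis on the global injectivity of $\Phi$ across both graphs matches the paper's handling of nodes drawn from possibly different graphs, so nothing further is needed.
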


In the SEG-WL test framework outlined by Theorem \ref{thm1}, Appendix \ref{asec4} presents examples of characterizing the expressivity of existing graph Transformer models using certain structural encoding , including \cite{dwivedi2020generalization,ying2021transformers,kreuzer2021rethinking,zhao2021gophormer,chen2022structure}  . Notably, in Appendix \ref{asec11} we provide a more generalized version of Theorem \ref{thm1} which proves that the widely adopted virtual node trick \citep{ying2021transformers} has no influence on the maximum model expressive power.

\subsection{Theoretically Powerful Graph Transformers}
Though the maximum representational power of most graph Transformer models has been well characterized by SEG-WL test, it is still unknown if there exists a graph Transformer model that can reach its expressivity upper bound. Transformer layers are composed of self-attention module and feed-forward network,  which drive them much more complex than standard GNN layers, making it challenging to analyze the expressive properties of graph Transformers. Thanks to the universal approximation theorem of Transformers \citep{yun2019transformers}, our next theoretical result demonstrates that under certain conditions, a simple graph Transformer model which leverages relative structural encodings as attention biases via learnable embedding layers (named as bias-GT) can arbitrarily approximate the SEG-WL test iterations for any structural encoding design:

\begin{theorem}
For any regular structural encoding scheme $S$, graph order $n$, $1<p<\infty$ and $\epsilon>0$, let $f_t$ represent the function of $S$-SEG-WL with $t$ iterations. Then $f_t$ can be approximated by a bias-GT network $g$ with $S$ such that $\mathsf{d}_p(f_t,g)<\epsilon$ if (i) the feature space $\gX$ is compact, (ii) $\Phi$ can be extended to a continuous function with respect to node labels.
\label{thm2}
\end{theorem}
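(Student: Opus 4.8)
The plan is to construct $g$ as a stack of $t$ nearly identical bias-GT blocks, each approximating a single $S$-SEG-WL iteration, and to control each block by the universal approximation theorem for Transformers \citep{yun2019transformers}. As a first step I would verify that the one-step update $v\mapsto\Phi(\ldblbrace(g_{t-1}(u),f_R(v,u,G)):u\in V\rdblbrace)$ is a continuous function on a compact domain: hypothesis (i) keeps the labels in the compact set $\gX$, hypothesis (ii) makes $\Phi$ continuous, and for fixed graph order $n$ there are only finitely many graphs and finitely many attainable relative encodings $f_R$, so all intermediate label values stay within one fixed compact set. Consequently the $t$-fold composition $f_t$ is continuous on a compact domain, which is the regime in which $\mathsf{d}_p$-approximation is meaningful.

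Next I would reduce the claim to a single iteration. Since each iteration map is uniformly continuous on the compact domain, a block that approximates one iteration to within a small uniform error induces only a controllably amplified error after composition; choosing the per-block tolerance small enough in terms of $t$, $p$, and the moduli of continuity yields $\mathsf{d}_p(f_t,g)<\epsilon$ for the full network. I would treat this error-propagation estimate only at the level of a sketch, as it is routine once uniform continuity is in place.

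The core of the argument is that one bias-GT block can approximate a single SEG-WL iteration. Here the relative structural encodings, fed in as attention biases through learnable embedding layers, play exactly the role that trainable positional encodings play in \citep{yun2019transformers}: they furnish the symmetry-breaking information that lets self-attention implement a contextual mapping separating distinct label configurations. The regularity hypothesis on $S$, namely $f_R(v,v,G)\neq f_R(v,u,G)$ for $u\neq v$, is what I expect to be essential, since it lets the central node distinguish its own contribution from the others' in the aggregation --- precisely the distinguishability the contextual-mapping construction needs. Granting this, the attention layers build an injective encoding of the multiset $\ldblbrace(g_{t-1}(u),f_R(v,u,G)):u\in V\rdblbrace$, and the position-wise feed-forward layers approximate the continuous map $\Phi$ on the resulting compact range.

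The main obstacle, which I would develop most carefully, is adapting the theorem of \citep{yun2019transformers} --- stated for Transformers whose positional information is \emph{added} to the input tokens --- to the bias-GT architecture, where structural information instead enters inside the attention logits as biases. I would need to show that biases derived from a regular $S$ still yield the contextual-mapping property underlying their proof, and that the finiteness of attainable $f_R$ values for fixed $n$ lets the embedding layers realize the needed biases. Ensuring that intermediate activations across the $t$ stacked blocks remain inside the compact region on which the continuous extension of $\Phi$ is valid is the remaining delicate point.
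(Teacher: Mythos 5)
Your high-level plan coincides with the paper's proof in Appendix \ref{asec12}: reduce to approximating a single SEG-WL iteration, use compactness of $\gX$ together with the finiteness of graphs of a fixed order $n$ (hence finiteness of the attainable encoding matrices $\mW_j$) to obtain a uniform continuity modulus for $\Phi(\cdot,\mW_j)$, approximate $\Phi$ by a piece-wise constant function, and stack the resulting one-iteration blocks. The decomposition, the role of hypotheses (i)--(ii), and the appeal to \citep{yun2019transformers} are all the right ones.

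The genuine gap is that the step you explicitly defer --- showing that structural information entering as attention-logit \emph{biases}, rather than as additive positional encodings, still produces a contextual mapping --- is not a routine adaptation of \citep{yun2019transformers}; it is essentially the entire technical content of the theorem, and your proposal contains no mechanism for it. Concretely, the difficulty is that the feed-forward and value pathways of bias-GT never see $f_R$ as a feature, so one must manufacture, from the biases alone, a scalar per node that injectively encodes the multiset $\ldblbrace(\vx_j,\bm{\alpha}_{u,j}):j=1,\ldots,n\rdblbrace$. The paper does this in two stages: after quantizing $\mX$ and forming the column ids $l_j=\vu^\top\mL_{:,j}$ as in \citep{yun2019transformers}, it first rescales each $l_j$ to $\delta^{-\delta^{-1}l_j}$ by feed-forward layers, so that sums of ids over arbitrary node subsets become injective descriptors of the corresponding label multisets; it then uses one attention head per encoding value $i\in\gC$ (finiteness of $\gC$ for fixed $n$ is what keeps the network finite), whose bias is the indicator $\phi_i(\bm{\alpha}_{u,v})=\mathbbm{1}\{\bm{\alpha}_{u,v}=i\}$ passed through a hardmax, so that head $i$ at node $u$ outputs a unique descriptor of $Y(u,i)=\ldblbrace\vx_v:\bm{\alpha}_{u,v}=i\rdblbrace$; finally the $c$ heads are combined with separating weights $n!\delta^{(3p+q)i}$ chosen so that the sum uniquely determines the whole family $(Y(u,1),\ldots,Y(u,c))$. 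Only after this global shift do the selective-shift and value-mapping steps of \citep{yun2019transformers} apply essentially verbatim. Without some such construction your argument stops exactly where the proof must begin. A smaller inaccuracy: regularity of $S$ is used in the paper mainly for the matching upper bound (Proposition \ref{apro0}, where $f_R(v,v,G)$ being distinct lets the pure multiset update subsume the central node's own label, as the Transformer's residual/self-term requires), rather than as the engine of the contextual mapping as you suggest.
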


In Theorem \ref{thm2} we define $f_t$ by stacking all labels generated by SEG-WL test with $t$ iterations, and $\mathsf{d}_p(f_t,g)$ is the maximum $\ell^p$ distance between $f_t$ and $g$ when changing the input graph structure. Proof for Theorem \ref{thm2} and the detailed descriptions for $f_t,g,\mathsf{d}_p,\Phi$ and the bias-GT network are provided in Appendix \ref{asec12}.

Under certain conditions, Theorem \ref{thm2} guarantees that the simple Transformer network bias-GT is theoretically capable of capturing structural knowledge introduced as attention biases and arbitrarily approximating its expressivity upper bound, though a good approximation may require many Transformer layers. Overall, considering that the simple bias-GT network (which can be viewed as a simplification of existing graph Transformers like Graphormer \citep{ying2021transformers}) is one instance among the most theoretically powerful graph Transformers, one can translate the central problem of characterizing the expressive capacity of graph Transformers into understanding the expressivity of SEG-WL test, which is determined by the design of structural encodings.

\section{General Discussions on SEG-WL Test and Structural Encodings}
\label{sec4}
In this section, we develop a unified theoretical framework for analyzing structural encodings and the expressivity of SEG-WL test. One can tell that each SEG-WL test iteration has quadratic complexity with respect to the graph size and is more computationally expensive than WL, yet we will prove in the following text that SEG-WL test could exhibit extraordinary expressive power and lower necessary iterations when combined with a variety of structural encodings. We first present concrete examples and show how the expressivity of structural encodings can be compared. Based on these findings, we prove that WL test is a nested case of SEG-WL test and theoretically characterize how to design structural encodings exceeding the expressivity of WL test. More discussions are provided in Appendix \ref{asec3}.

\subsection{Examples of Structural Encodings}
\label{sec41}
\paragraph{\textbf{Identical Encoding.}} The simplest encoding scheme assigns identical information to every node and non-duplicated node pair. Formally, let $\textit{id}=(\textit{id}_A,\textit{id}_R)$ be the identical encoding scheme, then for $G=(V,E)$ and $v,u\in V$, $\textit{id}_A(v,G)=0,\textit{id}_R(v,u)=1,\textit{id}_R(v,v)=0$.

\paragraph{\textbf{Node Degree Absolute Encoding.}} A common strategy for injecting absolute structural knowledge to node embeddings in the entire graph context is using the node degree as an additional signal. For graph $G=(V,E)$ and $v\in V$, let $\textit{Deg}_A(v,G)$ be the degree of node $v$, then $\textit{Deg}_A$ is the node degree absolute encoding function. 

\paragraph{\textbf{Neighborhood Relative Encoding.}} Neighborhood relative encoding $\textit{Neighbor}_R$ is a basic example that encodes edge connections. For $G=(V,E)$ and $v,u\in V$, it is defined as
\begin{align}
    \textit{Neighbor}_R(v,u,G)=
    \begin{cases}
    1,\text{ if }(v,u)\in E,\\
    2,\text{ if }(v,u)\notin E,
    \end{cases}
\end{align}
and $\textit{Neighbor}_R(v,v,G)=0.$ We also use $\textit{Neighbor}=(\textit{id}_A,\textit{Neighbor}_R)$ to denote the encoding scheme that combines $\textit{Neighbor}_R$ with identical absolute encoding. Intuitively, we will show that $\textit{Neighbor}$ precisely shapes the expressivity of WL test.

\paragraph{\textbf{Shortest Path Distance Relative Encoding.}} First introduced by \citep{ying2021transformers}, shortest path distance (SPD) is a popular choice for representing relative structural information between two nodes in the graph. We formulate it as

\begin{align}
    \textit{SPD}_R(v,u,G)=
    \begin{cases}
    &\text{the SPD between $v$ and $u$ in $G$,}\\
    &\text{\quad\quad if $v$ and $u$ are connected,}\\
    &\infty,\text{\quad if $v$ and $u$ are not connected,}
    \end{cases}
\end{align}
where $\infty$ can be viewed as an element in $\gC$ and $\textit{SPD}_R(v,v,G)=0$. We also define the \textit{SPD} structural encoding scheme as $\textit{SPD}=(\textit{id}_A,\textit{SPD}_R)$.

\subsection{Structural Encoding Determines the Expressiveness and Convergence Rate of SEG-WL test}

Our next theoretical result is based on the intuitive idea that if one can infer the structural information in scheme $S$ from another encoding scheme $S'$, then $S'$ should be generally more powerful and converge faster on graphs as it contains more information. To formulate this theoretical insight, we start by defining a partial ordering to characterize the relative discriminative power of structural encodings:

\begin{definition}[Partial Order Relation on Structural Encodings]
For two structural encoding schemes $S=(f_A,f_R)$ and $S'=(f_A',f_R')$, we call $S'\succeq S$ if there exist mappings $p_A,p_R$ such that for any $G=(V,E)$ and $v,u\in V$ we have
\begin{align}
    &f_A(v,G)=p_A(f_A'(v,G)), \label{thm3eq1}\\
    &f_R(v,u,G)=p_R(f_R'(v,u,G)).\label{thm3eq2}
\end{align}
\end{definition}
With the definition above, we next present the central theorem that shows structural encoding determines the expressiveness and convergence rate of SEG-WL test:

\begin{theorem}
For two structural encoding schemes $S$ and $S'$, if $S'\succeq S$, then

\begin{enumerate}[\hspace{0.7cm}(1)]
    \item $S'$-SEG-WL is more expressive than $S$-SEG-WL in testing non-isomorphic graphs.\footnote{For two isomorphic testing algorithms $A$ and $B$, we say $A$ is more expressive than $B$ if any non-isomorphic graphs distinguishable by $B$ can be distinguished by $A$.}
    \item for a pair of graphs $G_1$ and $G_2$ that $S$-SEG-WL distinguishes as non-isomorphic after $t$ iterations, $S'$-SEG-WL can distinguish $G_1$ and $G_2$ as non-isomorphic within $t$ iterations.
\end{enumerate}
\label{thm3}
\end{theorem}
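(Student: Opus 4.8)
The plan is to prove a single refinement invariant and read both conclusions off it. Write $g_t, g_t'$ for the label maps produced by $S$-SEG-WL and $S'$-SEG-WL respectively, let $\Phi_0,\Phi$ and $\Phi_0',\Phi'$ be their (fixed, injective) hash functions, and let $p_A,p_R$ be the maps guaranteed by $S'\succeq S$. I claim that for every $t$ there is a single function $\psi_t$, depending only on $t$ and the fixed hashes and encodings but not on the input graph, with $g_t(v)=\psi_t(g_t'(v))$ for all $G=(V,E)$ and all $v\in V$. The proof is by induction on $t$. For $t=0$, injectivity of $\Phi_0'$ lets $g_0'(v)=\Phi_0'(h_0(v),f_A'(v,G))$ determine $(h_0(v),f_A'(v,G))$; applying $p_A$ to the second coordinate recovers $f_A(v,G)$, and $\Phi_0$ then recovers $g_0(v)$, so the composite is $\psi_0$. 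For the step, assume $g_{t-1}=\psi_{t-1}\circ g_{t-1}'$. Injectivity of $\Phi'$ recovers from $g_t'(v)$ the multiset $\ldblbrace(g_{t-1}'(u),f_R'(v,u,G)):u\in V\rdblbrace$; applying the element-wise map $(x,y)\mapsto(\psi_{t-1}(x),p_R(y))$ turns it into $\ldblbrace(g_{t-1}(u),f_R(v,u,G)):u\in V\rdblbrace$ by the induction hypothesis and the defining property of $p_R$, and feeding this into $\Phi$ yields $g_t(v)$. This composite pipeline is $\psi_t$.

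With the invariant in hand both claims are immediate. Suppose $S$-SEG-WL distinguishes $G_1$ and $G_2$ after $t$ iterations, i.e. $\ldblbrace g_t(v):v\in V_1\rdblbrace\neq\ldblbrace g_t(v):v\in V_2\rdblbrace$. If the $S'$-labels satisfied $\ldblbrace g_t'(v):v\in V_1\rdblbrace=\ldblbrace g_t'(v):v\in V_2\rdblbrace$, then applying the single function $\psi_t$ to every element of both sides would force $\ldblbrace g_t(v):v\in V_1\rdblbrace=\ldblbrace g_t(v):v\in V_2\rdblbrace$, a contradiction. Hence $S'$-SEG-WL separates $G_1,G_2$ at the same iteration $t$; this is exactly conclusion (2), and conclusion (1) is its ``for some $t$'' consequence.

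The one point requiring care, and the crux of the argument, is that $\psi_t$ must be a genuine function shared across all graphs, since the final step applies it simultaneously to labels arising from $G_1$ and $G_2$. This rests on the hash functions being fixed injections defined over the space of all admissible tuples and multisets (so that $(\Phi_0')^{-1}$ and $(\Phi')^{-1}$ are single-valued on the images that actually occur) and on $p_A,p_R$ being graph-independent, both of which hold by definition. The only routine verification is that the transformed multiset $\ldblbrace(g_{t-1}(u),f_R(v,u,G)):u\in V\rdblbrace$ lies in the domain of $\Phi$, i.e. is an admissible argument; this is clear since it is a multiset of label-and-relative-encoding pairs of exactly the form $\Phi$ consumes. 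Everything else is bookkeeping.
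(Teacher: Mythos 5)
Your proof is correct and follows essentially the same route as the paper's: both establish by induction on $t$ that a graph-independent function $\psi_t$ satisfies $g_t=\psi_t\circ g_t'$, and then derive (2) (and hence (1)) from the fact that applying $\psi_t$ elementwise to equal multisets of $S'$-labels would force equal multisets of $S$-labels. The only cosmetic difference is that you construct $\psi_t$ explicitly by inverting the injective hashes, whereas the paper proves the equivalent implication $g_t'(v)=g_t'(u)\implies g_t(v)=g_t(u)$ across graphs and infers the existence of $\psi_t$ from it.
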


Theorem \ref{thm3} lays out a critical fact on the relations between SEG-WL test and structural encodings: \textit{if $S'\succeq S$, then compared with $S$-SEG-WL, $S'$-SEG-WL is more powerful in graph isomorphism testing and will always converge faster when testing graphs.} Through Theorem \ref{thm3} , we can distinguish  the expressive power of various structural encodings by comparing them with baseline encodings defined in Section \ref{sec41}. Given existing structural encodings, Theorem \ref{thm3} shows that more powerful encodings can be developed by adding extra non-trivial structural information. We will elaborate on the ideas above in the following text.

\subsection{WL as SEG-WL Test}
The first application of our theoretical results is to answer the question: \textit{How to design graph Transformers that are more powerful than the WL test?} Since the expressivity of graph Transformers depends on the corresponding SEG-WL test, we first characterize WL test as a special case of SEG-WL test:
\begin{theorem}

Two non-isomorphic graphs can be distinguished by WL if and only if they are distinguishable by \textit{Neighbor}-SEG-WL.
\label{thm4}
\end{theorem}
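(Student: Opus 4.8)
The plan is to prove the two directions of the biconditional by comparing, for each fixed pair of equal-order graphs $G_1,G_2$ and each iteration count $t$, the existence of a vertex bijection preserving the WL coloring $w_t$ with the existence of one preserving the \emph{Neighbor}-SEG-WL coloring $g_t$. Running both tests with a common canonical injective hash (so colors are comparable across graphs), the reduction is that ``$A$ does not distinguish $G_1,G_2$ at step $t$'' is equivalent to ``there is a bijection $\pi:V_1\to V_2$ with $(\text{$A$-color at }t)\circ\pi=(\text{$A$-color at }t)$'', since two multisets of colors over equal-size vertex sets agree iff such a color-matching bijection exists, and unequal orders are distinguished by both tests trivially. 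The structural fact driving everything is that the \emph{Neighbor}-SEG-WL update unwinds cleanly: because $\textit{Neighbor}_R$ takes value $0$ on the diagonal, $1$ on edges and $2$ on non-edges, the multiset $\ldblbrace (g_{t-1}(u),\textit{Neighbor}_R(v,u,G)):u\in V\rdblbrace$ splits by its relation coordinate into the self-pair, the neighbor color-multiset, and the non-neighbor color-multiset. Thus $g_t(v)$ encodes exactly $(g_{t-1}(v),\ \ldblbrace g_{t-1}(u):u\in\gN(v)\rdblbrace,\ \ldblbrace g_{t-1}(u):u\notin\gN(v)\cup\{v\}\rdblbrace)$, whereas $w_t(v)$ encodes only the first two of these three ingredients.

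For the inclusion ``WL-distinguishable $\Rightarrow$ \emph{Neighbor}-SEG-WL-distinguishable'' (SEG-WL is at least as strong), I would show by induction on $t$ that every $g_t$-preserving bijection is $w_t$-preserving; equivalently that there is a graph-independent decoding $\psi_t$ with $w_t=\psi_t\circ g_t$. Regularity of \emph{Neighbor} is the only ingredient: since the self-pair is the unique element of the multiset with relation coordinate $0$, inverting the injective hash recovers $g_{t-1}(v)$ and the relation-$1$ block $\ldblbrace g_{t-1}(u):u\in\gN(v)\rdblbrace$ from $g_t(v)$ in a graph-independent way, and feeding these through the inductive decoding $\psi_{t-1}$ and then the WL hash reproduces $w_t(v)$. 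This direction is routine.

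The hard part is the converse, ``\emph{Neighbor}-SEG-WL-distinguishable $\Rightarrow$ WL-distinguishable,'' because $g_t$ apparently carries strictly more information than $w_t$ through the extra non-neighbor color-multiset; indeed no graph-independent decoding $g_t=\chi_t\circ w_t$ can exist, since the non-neighbor counts depend on global color-class sizes that a single WL color cannot see. I would therefore argue the contrapositive on a pair $G_1,G_2$ that WL fails to distinguish at every iteration, proving by induction on $t$ that every $w_t$-preserving bijection $\pi$ is automatically $g_t$-preserving. The crux is the complementation identity: for any color $c$, the number of non-neighbors of $v$ with $g_{t-1}$-color $c$ equals (total number of color-$c$ vertices in the graph) $-$ (number of color-$c$ neighbors of $v$) $-$ (self-indicator). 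Given $w_t(v)=w_t(\pi(v))$, the self-color and the neighbor color-counts match for $v$ and $\pi(v)$ — here one translates $w_{t-1}$-color counts into $g_{t-1}$-color counts using that within each graph a $w_{t-1}$-color determines a $g_{t-1}$-color, which itself follows from the inductive hypothesis applied to transpositions, i.e.\ to $w_{t-1}$-preserving self-bijections. Crucially, a $g_{t-1}$-preserving bijection forces $G_1$ and $G_2$ to have equal color-class sizes, so all three terms of the identity agree, the non-neighbor counts match, and $g_t(v)=g_t(\pi(v))$.

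The single obstacle to watch is thus the redundancy of the non-neighbor block: the entire argument hinges on the fact that WL-equivalence of $G_1$ and $G_2$ already equalizes every color-class size (their $w_{t-1}$-histograms coincide at each level), which is exactly what turns the complement of the neighbor multiset into information WL already determines. I would finish by noting that the equal-order reduction is harmless and that the common-hash convention makes the cross-graph multiset comparisons well defined, so combining the two inductions yields the biconditional.
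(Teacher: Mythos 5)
Your proof is correct, and your easy direction (recovering $w_t$ from $g_t$ by slicing out the relation-$0$ and relation-$1$ blocks of the hashed multiset, giving a decoding $w_t=\psi_t\circ g_t$) is essentially the paper's argument for that inclusion. For the converse you take a genuinely different route. The paper first lets WL run to its stable coloring $w_t$, seeds an auxiliary scheme $\textit{Neighbor}^+=(w_t,\textit{Neighbor}_R)$ with that coloring as absolute encoding, shows a single seeded iteration refines nothing, extends this by induction, and then transfers the conclusion back to $\textit{Neighbor}$-SEG-WL via the monotonicity result (Theorem \ref{thm3}). You instead run a lockstep induction on the iteration count, showing directly that every $w_t$-preserving bijection is $g_t$-preserving, handling the within-graph color translation by applying the inductive hypothesis to transpositions. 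Both arguments hinge on the same complementation identity --- the non-neighbor color counts equal the global counts minus the neighbor counts minus a self-indicator, and a color-preserving bijection equalizes the global histograms --- so the mathematical core is shared. What your version buys is that it avoids the detour through WL stabilization and Theorem \ref{thm3} and yields the sharper quantitative statement that $\textit{Neighbor}$-SEG-WL distinguishes within the same number of iterations as WL and conversely, paralleling part (2) of Theorems \ref{thm5} and \ref{thm6}; what the paper's version buys is reuse of already-established machinery and a shorter write-up. One point to make explicit in a final write-up: when you convert the $w_{t-1}$-neighbor-multisets of $v$ and $\pi(v)$ into $g_{t-1}$-neighbor-multisets, the translation map from $w_{t-1}$-colors to $g_{t-1}$-colors must agree \emph{across} the two graphs, not merely within each one; this does follow from $\pi$ being simultaneously $w_{t-1}$- and $g_{t-1}$-preserving (by the inductive hypothesis), but it deserves a sentence.
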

Theorem \ref{thm4} proves that though $\textit{Neighbor}$-SEG-WL hashes the whole set of node labels, its expressivity is still exactly the same as WL test. Therefore, from a theoretical perspective, graph Transformer models with \textit{Neighbor} encoding have the same expressive power as WL-GNNs, though they feature the multi-head attention mechanism and global receptive field for every node. Combined with Theorem \ref{thm3}, the answer to the question above becomes simple: \textit{To design a graph Transformer that is more powerful than the WL test, we only need to equip it with structural encoding more expressive than \textit{Neighbor}.}

Furthermore, considering many GNNs utilize absolute structural encodings to enhance their expressive power (e.g.,  \cite{bouritsas2022improving}), we wonder how to compare their expressiveness against Transformers. For any absolute structural encoding $f_A$, we can easily infer from Theorem \ref{thm4} that $f_A$-WL (WL with additional node features generated by $f_A$) is equivalent to $(f_A,\textit{Neighbor}_R)$-SEG-WL on expressive power. Therefore, to develop graph Transformers with expressivity beyond WL-GNNs, it is necessary to design relative structural encodings that are more powerful than $\textit{Neighbor}_R$.

\section{Shortest-Path-Based Relative Structural Encodings}
\label{sec5}
This section presents an example of utilizing  our theory and designing powerful relative structural encodings for graph Transformers. We start from encodings based on the shortest path between two nodes, like \textit{SPD} used in Graphormer \citep{ying2021transformers}.

\subsection{Expressivity of \textit{SPD} Encoding}
\label{sec51}
Considering that two nodes are adjacent when SPD between them is 1, we can easily conclude that $\textit{SPD}_R\succeq \textit{Neighbor}_R$. Therefore, it can be inferred from Theorem \ref{thm3} that \textit{SPD}-SEG-WL is more powerful than WL. Besides, we can find many pairs of non-isomorphic graphs indistinguishable by WL but not for \textit{SPD}-SEG-WL. We have
\begin{theorem}
(1) SPD-SEG-WL is \textbf{strictly} more expressive than WL in testing non-isomorphic graphs\footnote{For two isomorphic testing algorithms $A$ and $B$, we say $A$ is strictly more expressive than $B$ if $A$ is more expressive than $B$ in testing non-isomorphic graphs, and there exist non-isomorphic graphs $G_1$ and $G_2$ such that $A$ can distinguish $G_1$ and $G_2$ but not for $B$.};

(2) For $G_1$ and $G_2$ that WL distinguishes as non-isomorphic after $t$ iterations, \textit{SPD}-SEG-WL can distinguish $G_1$ and $G_2$ as non-isomorphic within $t$ iterations.
\label{thm5}
\end{theorem}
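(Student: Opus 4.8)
The plan is to derive both parts from the partial-order machinery of Theorem~\ref{thm3} together with the WL characterization of Theorem~\ref{thm4}, supplying a single explicit separating pair to upgrade ``more expressive'' to ``strictly more expressive.'' First I would record that $\textit{SPD}\succeq\textit{Neighbor}$. Both schemes share the absolute encoding $\textit{id}_A$, so $p_A$ may be taken as the identity, and it remains to exhibit $p_R$ with $\textit{Neighbor}_R(v,u,G)=p_R(\textit{SPD}_R(v,u,G))$. Setting $p_R(0)=0$, $p_R(1)=1$, and $p_R(k)=2$ for every $k\ge 2$ as well as $p_R(\infty)=2$ gives a well-defined map: two nodes are adjacent precisely when their SPD is $1$, and are distinct but non-adjacent precisely when their SPD is at least $2$ or $\infty$, so the case analysis reproduces the definition of $\textit{Neighbor}_R$ exactly. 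This is the observation already recorded before the statement.

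Part (1), ``more expressive,'' then follows by chaining results: Theorem~\ref{thm3}(1) applied to $\textit{SPD}\succeq\textit{Neighbor}$ shows \textit{SPD}-SEG-WL is more expressive than \textit{Neighbor}-SEG-WL, while Theorem~\ref{thm4} identifies the discriminative power of \textit{Neighbor}-SEG-WL with that of WL; composing the two gives that \textit{SPD}-SEG-WL is more expressive than WL. For strictness I would exhibit a pair of non-isomorphic graphs that WL cannot separate but \textit{SPD}-SEG-WL can. A clean choice is the $6$-cycle $C_6$ against the disjoint union of two triangles: both are $2$-regular on six vertices, so color refinement assigns every vertex the same stable color in each graph and the two color multisets coincide, i.e.\ WL fails. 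A single \textit{SPD}-SEG-WL iteration already separates them, since the multiset of shortest-path distances from any vertex is $\{0,1,1,2,2,3\}$ in $C_6$ but $\{0,1,1,\infty,\infty,\infty\}$ in the two-triangle graph; hence the multisets hashed in the first iteration differ, every $C_6$ vertex receives one label and every two-triangle vertex another, and the graph-level label multisets differ. This witnesses strictness and closes part (1).

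Part (2) is the iteration-sensitive claim, and this is where the only real work lies. Theorem~\ref{thm3}(2) already gives that if \textit{Neighbor}-SEG-WL separates $G_1,G_2$ after $t$ iterations, then \textit{SPD}-SEG-WL separates them within $t$. It therefore suffices to show that whenever WL separates $G_1,G_2$ after $t$ iterations, \textit{Neighbor}-SEG-WL does so within $t$. I would prove this by the refinement induction underlying Theorem~\ref{thm4}, strengthened to track iterations: I claim that for each $t$ there is a fixed relabeling $q_t$, common to both graphs, with $w_t=q_t\circ g_t$, where $w_t$ and $g_t$ denote the WL and \textit{Neighbor}-SEG-WL colorings. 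The base case holds because the injective $\Phi_0$ makes $g_0$ a relabeling of $h_0=w_0$. For the step, the \textit{Neighbor}-SEG-WL update sorts its hashed multiset by the tag $\textit{Neighbor}_R(v,u,G)\in\{0,1,2\}$, so $g_t(v)$ determines both the own label $g_{t-1}(v)$ and the neighborhood multiset $\ldblbrace g_{t-1}(u):u\in\gN(v)\rdblbrace$; applying $q_{t-1}$ elementwise recovers $w_{t-1}(v)$ and $\ldblbrace w_{t-1}(u):u\in\gN(v)\rdblbrace$, which are exactly the inputs of the WL update, so $w_t(v)$ is a function $q_t$ of $g_t(v)$. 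Consequently the multiset of $g_t$-colors determines that of $w_t$-colors through $q_t$, so if the WL color multisets differ at iteration $t$ then the \textit{Neighbor}-SEG-WL color multisets differ as well. Combining this with Theorem~\ref{thm3}(2) yields the stated bound. The main obstacle is precisely this bookkeeping: ensuring the relabeling $q_t$ is consistent across both graphs, so that distinguishing power transfers at the level of color multisets rather than merely of within-graph partitions.
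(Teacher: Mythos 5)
Your proposal is correct and follows the same route as the paper's proof: establish $\textit{SPD}\succeq\textit{Neighbor}$ from the fact that adjacency is exactly $\textit{SPD}_R=1$, chain Theorem~\ref{thm3} with Theorem~\ref{thm4}, and exhibit a WL-indistinguishable pair that \textit{SPD}-SEG-WL separates (the paper points to a figure; your $C_6$ versus two disjoint triangles is a valid substitute, both being $2$-regular on six vertices, hence WL-equivalent, with visibly different shortest-path-distance multisets $\ldblbrace 0,1,1,2,2,3\rdblbrace$ versus $\ldblbrace 0,1,1,\infty,\infty,\infty\rdblbrace$). The one place where you do genuinely more work than the paper is part~(2): the paper's proof only cites Theorem~\ref{thm4}, which as stated is an equivalence of distinguishing power with no iteration count, so the ``within $t$ iterations'' claim is not explicitly justified there. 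Your strengthened induction --- showing $w_t=q_t\circ g_t$ for a relabeling $q_t$ common to both graphs, using the injectivity of $\Phi$ and the tag $\textit{Neighbor}_R\in\{0,1,2\}$ to recover the WL update inputs from $g_t(v)$ --- supplies exactly that missing step; it amounts to reading the first half of the paper's proof of Theorem~\ref{thm4} (where a map $\psi_t$ with $w_t=\psi_t\circ g_t$ is constructed) with the iteration index kept explicit, and then composing with Theorem~\ref{thm3}(2). So the proposal is not a different method, but it is a more complete execution of the same one.
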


\begin{proof}
We can easily show that \textit{SPD}-SEG-WL is more powerful than \textit{Neighbor}-SEG-WL using Theorem \ref{thm3} since two nodes are linked if there shortest path distance is 1. And according to Theorem \ref{thm4}, \textit{Neighbor}-SEG-WL is as powerful as WL, then \textit{SPD}-SEG-WL is more powerful than WL.

Figure \ref{afig1} below shows a pair of graphs that can be distinguished by \textit{SPD}-SEG-WL but not WL, which completes the proof.
\end{proof}

\begin{figure}[h]
\centering
\includegraphics[width=0.5\linewidth]{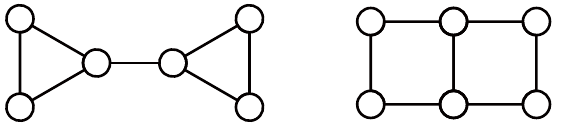}
\caption{Two graphs that can be distinguished by \textit{SPD}-SEG-WL but not WL.}
\label{afig1}
\end{figure}

Theorem \ref{thm5} formally proves that \textit{SPD}-SEG-WL is strictly more powerful and converges faster than WL in graph isomorphism testing. In addition to Theorem \ref{thm5}, we want to find out how the global structural information leveraged by shortest path encodings affects the discriminative power of SEG-WL test. 
We introduce the concept of \textit{receptive field} of structural encodings, that when $S$ has $k$-hop receptive field, any structural information encoded by $S$ only depends on the $k$-hop neighborhood of the central node.
For example, \textit{Neighbor} has $1$-hop receptive field because only neighborhood connections are considered by \textit{Neighbor} encoding. However, the receptive field of \textit{SPD} is not restricted to $k$-hop for any $k$, since we can construct graphs with SPD between two nodes arbitrarily large. We show this global-aware receptive field brings distinctive power to \textit{SPD} that differs from any encodings with local receptive field, in following Proposition \ref{mpro1}:
\begin{proposition}
For any $k$ and any structural encoding scheme $S$ with $k$-hop receptive field, there exists a pair of graphs that \textit{SPD}-SEG-WL can distinguish, but $S$-SEG-WL can not. 
\label{mpro1}
\end{proposition}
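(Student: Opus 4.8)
The plan is to exhibit, for each fixed $k$, a single pair of graphs that is indistinguishable to \emph{every} encoding scheme with $k$-hop receptive field yet separated by \textit{SPD}-SEG-WL. The natural candidates are the ``locally identical, globally different'' graphs
\[
G_1 = C_{2m}, \qquad G_2 = C_m \sqcup C_m,
\]
the single $2m$-cycle versus two disjoint $m$-cycles, where $m$ is chosen large relative to $k$ (say $m \ge 2k+2$). Both graphs have $2m$ vertices, and since $m \ge 2k+2$ the $k$-hop neighborhood of every vertex in either graph is an induced path on $2k+1$ vertices rooted at its midpoint (the arc does not wrap around, so no chord appears); as rooted graphs these local neighborhoods are all mutually isomorphic.

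First I would verify that \textit{SPD}-SEG-WL separates $G_1$ and $G_2$. Taking uniform initial labels $h_0$ and using the identical absolute encoding of \textit{SPD}, the labels $g_0$ collapse to one common value, so after one iteration $g_1(v)=\Phi(\ldblbrace (g_0(u),\textit{SPD}_R(v,u,G)):u\in V\rdblbrace)$ is determined solely by the multiset of shortest-path distances emanating from $v$. In $G_2$ every vertex is at distance $\infty$ from the $m$ vertices in the opposite component, whereas in $G_1$ no pair is at distance $\infty$; hence the distance multiset, and therefore $g_1(v)$, differs between the two graphs, and the graph-level label multisets already disagree after one round.

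The core of the argument is to show $S$-SEG-WL fails, which I would do by induction on the iteration count $t$, proving the stronger invariant that $g_t$ takes a single common value across \emph{all} vertices of $G_1$ and $G_2$ simultaneously. The base case $t=0$ holds because $f_A(v,G)$ depends only on the rooted $k$-hop neighborhood of $v$, which is the same path for every vertex, forcing $g_0\equiv a_0$. For the inductive step, since all labels $g_{t-1}(u)$ are equal by hypothesis, the update $g_t(v)=\Phi(\ldblbrace (g_{t-1}(u),f_R(v,u,G)):u\in V\rdblbrace)$ depends on $v$ only through the multiset $\ldblbrace f_R(v,u,G):u\in V\rdblbrace$. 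Here the $k$-hop receptive field is used decisively: for $u$ inside the $k$-ball of $v$ the value $f_R(v,u,G)$ is fixed by the common rooted local neighborhood and the position of $u$ within it, while for the remaining $2m-(2k+1)$ vertices $u$ outside the $k$-ball the value is the fixed ``out-of-range'' constant. Because the rooted local neighborhoods and the $k$-ball sizes coincide for every $v$ in both graphs, this relative-encoding multiset is identical for all vertices, yielding a common $g_t$ and hence $\ldblbrace g_t(v):v\in V_1\rdblbrace=\ldblbrace g_t(v):v\in V_2\rdblbrace$ for every $t$.

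The main obstacle is making the receptive-field hypothesis precise enough to control $f_R(v,u,G)$ uniformly over all pairs, in particular formalizing that a bounded receptive field forces one constant value on all ``far'' pairs and that this value agrees across the two graphs. Once the local $k$-balls are matched as rooted graphs this is essentially bookkeeping, but it is the step where the definition of $k$-hop receptive field must be pinned down (whether it refers to the neighborhood of the central node alone or of both endpoints), and the choice $m\ge 2k+2$ is exactly what guarantees the cycles do not wrap within the receptive field, so that all local views genuinely coincide while the decisive global feature (connectivity) lies beyond reach of $S$ but is detected by \textit{SPD}.
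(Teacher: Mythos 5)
Your proof is correct and follows essentially the same strategy as the paper's: exhibit a pair of graphs built from cycles long enough that every $k$-hop neighborhood is the same rooted path (so any $k$-hop encoding assigns all vertices identical labels forever), while \textit{SPD} detects a global difference. The only difference is the witness pair — you use $C_{2m}$ versus $C_m\sqcup C_m$ and detect disconnection via the $\infty$ distance, whereas the paper uses $2k+4$ copies of $C_{2k+3}$ versus $2k+3$ copies of $C_{2k+4}$ and detects the presence of a shortest path of length $k+2$; your induction making the "all labels stay constant" invariant explicit is a welcome elaboration of what the paper only asserts.
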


\begin{proof}
Let $C_l$ denote the cycle graph of length $l$. Then consider two graphs $G_1$ and $G_2$, where $G_1$ consists of $2k+4$ identical $C_{2k+3}$ graphs, and $G_2$ consists of $2k+3$ identical $C_{2k+4}$ graphs. $G_1$ and $G_2$ have the same number of nodes, and the induced $k$-hop neighborhood of any node in either of the two graphs is simply a path of length $2k+1$. As a result, for structural encoding scheme $S$ with $k$-hop receptive field, $S$-SEG-WL generates identical labels for every node in the two graphs, making $G_1$ and $G_2$ indistinguishable for $S$-SEG-WL. However, in $G_2$ there exists shortest paths of length $k+2$ while $G_1$ not, so \textit{SPD}-SEG-WL can distinguish the two graphs.
\end{proof}

Though \textit{SPD} has its unique expressive power and is more powerful than WL, many low-order non-isomorphic graphs remain to be indistinguishable by \textit{SPD}-SEG-WL (see Proof for Theorem \ref{thm6}), which leads us to find encodings that are more powerful than \textit{SPD}. Following Theorem \ref{thm3}, building structural encoding $S$ that satisfies $S\succeq \textit{SPD}$ can be done by adding meaningful information to \textit{SPD}, which illustrates the motivation for \textit{SPIS} we will next introduce. 

\subsection{\textit{SPIS} Relative Structural Encoding}
From the perspective of graph theory, for two connected nodes $v,u$ in the graph, there can be multiple shortest paths connecting $v$ and $u$, and these shortest paths may be linked or have overlapping nodes. Since \textit{SPD} only encodes the length of shortest paths, one intuitive idea is to enhance it with features characterizing the rich structural interactions between different shortest paths. Inspired by concepts like betweenness centrality in network analysis \citep{freeman1977set}, we propose the concept of shortest path induced subgraph (SPIS) to characterize the structural relations between nodes on shortest paths:
\begin{definition}[Shortest Path Induced Subgraph]
For $G=(V,E)$ and $v,u\in V$, $\text{SPIS}(v,u)=(V_{\text{SPIS}(v,u)},E_{\text{SPIS}(v,u)})$, the \textbf{shortest path induced subgraph} between $v$ and $u$ is an induced subgraph of $G$, where
\begin{align}
    V_{\text{SPIS}(v,u)}=\{s:s\in V \text{ and }\text{SPD}_R(v,s)+\text{SPD}_R(s,u)=\text{SPD}_R(v,u)\}.
\end{align}
\end{definition}
$\text{SPIS}(v,u)$ is an induced subgraph of $G$ that contains all nodes on shortest paths between $v$ and $u$. To encode knowledge in SPIS as numerical vectors, we propose the relative encoding method $\textit{SPIS}_R$ by enhancing $\textit{SPD}_R$ with the total numbers of nodes and edges of SPIS between nodes, as
\begin{align}
    \textit{SPIS}_R(v,u,G)=(\textit{SPD}_R(v,u,G),|V_{\textit{SPIS}(v,u)}|,|E_{\textit{SPIS}(v,u)}|),
\end{align}
and we define the structural encoding scheme $\textit{SPIS}=(\textit{id}_A,\textit{SPIS}_R)$.

\subsection{Analysis on \textit{SPIS} Encoding}
In the following, we will analyze the proposed \textit{SPIS} encoding and characterize its mathematical properties, comparing it with \textit{SPD} and WL. To start with, as \textit{SPIS} is constructed by adding information to \textit{SPD}, we have $\textit{SPIS}\succeq \textit{SPD}$ and it is be more powerful than \textit{SPD}-SEG-WL according to Theorem \ref{thm3}.
\begin{theorem}
(1) SPIS-SEG-WL is \textbf{strictly} more expressive than SPD-SEG-WL in testing non-isomorphic graphs.

(2) For $G_1$ and $G_2$ that SPD-SEG-WL distinguishes as non-isomorphic after $t$ iterations, \textit{SPIS}-SEG-WL can distinguish $G_1$ and $G_2$ as non-isomorphic within $t$ iterations.
\label{thm6}
\end{theorem}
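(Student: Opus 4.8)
The plan is to follow the same two-part template used for Theorem~\ref{thm5}: derive the non-strict expressivity comparison and the convergence-rate bound directly from the partial-order machinery of Theorem~\ref{thm3}, and then upgrade the comparison to a strict one by exhibiting an explicit separating pair of graphs.

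First I would verify that $\textit{SPIS}\succeq\textit{SPD}$. Both schemes share the identical absolute encoding $\textit{id}_A$, so I take $p_A$ to be the identity. For the relative part, recall that $\textit{SPIS}_R(v,u,G)=(\textit{SPD}_R(v,u,G),|V_{\textit{SPIS}(v,u)}|,|E_{\textit{SPIS}(v,u)}|)$, so $\textit{SPD}_R$ is recovered by letting $p_R$ be the projection onto the first coordinate. This gives Equations~\ref{thm3eq1}--\ref{thm3eq2} and hence $\textit{SPIS}\succeq\textit{SPD}$. Applying Theorem~\ref{thm3}(1) immediately yields that $\textit{SPIS}$-SEG-WL is (non-strictly) more expressive than $\textit{SPD}$-SEG-WL, and Theorem~\ref{thm3}(2) yields claim~(2) of the theorem verbatim.

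It remains to establish strictness in (1), which is the crux of the argument. The plan is to produce two non-isomorphic graphs $G_1,G_2$ on which $\textit{SPD}$-SEG-WL returns identical label multisets at every iteration while $\textit{SPIS}$-SEG-WL separates them. A natural candidate is the triangular prism $C_3\times K_2$ against the complete bipartite graph $K_{3,3}$: both are $3$-regular on six vertices, and every vertex in either graph has shortest-path-distance profile $\{0,1,1,1,2,2\}$. Because all starting labels coincide and this per-vertex distance multiset is identical across both graphs, an induction on $t$ shows that $\textit{SPD}$-SEG-WL assigns one common label to every vertex of both graphs at each iteration, so it cannot distinguish them. By contrast, for a distance-$2$ pair the shortest path induced subgraph has $|V_{\textit{SPIS}}|=4$ in the prism (two endpoints plus their two common neighbors) but $|V_{\textit{SPIS}}|=5$ in $K_{3,3}$ (two endpoints plus three common neighbors); hence the multiset of $\textit{SPIS}_R$ values already differs after one iteration, and $\textit{SPIS}$-SEG-WL separates $G_1$ and $G_2$.

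The only place requiring care is the verification that $\textit{SPD}$-SEG-WL genuinely fails on the chosen pair for all $t$, not merely at the first step; this is handled by the induction above, using that a regular distance distribution forces label collapse under $\textit{SPD}$ updates. Everything else reduces to the partial-order reduction of Theorem~\ref{thm3}, so I expect the example-checking to be the main --- though routine --- obstacle.
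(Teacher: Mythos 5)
Your proposal is correct and follows the same structure as the paper's proof: establish $\textit{SPIS}\succeq\textit{SPD}$ via projection onto the first coordinate of $\textit{SPIS}_R$, invoke Theorem~\ref{thm3} for both the non-strict comparison and the convergence claim, and then exhibit an explicit separating pair for strictness. Your witness pair (the triangular prism versus $K_{3,3}$) differs from the paper's figure-based example but checks out: both graphs are $3$-regular on six vertices with per-vertex distance profile $\{0,1,1,1,2,2\}$, so the regularity induction collapses all $\textit{SPD}$-SEG-WL labels, while the distance-$2$ SPIS node counts ($4$ versus $5$) separate the graphs after one iteration.
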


\begin{proof}
Considering $\textit{SPD}_R$ is the first dimension of $\textit{SPIS}_R$, we have $\textit{SPIS}\succeq\textit{SPD}$ and we can prove $\textit{SPIS}$-SEG-WL is more powerful than $\textit{SPD}$-SEG-WL according to Theorem \ref{thm3}.

Figure \ref{afig2} below shows a pair of graphs that can be distinguished by \textit{SPIS}-SEG-WL but not \textit{SPD}-SEG-WL. It is trivial to verify that \textit{SPD}-SEG-WL can not distinguish them. For \textit{SPIS}-SEG-WL, to understand this, Figure \ref{afig2} colors examples of SPIS between non-adjacent nodes in the two graphs, where the nodes at two endpoints are colored as red. In the first graph, every SPIS between non-adjacent nodes has 3 nodes, but in the second graph there exists SPIS between non-adjacent nodes that has 4 nodes, so \textit{SPIS}-SEG-WL can distinguish them.
\end{proof}

\begin{figure}[h]
\centering
\includegraphics[width=0.5\linewidth]{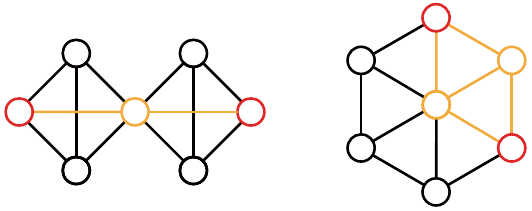}
\caption{Two graphs that can be distinguished by \textit{SPIS}-SEG-WL but not \textit{SPD}-SEG-WL.}
\label{afig2}
\end{figure}

Next, we show that \textit{SPIS}-SEG-WL exhibits far superior performance to WL and \textit{SPD}-SEG-WL on important graph structures. The computational complexity of \textit{SPIS} is discussed in Appendix \ref{asec3}.

\paragraph{\textbf{\textit{SPIS}-SEG-WL Distinguishes All Low-order Graphs ($n\leq 8$).}}
On low-order graphs, our synthetic experiments in Table \ref{tbl1} confirm that \textit{SPIS}-SEG-WL distinguishes \textit{all} non-isomorphic graphs with order equal to or less than 8, which is much more powerful than WL with 332 indistinguishable pairs and \textit{SPD}-SEG-WL with 200 indistinguishable pairs. This strong discriminative power on low-order graphs shows that \textit{SPIS} can accurately distinguish local structures in real-world graphs.

\paragraph{\textbf{\textit{SPIS}-SEG-WL Well Distinguishes Strongly Regular Graphs.}} A regular graph is a graph parameterized by two parameters $n,k$ which has $n$ nodes and each node has the $k$ neighbors, denoted as $\text{RG}(n,k)$. And a strongly regular graph parameterized by four parameters $(n,k,\lambda,\mu)$ is a regular graph $\text{RG}(n,k)$ where every adjacent pair of nodes has the same number $\lambda$ of neighbors in common, and every non-adjacent pair of nodes has the same number $\mu$ of neighbors in common, denoted as $\text{SRG}(n,k,\lambda,\mu)$.

Due to their highly symmetric structure, regular graphs are known to be failure cases for graph isomorphism test algorithms. For example, WL can not discriminate any regular graphs of the same parameters, making any pair of strongly regular graphs with the same $n$ and $k$ indistinguishable to it, even $\lambda$ and $\mu$ could be different. Yet Proposition \ref{mpro2} guarantees that $\textit{SPIS}$-SEG-WL can distinguish any pair of strongly regular graphs of different parameters:
\begin{proposition}
\textit{SPIS}-SEG-WL can distinguish any pair of strongly regular graphs of different parameters.
\label{mpro2}
\end{proposition}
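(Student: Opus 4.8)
The plan is to show that a single iteration of \textit{SPIS}-SEG-WL already separates any two strongly regular graphs with distinct parameter tuples, by reading the parameters off the multiset of relative encodings that each node observes. I would first fix the geometry of SPIS on a connected, non-complete $\text{SRG}(n,k,\lambda,\mu)$, which is known to have diameter exactly $2$, and compute $\textit{SPIS}_R(v,u,G)$ in the three possible cases. When $u=v$ we get $(0,1,0)$. When $u$ is adjacent to $v$ (so $\text{SPD}_R(v,u)=1$), the only nodes $s$ with $\text{SPD}_R(v,s)+\text{SPD}_R(s,u)=1$ are $v$ and $u$ themselves, since every common neighbour of $v$ and $u$ has distance-sum $2$; hence $\text{SPIS}(v,u)$ is a single edge and $\textit{SPIS}_R(v,u,G)=(1,2,1)$. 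When $u$ is non-adjacent to $v$ (so $\text{SPD}_R(v,u)=2$), the induced subgraph consists of $v$, $u$, and all $\mu$ common neighbours, so $|V_{\text{SPIS}(v,u)}|=\mu+2$ and $\textit{SPIS}_R(v,u,G)=(2,\,\mu+2,\,2\mu+e_{vu})$, where $e_{vu}$ is the number of edges among the common neighbours. The decisive observation is that the first two coordinates of every non-self encoding depend only on $n,k,\mu$ and not on the pair-dependent quantity $e_{vu}$; in particular the number $\mu+2$ occurs as the middle coordinate of every distance-$2$ encoding.

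Next I would run one SEG-WL iteration. Since the graphs are unlabelled and the \textit{SPIS} scheme uses the identical absolute encoding, the initial map $g_0$ is a single constant $c$ across both graphs, so $g_1(v)=\Phi(\ldblbrace(c,\textit{SPIS}_R(v,u,G)):u\in V\rdblbrace)$ is determined by the multiset $M(v)=\ldblbrace\textit{SPIS}_R(v,u,G):u\in V\rdblbrace$. By the previous computation, $M(v)$ contains exactly one $(0,1,0)$, exactly $k$ copies of $(1,2,1)$, and exactly $n-1-k$ tuples of the form $(2,\mu+2,\cdot)$, and this description is identical for every node $v$, so $G$ yields a single node-label value under $g_1$.

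It then remains to argue that different parameter tuples force the graph-level multisets $\ldblbrace g_1(v):v\in V_1\rdblbrace$ and $\ldblbrace g_1(v):v\in V_2\rdblbrace$ to differ, which I would do by case analysis. If $n_1\neq n_2$ the two multisets have different cardinalities. If $n_1=n_2$ but $k_1\neq k_2$, the number of copies of $(1,2,1)$ in $M(v)$, which equals the degree $k$, differs. If $n_1=n_2$ and $k_1=k_2$ yet the parameter tuples still differ, then the standard SRG identity $k(k-\lambda-1)=(n-k-1)\mu$ forces $\mu_1\neq\mu_2$ (and hence $\lambda_1\neq\lambda_2$), so the middle coordinate $\mu+2$ of the distance-$2$ encodings differs between the graphs; consequently no label produced by $G_1$ can coincide with any label produced by $G_2$, making the two multisets disjoint and hence unequal. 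In every case the node-label multisets differ after one iteration, so \textit{SPIS}-SEG-WL distinguishes $G_1$ and $G_2$.

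The main obstacle, and precisely the place where \textit{SPIS} outperforms \textit{SPD}, is recognizing that $\mu$ is recorded \emph{exactly} by $|V_{\text{SPIS}(v,u)}|$ for non-adjacent pairs, whereas \textit{SPD} alone yields the same distance multiset for all SRGs sharing $(n,k)$ and therefore cannot separate them. I would also need to justify the reduction that, once $n$ and $k$ are fixed, ``different parameters'' is equivalent to ``different $\mu$'' via the parameter identity, and to dispose of the degenerate cases separately: complete graphs, which have no non-adjacent pairs, and disconnected SRGs with $\mu=0$, i.e.\ disjoint unions of cliques, where the component structure or the distance-$\infty$ pattern already provides the distinction.
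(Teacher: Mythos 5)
Your proposal is correct and rests on exactly the same two ingredients as the paper's proof: reading $\mu$ off $|V_{\text{SPIS}(v,u)}|=\mu+2$ for non-adjacent pairs, and using the identity $k(k-\lambda-1)=(n-k-1)\mu$ to reduce ``different parameters'' to ``different $(n,k,\mu)$''. You carry out the one-iteration computation and the degenerate cases (complete graphs, disconnected $\mu=0$ graphs) more explicitly than the paper, which simply defers the $(n,k)$ cases to WL, but the route is essentially identical.
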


\begin{proof}
It is trivial to verify that regular graphs with different parameters can be distinguished by WL, so we focus on strongly regular graphs with the same $n$ and $k$ but different $\lambda$ and $\mu$. For $\text{SRG}(n,k,\lambda,\mu)$, since every non-adjacent pair of nodes has $\mu$ neighbors in common, the SPIS between evry non-adjacent pair of nodes will have $\mu+2$ nodes, which implies that \textit{SPIS}-SEG-WL can distinguish strongly regular graphs with different $n,k,\mu$. Besides, the four parameters of strongly regular graphs are not independent, they satisfy
\begin{align}
    \lambda=k-1-\frac \mu k(n-k-1),
\end{align}
so \textit{SPIS}-SEG-WL can distinguish strongly regular graphs with different parameters.
\end{proof}

It is worth mentioning that, for strongly regular graphs with the same parameters, \textit{SPIS} also exhibits outstanding discriminative power, with the number of total failures being far less than WL and \textit{SPD}-SEG-WL (See Section \ref{sec61} and Table \ref{tbl1}). 

\paragraph{\textbf{\textit{SPIS}-SEG-WL Distinguishes 3-WL Failure Cases.}} When compared with $k$-order WL tests ($k\geq 3$, SEG-WL test costs only $O(n^2)$ time complexity at each iteration, and the flexible choice of structural encoding method allows it to show a wide range of expressive capabilities. Here, we show that \textit{SPIS}-SEG-WL is able to distinguish a pair of graphs that 3-WL can not distinguish:
\begin{proposition}
There exists a pair of graphs that \textit{SPIS}-SEG-WL can distinguish, but 3-WL can not.
\label{mpro3}
\end{proposition}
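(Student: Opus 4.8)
The plan is to prove the statement the same way the paper establishes Theorems \ref{thm5} and \ref{thm6} and Proposition \ref{mpro2}: by exhibiting a concrete witness pair $G_1,G_2$ of non-isomorphic graphs, certifying that $3$-WL assigns them identical stable colorings, and then verifying that \textit{SPIS}-SEG-WL separates them. Since $\textit{SPIS}\succeq\textit{SPD}\succeq\textit{Neighbor}$, the interesting work lies entirely in the new coordinates of $\textit{SPIS}_R$, namely $|V_{\text{SPIS}(v,u)}|$ and $|E_{\text{SPIS}(v,u)}|$; anything already visible to WL (hence to one SEG-WL step through $\textit{SPD}_R$) is, by design, also visible to $3$-WL, so the witness must be distinguished through genuine shortest-path-induced-subgraph information.

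Before committing to a witness, I would first narrow down which SPIS coordinate can possibly help, since this dictates what graphs to search for. Pairwise distances are already determined by the pair colors of the refinement, so the distance distribution and the node counts $|V_{\text{SPIS}(v,u)}|=|\{s:\textit{SPD}_R(v,s,G)+\textit{SPD}_R(s,u,G)=\textit{SPD}_R(v,u,G)\}|$ are all fixed by $3$-WL, each being a count over a single intermediate vertex $s$ of a condition on distances. The edge count $|E_{\text{SPIS}(v,u)}|$, however, counts edges $(s,t)$ both of whose endpoints lie on a shortest $v$--$u$ path; this is a four-vertex interaction $(v,u,s,t)$ tied together by long-range distance constraints, and it is exactly this quantity that can escape the treewidth-$3$ reach of $3$-WL. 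Consequently the witness should have diameter at least $3$, so that $|E_{\text{SPIS}}|$ of a far-apart pair probes structure $3$-WL cannot reconstruct; diameter-$2$ families such as strongly regular graphs are useless here, since there $|E_{\text{SPIS}(v,u)}|$ of a non-adjacent pair only records the edges among common neighbors, i.e.\ a count of diamonds ($K_4$ minus an edge, of treewidth $2$) that $3$-WL already resolves.

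With this guidance, I would obtain the witness from a known $3$-WL-indistinguishable construction of sufficient diameter --- for instance a Cai--F\"urer--Immerman pair built over a base graph of treewidth at least $4$, or a pair located by a small computer search over graphs of modest order --- and certify $3$-WL-indistinguishability either from the structural guarantee of that construction or by directly running the $3$-WL refinement to its stable coloring. I would then compute the multiset of labels produced by \textit{SPIS}-SEG-WL (it suffices to run a few iterations, or even to compare the first-iteration multisets $\ldblbrace\textit{SPIS}_R(v,u,G):u\in V\rdblbrace$ at each node), and display the discriminating configuration in a figure analogous to Figures \ref{afig1} and \ref{afig2}, highlighting a pair of nodes whose shortest-path induced subgraph carries a different number of edges in the two graphs.

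The main obstacle is the tension built into the statement: the witness must be rich enough that some $|E_{\text{SPIS}(v,u)}|$ differs, yet symmetric enough that every treewidth-$\le 3$ invariant coincides so that $3$-WL truly fails. Verifying the latter rigorously is the delicate part, because one must rule out all local refinements $3$-WL performs, not merely the obvious ones (degrees, triangle and $K_4$ counts, distance distributions, the node counts $|V_{\text{SPIS}}|$). I would therefore treat the $3$-WL check as the crux and discharge it by leaning on the proven invariance properties of the chosen construction rather than ad hoc counting, reserving the \textit{SPIS} computation --- a direct, finite verification --- for the concluding step.
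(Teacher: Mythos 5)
Your overall strategy (exhibit a concrete witness pair, certify $3$-WL equivalence, then separate via $\textit{SPIS}_R$) is the right one and matches the paper's, but the proposal has two problems. First, it never actually produces a witness: you defer to ``a known CFI-style construction or a small computer search,'' so the existence claim is not discharged. Second, and more seriously, the step you use to narrow the search is wrong, and it steers you away from the correct (and standard) witness. You assert that for a diameter-$2$ strongly regular graph, $|E_{\text{SPIS}(v,u)}|$ for a non-adjacent pair ``only records the edges among common neighbors, i.e.\ a count of diamonds ($K_4$ minus an edge, of treewidth $2$) that $3$-WL already resolves.'' The quantity in question is the number of \emph{induced} copies of $K_4-e$ with $(v,u)$ as the designated non-edge, and induced subgraph counts are not controlled by the treewidth of the pattern alone: by inclusion--exclusion, $\#\mathrm{IndSub}(K_4-e)$ differs from the treewidth-$2$ quantity $\#\mathrm{Sub}(K_4-e)$ by a multiple of $\#\mathrm{Sub}(K_4)$, and $K_4$ has treewidth $3$, which is exactly out of reach of $3$-WL.

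The paper's witness is precisely the pair you dismiss: the Shrikhande graph and the Rook's $4\times4$ graph, both $\mathrm{SRG}(16,6,2,2)$ and hence $3$-WL equivalent (the stable pair partition of any strongly regular graph is already its association scheme). Each non-adjacent pair has $\mu=2$ common neighbors, so its SPIS has $4$ nodes and either $4$ or $5$ edges according to whether those two common neighbors are adjacent. In the Rook's graph the two common neighbors of a non-adjacent pair are never adjacent (every SPIS has $4$ edges), while in the Shrikhande graph some non-adjacent pairs have adjacent common neighbors (some SPIS has $5$ edges) --- consistent with the fact that the Rook's graph contains eight $K_4$'s and the Shrikhande graph contains none, so their induced-diamond counts must differ even though their $K_4-e$ subgraph counts agree. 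Thus $\textit{SPIS}$-SEG-WL separates them in one iteration. Your diagnosis that the edge coordinate $|E_{\text{SPIS}}|$ is the only coordinate that can beat $3$-WL is correct, but the conclusion that one must go to diameter $\geq 3$ is not; dropping that restriction and instantiating the witness as above completes the proof.
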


\begin{proof}
Figure \ref{afig3} below shows a pair of graphs that can be distinguished by \textit{SPIS}-SEG-WL but not 3-WL. The two graphs, named as the Shrikhande graph and the Rook's $4\times 4$ graph, are both $\text{SRG}(16,6,2,2)$ and the most popular example for indistinguishability with 3-WL \citep{arvind2020weisfeiler}. To show they can be distinguished by \textit{SPIS}-SEG-WL, Figure \ref{afig3} also colors examples of SPIS between non-adjacent nodes, where the nodes at two endpoints are colored as red. In the second graph (the Shrikhande graph), one can verify that every SPIS between non-adjacent nodes has 4 nodes and 4 edges, but in the first graph (the Rook's $4\times 4$ graph) there exists SPIS between non-adjacent nodes that has 5 edges, making \textit{SPIS}-SEG-WL capable of distinguishing them.
\end{proof}

\begin{figure}[h]
\centering
\includegraphics[width=0.8\linewidth]{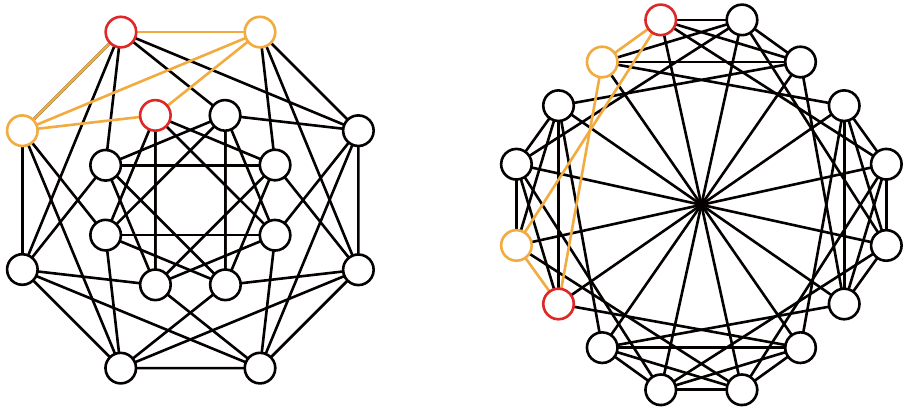}
\caption{Two graphs (the Shrikhande graph and the Rook's $4\times 4$ graph) that can be distinguished by \textit{SPIS}-SEG-WL but not 3-WL.}
\label{afig3}
\end{figure}


\paragraph{\textbf{Computing \textit{SPIS}.}} To compute \textit{SPIS} encoding on a input graph $G=(V,E)$, we first use the Floyd-Warshall algorithm \citep{floyd1962ambiguity} to compute the lengths of shortest paths between all pairs of vertices in $G$, which takes $O(n^3)$ time complexity where $n=|V|$. Next for every pair of nodes $(v,u)$, for every node $s$ we test if $s$ is in $\text{SPIS}(v,u)$ by checking if $\text{SPD}_R(v,s)+\text{SPD}_R(s,u)=\text{SPD}_R(v,u)$ holds to construct $V_{\text{SPIS}(v,u)}$, and this step also has $O(n^3)$ time complexity. Finally, for every pair of nodes $(v,u)$ we construct $E_{\text{SPIS}(v,u)}$ by computing the intersection between $V_{\text{SPIS}(v,u)}\times V_{\text{SPIS}(v,u)}$ and $E$. If we denote the average number of nodes of SPISs in the graph as $t$, then $V_{\text{SPIS}(v,u)}\times V_{\text{SPIS}(v,u)}$ can have $t^2$ edges in average and thus the final step costs $O(n^2t^2)$ complexity. The overall time complexity for computing \textit{SPIS} is then $O(n^3+n^2t^2)$. As we can reasonably expect $t^2\sim n$ on most real-world sparse graphs because SPISs should be small with respect to the entire graph, the complexity of \textit{SPIS} can be viewed as $O(n^3)$. This is quite acceptable because the time complexity for computing \textit{SPD} via Floyd-Warshall algorithm is already $O(n^3)$, and \textit{SPIS} offers a much stronger expressive power.

\begin{table*}[h]
    \begin{adjustbox}{width=1.8\columnwidth,center}
    \begin{tabular}{l|cccc|ccccccc}
        \toprule
        & \multicolumn{4}{c|}{Low-Order Graphs (Parameter: $n$)} & \multicolumn{6}{c}{Strongly Regular Graphs (Parameter: $(n,k,\lambda,\mu)$)} \\
        \midrule
        Parameter & $5$ & $6$ & $7$ & $8$ & $(25,12,5,6)$ & $(26,10,3,4)$ & $(29,14,6,7)$ & $(36,14,4,6)$ & $(40,12,2,4)$ & $(45,12,3,3)$ \\
        \midrule
        \# Graphs & 21 & 112 & 853 & 11117 & 15 & 10 & 41 & 180 & 28 & 78 \\
        \# Graph Pairs & 210 & 6216 & 363378 & 61788286 & 105 & 45 & 820 & 16110 & 378 & 3003 \\
        \midrule
        Method & \multicolumn{10}{c}{\# Indistinguishable Graph Pairs}\\
        \midrule
        WL & 0 & 3 & 17 & 312 & 105 & 45 & 820 & 16110 & 378 & 3003 \\
        \textit{SPD}-SEG-WL & 0 & 2 & 12 & 186 & 105 & 45 & 820 & 16110 & 378 & 3003 \\
        \textit{SPIS}-SEG-WL & 0 & \textbf{0} & \textbf{0} & \textbf{0} & \textbf{0} & \textbf{0} & \textbf{0} & \textbf{15} & \textbf{3} & \textbf{0} \\
        \bottomrule
    \end{tabular}
    \end{adjustbox}
    \vspace{1mm}
    \caption{Results of synthetic graph isomorphism tests.}
    \label{tbl1}
    
\end{table*}

\begin{table*}[h]
    \begin{adjustbox}{width=1.9\columnwidth,center}
    \begin{tabular}{l|cccc|cccc}
        \toprule
        Task & \multicolumn{4}{c|}{Regression} & \multicolumn{4}{c}{Classification} \\
        \midrule
        Dataset & ZINC & QM9 & QM8 & ESOL & PTC-MR & MUTAG & COX2 & PROTEINS \\
        \midrule
        Metric & \multicolumn{1}{c|}{MAE$\downarrow$} & \multicolumn{2}{c|}{Multi-MAE$\downarrow$} &  \multicolumn{1}{c|}{RMSE$\downarrow$} & \multicolumn{4}{c}{Accuracy$\uparrow$} \\
        \midrule
        \multicolumn{1}{l|}{Method} & \multicolumn{8}{c}{Results}\\
        \midrule
        GCN       & 0.469\small$\pm$0.002 & 1.006\small$\pm$0.020 & 0.0279\small$\pm$0.0001 & 0.564\small$\pm$0.015 & 67.97\small$\pm$6.49 & 85.76\small$\pm$8.75 & 80.42\small$\pm$5.23 & 76.00\small$\pm$3.20 \\
        GAT       & 0.463\small$\pm$0.002 & 1.112\small$\pm$0.018 & 0.0317\small$\pm$0.0001 & 0.552\small$\pm$0.007 & 67.21\small$\pm$2.50 & 84.59\small$\pm$6.30 & 79.36\small$\pm$7.23 & 71.15\small$\pm$7.12 \\
        GIN       & 0.408\small$\pm$0.008 & 1.225\small$\pm$0.055 & 0.0276\small$\pm$0.0001 & 0.626\small$\pm$0.017 & 68.27\small$\pm$5.11 & 89.40\small$\pm$5.40 & 82.57\small$\pm$4.55 & 75.90\small$\pm$2.80 \\
        GraphSAGE & 0.410\small$\pm$0.005 & 0.855\small$\pm$0.002 & 0.0275\small$\pm$0.0001 & 0.601\small$\pm$0.008 & 60.53\small$\pm$5.24 & 85.10\small$\pm$7.60 & 78.07\small$\pm$7.07 & 75.90\small$\pm$3.20 \\
        GSN       & 0.140\small$\pm$0.006 & - & - & - & 67.40\small$\pm$5.70 & 92.20\small$\pm$7.50 & - & 74.60\small$\pm$5.00 \\
        PNA       & 0.320\small$\pm$0.032 & - & - & - & - & - & - & - \\
        1-2-3-GNN & - & - & - & - & 60.90 & 86.10 & - & 75.50 \\
        \midrule
        MoleculeNet & - & 2.350 & 0.0150 & 0.580 & - & - & - & - \\
        \midrule
        WL & - & - & - & - & 59.90\small$\pm$4.30 & 90.40\small$\pm$5.70 & - & 75.00\small$\pm$3.10 \\
        RetGK & - & - & - & - & 62.50\small$\pm$1.60 & 90.30\small$\pm$1.10 & 80.10\small$\pm$0.90 & 76.20\small$\pm$0.50 \\
        P-WL & - & - & - & - & 64.02\small$\pm$0.82 & 90.51\small$\pm$1.34 & - & 75.31\small$\pm$0.73 \\
        FGW & - & - & - & - & 65.31\small$\pm$7.90 & 88.42\small$\pm$5.67 & 77.23\small$\pm$4.86 & 74.55\small$\pm$2.74 \\
        \midrule
        GT & 0.226\small$\pm 0.01$ & - & - & - & - & - & - & - \\
        SAN & 0.139\small$\pm 0.01$ & - & - & - & - & - & - & - \\
        SAT & 0.135 & - & - & - & - & - & - & - \\
        \midrule
        Graphormer-\textit{id} & 0.668\small$\pm$0.003 & 3.176\small$\pm$0.005 & 0.0144\small$\pm$0.0003 & 0.612\small$\pm$0.002 & 66.39\small$\pm$5.18 & 85.49\small$\pm$8.51 & 77.60\small$\pm$7.69 & 77.19\small$\pm$4.07 \\
        Graphormer-\textit{Neighbor} & 0.531\small$\pm$0.004 & 1.799\small$\pm$0.002 & 0.0141\small$\pm$0.0002 & 0.639\small$\pm$0.034 & 68.13\small$\pm$6.82 & 90.35\small$\pm$7.01 & 78.06\small$\pm$7.43 & 78.12\small$\pm$3.62 \\
        Graphormer-\textit{SPD} & 0.122\small$\pm$0.001 & 0.607\small$\pm$0.002 & 0.0079\small$\pm$0.0001 & 0.492\small$\pm$0.004 & 68.43\small$\pm$5.82 & 91.39\small$\pm$7.35 & 82.12\small$\pm$3.40 & 78.59\small$\pm$4.35 \\
        Graphormer-\textit{SPIS} & \textbf{0.115}\small$\pm$\textbf{0.001} & \textbf{0.595}\small$\pm$\textbf{0.001} & \textbf{0.0073}\small$\pm$\textbf{0.0001} & \textbf{0.484}\small$\pm$\textbf{0.005} & \textbf{69.28}\small$\pm$\textbf{5.34} & \textbf{92.48}\small$\pm$\textbf{5.87} & \textbf{83.22}\small$\pm$\textbf{2.25} & \textbf{79.41\small$\pm$1.46} \\
        \bottomrule
    \end{tabular}
    \end{adjustbox}
    \vspace{1mm}
    \caption{Results of graph representation learning benchmarks. All results except for GCN, GAT, GIN, GraphSAGE, SAT and Graphormer variants are cited from their original papers. $\downarrow$ for lower is better, and $\uparrow$ for higher is better. Appendix \ref{asec53} reports performances on QM9 by seperate tasks.}
    \label{tbl2}
    
\end{table*}

\section{Experiments}
\label{sec6}

In this section, we first perform synthetic isomorphism tests on low order graphs and strongly regular graphs to evaluate the expressive power of proposed \textit{SPIS} encoding against several previous benchmark methods. Then we show that by replacing \textit{SPD} encoding with the provably stronger \textit{SPIS}, the performance of the well-tested Graphormer model on a wide range of real-world datasets can be significantly improved.

\subsection{Synthetic Isomorphism Tests}
\label{sec61}

\paragraph{Settings.} To evaluate the structural expressive power of WL test and SEG-WL test with structural encodings described above, we first perform synthetic isomorphism tests on a collection of connected low-order graphs up to 8 nodes and strongly regular graphs up to 45 nodes\footnote{We use the database in \url{http://www.maths.gla.ac.uk/~es/srgraphs.php} to collect strongly regular graphs with the same set of parameters.}. We run the algorithms above and check how they can disambiguate non-isomorphic low order graphs with the same number of nodes and strongly regular graphs with the same parameters. The results are shown in Table \ref{tbl1}.

\paragraph{Results.} For low order graphs, results in Table \ref{tbl1} show that \textit{SPD}-SEG-WL can distinguish more non-isomorphic graphs than WL, but neither can match the effectiveness of \textit{SPIS}-SEG-WL which disambiguates any low-order graphs up to 8 nodes. As for the highly symmetric strongly regular graphs, both WL and \textit{SPD}-SEG-WL cannot discriminate any strongly regular graphs with the same parameters, yet \textit{SPIS}-SEG-WL only has few indistinguishable pairs. Compared with WL and \textit{SPD}-SEG-WL, \textit{SPIS}-SEG-WL has outstanding structural expressive power. Since many real-world graphs (like molecular graphs) consist of small motifs with highly symmetrical structures, it is reasonable to expect that graph Transformers with 
\textit{SPIS} can accurately capture significant graph structures and exhibit strong discriminative power.

\subsection{Graph Representation Learning}

\paragraph{Datasets.} To test the real-world performance of graph Transformers with proposed structural encodings, we select 8 popular graph representation learning benchmarks: 5 property regression datasets (ogb-PCQM4Mv2 \cite{hu2021ogb,hu2020open}, ZINC(subset) \citep{irwin2005zinc,dwivedi2020benchmarking}, QM9, QM8, ESOL \citep{wu2018moleculenet}) and 4 classification datasets (PTC-MR, MUTAG, COX2, PROTEINS \citep{morris2020tudataset}). Statistics of the datasets are summarized the appendix. ogb-PCQM4Mv2 is a large-scale graph regression dataset with over 3 million graphs. The ZINC dataset from benchmarking-gnn \citep{dwivedi2020benchmarking}\footnote{\url{https://github.com/graphdeeplearning/benchmarking-gnns}.} is a subset of the ZINC chemical database \citep{irwin2005zinc} with 12000 molecules, and the task is to predict the solubility of molecules. We follow the guidelines and use the predefined split for training, validation and testing. QM9 and QM8 \citep{ruddigkeit2012enumeration,ramakrishnan2014quantum,ramakrishnan2015electronic} are two molecular datasets containing small organic molecules up to 9 and 8 heavy atoms, and the task is to predict molecular properties calculated with ab initio Density Functional Theory (DFT). We follow the guidelines in MoleculeNet \citep{wu2018moleculenet} for choosing regression tasks and metrics. We perform joint training on 12 tasks for QM9 and 16 tasks for QM8. ESOL is also a molecular regression dataset in MoleculeNet containing water solubility data for compounds\footnote{QM8, QM9 and ESOL are available at \url{http://moleculenet.ai/datasets-1} (MIT 2.0 license).}. PTC-MR, MUTAG, COX2 and PROTEINS are four graph classification datasets collected from TUDataset \citep{morris2020tudataset}\footnote{The four datasets are available at \url{https://chrsmrrs.github.io/datasets/}.}. On graph regression datasets, We use random 8:1:1 split for training, validation, and testing except for ZINC, and report the performance averaged over 3 runs. On graph classification datasets, we use 10-fold cross validation with 90\% training and 10\% testing, and report the mean best accuracy.

\paragraph{Settings and Baselines.} To investigate how the expressive power of structural encodings affects the benchmark performance of real graph Transformers, we first choose the Graphormer \citep{ying2021transformers} as the backbone model for testing structural encoding since Graphormer proposes SPD, which we have characterized and has expressivity stronger than WL, and the way Graphormer introduces relative structural encodings can correspond to our Theorem \ref{thm2} which analyzes a simple Transformer network incorporating relative encodings via attention biases. The original Graphormer utilizes a $\textit{SPD}_R$ relative structural encoding (discussed in Appendix \ref{asec4}), so we name it as Graphormer-\textit{SPD}. We build a new Graphormer-\textit{SPIS} model by replacing the $\textit{SPD}_R$ encoding with $\textit{SPIS}_R$ encoding as an improved version of Graphormer while keeping other network components unchanged. Similarly, we use Graphormer-\textit{id} and Graphormer-\textit{Neighbor} as less expressive Graphormer variants. We also include the GraphGPS \cite{rampasek2022GPS} model and its variants GraphGPS-\textit{SPD} and GraphGPS-\textit{SPIS} into comparison on the large-scale ogbn-PCQM4Mv2 dataset with over 3 million graphs. The Transformer module in the basic GraphGPS model does not incorporate structural encoding, thus it can be considered as including \textit{id} structural encoding. We construct versions of the GraphGPS model incorporating \textit{Neighbor}, \textit{SPD}, and \textit{SPIS} structural encodings via attention biases to validate the impact of structural encoding expressivity on performance for large-scale graph tasks.

In addition, we compare the above Graphormer variants against (i) GNNs including GCN \citep{kipf2016semi}, GIN \citep{xu2018powerful}, GAT \citep{velivckovic2017graph}, GraphSAGE \citep{hamilton2018inductive}, GSN \citep{bouritsas2022improving}, PNA \citep{corso2020principal} and 1-2-3-GNN \citep{morris2019weisfeiler}; (ii) best performances collected by MoleculeNet paper \citep{wu2018moleculenet}; (iii) graph kernel based methods including WL subtree kernel \citep{shervashidze2011weisfeiler}, RetGK \citep{zhang2018retgk}, P-WL \citep{rieck2019persistent} and FGW \citep{titouan2019optimal}; (iv) graph Transformers including GT \citep{dwivedi2020generalization}, SAN \citep{kreuzer2021rethinking}, SAT \citep{chen2022structure}, GRPE \cite{park2022grpe} and EGT \cite{hussain2022global}. One can find the detailed descriptions of Graphormer variants, baselines, and training settings in Appendix \ref{asec52}.

\begin{table}[h]
    \centering
    \begin{tabular}{l|cc}
    \toprule
    Model       & Training MAE & Validation MAE \\
    \midrule
    GCN & n/a & 0.1379 \\
    GCN-virtual & n/a & 0.1153 \\
    GIN & n/a & 0.1195 \\
    GIN-virtual & n/a & 0.1083 \\
    \midrule
    GRPE & n/a & 0.0890 \\
    EGT & n/a & 0.0869 \\
    \midrule
    Graphormer (-\textit{SPD}) & 0.0348 & 0.0864 \\
    Graphormer-\textit{SPIS} & 0.0350 & 0.0861 \\
    \midrule
    GraphGPS (medium) (-\textit{id}) & 0.0726 & 0.0858 \\
    GraphGPS-\textit{Neighbor} & 0.0730 & 0.0856 \\
    GraphGPS-\textit{SPD} & 0.0719 & 0.0853 \\
    GraphGPS-\textit{SPIS} & 0.0710 & \textbf{0.0850} \\
    \bottomrule
    \end{tabular}
    \vspace{4mm}
    \caption{Results on ogb-PCQM4Mv2 dataset.}
    \label{tbl3}
\end{table}

\paragraph{Results.} Table \ref{tbl2} and \ref{tbl3} presents the results of graph representation learning benchmarks. It can be observed that the performances of Graphormer variants mostly align with their relative ranking of expressive power (\textit{SPIS} $\succeq$ \textit{SPD} $\succeq$ \textit{Neighbor} $\succeq$ \textit{id}), and replacing the \textit{SPD} encoding in Graphormer with the proposed stronger \textit{SPIS} encoding results in a consistent performance improvement, demonstrating that real-world performance of graph Transformers can benefit from theoretically expressive structural encoding designs. Equipped with the provably powerful \textit{SPIS} encoding, Graphormer-\textit{SPIS} achieves state-of-the-art performance and outperforms existing graph Transformers and GNNs, which echoes our theoretical results on the strong expressive power of \textit{SPIS}. Meanwhile, when employing the less expressive \textit{Neighbor} and \textit{id} encoding, the Transformer network loses the ability to accurately distinguish graph structures, leading to a significant performance drop. In the case of the GraphGPS model, the experimental results follow the same pattern. The original model achieved a certain level of performance improvement after incorporating the structural encoding in the Transformer layer. The stronger the expression ability of the structural encoding, the more significant the performance improvement. Overall, the experimental results demonstrate that our theoretical analysis has a practical impact on enhancing the performance of graph Transformers in various graph tasks.

\section{Conclusion}
\label{sec7}
In this paper, we introduce SEG-WL test as a novel unified framework for analyzing the expressive power of graph Transformers. In this framework, we theoretically characterize how to improve the expressivity of graph Transformers with respect to WL test and GNNs, and propose a provably powerful structural encoding method \textit{SPIS}. Experiments have verified that the performances of benchmark graph Transformers can benefit from this theory-oriented extension. We also discuss our work's limitations and potential social impact in Appendix \ref{asec6}.


\bibliography{lib}
\bibliographystyle{ACM-Reference-Format}

\appendix

\titlespacing*{\section}{0pt}{0.7\baselineskip}{0.7\baselineskip}
\titlespacing*{\subsection}{0pt}{0.7\baselineskip}{0.7\baselineskip}
\titlespacing*{\paragraph}{0pt}{0.5\baselineskip}{0.5\baselineskip}

\clearpage

\section{Proofs}
\label{asec1}
\subsection{Theorem \ref{thm1}}
\label{asec11}

We first restate Theorem \ref{thm1} in a more generalized version which can be applied to both cases when the graph embedding is computed by a global readout function or virtual node trick:

\begin{customthm}{1}
For any structural encoding scheme $S=(f_A,f_R)$ and labeled graph $G=(V,E)$ with label map $h_0:V\to\gX$, if a graph neural model $\gA:\gG\to\R^d$ satisfies the following conditions:
\begin{enumerate}
\itemsep0em
    \item $\gA$ computes the initial node embeddings with
    \begin{align}
        l_0(v)=\phi(h_0(v),f_A(v,G)),
    \end{align}
    \item $\gA$ aggregates and updates node embeddings iteratively with
    \begin{align}
        l_t(v)=\sigma(\ldblbrace(l_{t-1}(u),f_R(v,u,G)):u\in V\rdblbrace),
    \end{align}
    where $\phi$ and $\sigma$ above are model-specific functions,
    \item The final graph embedding is computed by a global readout on the multiset of node features $\ldblbrace l_t(v):v\in V\rdblbrace$, or represented by the embedding of node $s$ such that for any $u,v\in V$, $f_R(s,v,G)=f_R(s,u,G)=f_R(v,s,G)=f_R(u,s,G)$.
\end{enumerate}
then for any labeled graphs $G_1$ and $G_2$, if $\gA$ maps them to different embeddings, $S$-SEG-WL also decides $G_1$ and $G_2$ are not isomorphic.
\end{customthm}

\begin{proof}
We first show that for any node $v,u$ at iteration $t$, if $S$-SEG-WL generates $g_t(v)=g_t(u)$, then $\gA$ also generates the same embeddings for $v$ and $u$ as $l_t(v)=l_t(u)$. For $t=0$ this proposition holds because if $g_0(v)=g_0(u)$ then $v$ and $u$ must have the same input label and absolute structural encoding, which leads to $l_0(v)=l_0(u)$. Suppose this proposition holds for iteration $0,1,\ldots,t$ and $g_{t+1}(v)=g_{t+1}(u)$. From the injectiveness of function $\Phi$, we have 
\begin{align}
    \ldblbrace (g_t(r),f_R(v,r,G)):r\in V_v\rdblbrace=\ldblbrace (g_t(r),f_R(u,r,G)):r\in V_u\rdblbrace,
\end{align}
where $V_v$ is the node set of graph that $v$ belongs to, which is the same for $V_u$. 
If two finite multisets are identical, then the elements in the two multisets can be matched in pairs. Therefore, according to our assumption at iteration $t$ such that $g_t(v)=g_t(u)\implies l_t(v)=l_t(u)$, we have
\begin{align}
    \ldblbrace (l_t(r),f_R(v,r,G)):r\in V\rdblbrace=\ldblbrace (l_t(r),f_R(u,r,G)):r\in V\rdblbrace.
\end{align}
Considering $\gA$ updates node labels by $l_{t+1}(v)=\sigma(\ldblbrace(l_t(r),f_R(v,r,G)):r\in V\rdblbrace)$, $l_{t+1}(v)=l_{t+1}(u)$ holds. This proves the proposition above by induction. Now that for any iteration $t$ we have $g_t(v)=g_t(u)\implies l_t(v)=l_t(u)$, indicating that a mapping $\psi_t$ exists such that for any node $v$, $l_t(v)=\psi_t(g_t(v))$.

Now consider two graphs $G_1$ and $G_2$ where $\gA$ maps them to different embeddings after $t$ iterations. If $\gA$ computes the graph embedding by a readout function on the multiset of node features, then $\ldblbrace l_t(r):r\in V\rdblbrace$ must be different for two graphs. Since $\ldblbrace l_t(r):r\in V\rdblbrace=\ldblbrace \psi_i(g_t(r)):r\in V\rdblbrace$, $\ldblbrace g_t(r):r\in V\rdblbrace$ must also be different for two graphs, which shows that $S$-SEG-WL decides $G_1$ and $G_2$ are not isomorphic. Meanwhile, if the graph embedding is represented by embedding of node $s$ such that for any $u,v\in V$, $f_R(s,v,G)=f_R(s,u,G)=f_R(v,s,G)=f_R(u,s,G)$, then $l_t(s)$ is different for two graphs. Since $l_t(s)$ is generated by $l_t(s)=\sigma(\ldblbrace (l_{t-1}(r),f_R(s,r,G)):r\in V\rdblbrace)$ and $f_R(s,r,G)$ is the same for every $r\in V$, $\ldblbrace l_{t-1}(r):r\in V\rdblbrace$ must be different for two graphs, which goes back to the situation we have discussed above. Therefore, the proof is completed.
\end{proof}

\subsection{Theorem \ref{thm2}}
\label{asec12}

Our proof for Theorem \ref{thm2} is largely based on the proof for the universal approximation theorem of the Transformer architecture, so it is strongly recommended to go through the proof in \cite{yun2019transformers} before reading our proof in the next section.

\subsubsection{bias-GT Model}
To present a simple and flexible example on building theoretically powerful graph Transformers, we propose bias-GT, a graph Transformer model that works under any structural encoding schemes with minimal modifications to the original Transformer architecture. More concretely, for $S=(f_A,f_R)$ and input graph $G$, the input embedding of node $v$ is computed by
\begin{align}
    l_0(v)=\text{Linear}(\text{Concat}(h_0(v),f_A(v,G)),
\end{align}
where Linear$(\cdot)$ is a linear layer, Concat$(\cdot)$ refers to the concatenation operation. At every Transformer layer, the relative structural encodings are introduced as transformed attention biases. For every node pair $(u,v)$, the final attention weight $a_{uv}$ from node $u$ to $v$ is computed by
\begin{align}
    a_{uv}=\bar a_{uv}+\text{Embedding}(f_R(u,v,G)),
\end{align}
where $\bar a_{uv}$ is the original attention weight computed by scaled-dot self-attention, and $\text{Embedding}(\cdot)$ transforms relative embeddings in $\gC$ to $\R$ using via embedding lookup or linear layers. All remaining network components stay the same with the original Transformer architecture. This bias-GT model offers a straightforward strategy for injecting strutural information to the Transformer and can be viewed as a simplified version of some exisiting models \citep{ying2021transformers,zhao2021gophormer}. We will use $S$-bias-GT to denote bias-GT network with structural encoding scheme $S$. The proposition below shows that $S$-SEG-WL test limits the expressive power of $S$-bias-GT:

\begin{proposition}
For any regular structural encoding scheme $S=(f_A,f_R)$ and two graphs $G_1,G_2$, if $S$-bias-GT maps them to different embeddings, $S$-SEG-WL also decides $G_1$ and $G_2$ are not isomorphic.
\label{apro0}
\end{proposition}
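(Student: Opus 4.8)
The plan is to recognize the $S$-bias-GT network as a concrete instance of the abstract graph neural model $\gA$ described in the generalized Theorem~\ref{thm1}, and then invoke that theorem directly. So I would verify that $S$-bias-GT satisfies the three structural conditions required there; once this is done, Theorem~\ref{thm1} immediately yields the claim, namely that whenever $S$-bias-GT maps $G_1$ and $G_2$ to different embeddings, $S$-SEG-WL also declares them non-isomorphic.

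The initial-embedding condition is immediate: the input layer $l_0(v)=\text{Linear}(\text{Concat}(h_0(v),f_A(v,G)))$ is exactly of the form $\phi(h_0(v),f_A(v,G))$ with $\phi=\text{Linear}\circ\text{Concat}$, matching condition (1). Likewise, the readout condition (3) holds because $S$-bias-GT forms its graph embedding either by a permutation-invariant pooling over $\ldblbrace l_t(v):v\in V\rdblbrace$ or through a virtual/readout node, both of which are explicitly permitted by the restated Theorem~\ref{thm1}.

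The core of the argument is verifying the iterative update condition (2): that a single Transformer layer maps $\ldblbrace(l_{t-1}(u),f_R(v,u,G)):u\in V\rdblbrace$ to $l_t(v)$ via a function that does not otherwise depend on $v$. In $S$-bias-GT, the refreshed representation of $v$ is produced from its query (a function of $l_{t-1}(v)$) together with, for every $u$, the key and value (functions of $l_{t-1}(u)$) and the attention bias $\text{Embedding}(f_R(v,u,G))$; the softmax-weighted aggregation over $u$, the residual additions, the layer normalization, and the FFN are all symmetric in the summation index $u$. Thus the layer output depends on $v$ only through $l_{t-1}(v)$ and through the multiset of pairs $(l_{t-1}(u),f_R(v,u,G))$. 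This is exactly where regularity of $S$ enters: since $f_R(v,v,G)\neq f_R(v,u,G)$ for all $u\neq v$, the summand indexed by $u=v$ carries a distinguished second coordinate, so $l_{t-1}(v)$ can be read off from the multiset itself. Hence the entire layer can be written as $l_t(v)=\sigma(\ldblbrace(l_{t-1}(u),f_R(v,u,G)):u\in V\rdblbrace)$ for a fixed model-specific $\sigma$, as required.

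With all three conditions checked, Theorem~\ref{thm1} applies verbatim and closes the proof. The main obstacle I anticipate is the bookkeeping in the update step: one must argue carefully that nothing in the attention-with-bias mechanism breaks permutation invariance in $u$, and that the dependence on the central node's own state—entering through the query and through the residual connections—is genuinely recoverable from the multiset. That recoverability is precisely what the regularity hypothesis guarantees, and without it the update could depend on $v$ in a way not captured by the multiset, so I would flag this as the one place where the argument is more than routine.
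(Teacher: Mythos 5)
Your proposal is correct and follows essentially the same route as the paper: verify the three conditions of the generalized Theorem~\ref{thm1} for bias-GT, with regularity of $S$ used exactly as you describe --- the distinguished value $f_R(v,v,G)$ lets the central node's own state be recovered from the multiset, so the attention-plus-residual update is a legitimate function of $\ldblbrace(l_{t-1}(u),f_R(v,u,G)):u\in V\rdblbrace$ alone. The paper's proof makes the identical observation and then invokes Theorem~\ref{thm1}, so nothing further is needed.
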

\begin{proof}

We only need to check the conditions in Theorem \ref{thm1}. For the first condition, $S$-bias-GT computes the initial node embeddings with
\begin{align}
    l_0(v)=\phi(h_0(v),f_A(v,G))=\text{Linear}(\text{Concat}(\bar h_0(v),f_A(v,G)),
\end{align}
and for the second condition, since $S$ is regular, the relative structural encoding functions satisfy $f_R(v,v,G)\neq f_R(v,u,G)$ for $v,u\in V$, then a function operated on $\ldblbrace (l_{t-1}(u),f_R(v,u,G)):u\in V\rdblbrace$ can be viewed as a function operated on $(h_v,\ldblbrace (l_{t-1}(u),f_R(v,u,G)):u\in V\rdblbrace)$ because $f_R(v,v,G)$ is different from all other relative encodings. $S$-bias-GT updatesthe node embeddings with
\begin{align}
    l_t(v)= &\sigma(h_v,\ldblbrace (l_{t-1}(u),f_R(v,u,G)):u\in V\rdblbrace)\\
    =&\text{FFN}({\text{Concat}}_{i=1,\ldots,h}({\sum_{u\in V}}w_{vu}^i l_{t-1}(v)W^i_Q)W_O),\\
    &\text{where }w_{vu}^i=\frac{\text{exp}(\bar\alpha_{vu}^i)}{\sum_{r\in V}\text{exp}(\bar\alpha_{vr}^i)}\\
    &\text{and }\bar\alpha_{vu}^i=\frac{(l_{t-1}(v)W_Q^i) (l_{t-1}(u)W_K^i)^\top}{\sqrt{d}}\\
    &\quad +\text{Embedding}_i(f_R(v,u,G)).
\end{align}
$W_Q^i,W_K^i,W_V^i,W_O$ above are projection matrices, $\text{FFN}$ is the feed-forward layer,
and layer normalization and residual connections are omitted for clarity. The function $\sigma$ is basically the computation steps of the Transformer with $f_R(v,u,G)$ injected as attention bias. Since the graph embedding can be computed by a global readout function, according to Theorem \ref{thm1}, the proof is completed.
\end{proof}

\subsubsection{Explainations on Theorem \ref{thm2}}
When the input graph order $n$ is fixed, let the input be $G=(V,E)$ with label map $h_0$ and $V=\{v_1,\ldots,v_n\}$. To properly define this approximation process, for some structural encoding scheme $S$, the input for both SEG-WL test and bias-GT network $g$ is viewed as the feature matrix $\mX_0=[h_0(v_1),\ldots,h_0(v_n)]\in\R^{d\times n}$ and the adjacency matrix $\mA$ with permutation invariance. We define the SEG-WL test function $f_t$ by stacking all labels generated by $t$-iteration SEG-WL test in $f_t(\mX_0,\mA)=[g_t(v_1),\ldots,g_t(v_n)]$, and the output of $g$ is similarly defined by stacking all feature vectors generated by the network. We define the $\mathsf{d}_p$ distance between $f_t$ and $g$ as $\mathsf{d}_p(f_t,g)=\max_{\mA}\mathsf{d}_p(f_t(\cdot,\mA),g(\cdot,\mA))$, where $\mathsf{d}_p(f_t(\cdot,\mA),g(\cdot,\mA))$ is the $\ell^p$ distance on $\R^{d\times n}$ between $f_t$ and $g$ when $\mA$ is fixed. Following \citep{yun2019transformers}, $\mathsf{d}_p$ also stands for $\ell^p$ distance between functions in the remaining context.

$\Phi(\ldblbrace (g_{t-1}(v_i),f_R(v_j,v_i,G)):v_i\in V)\rdblbrace)$ can be viewed as a permutation (of node order) invariant function $\Phi(\mX_{t-1},\mW_j)$, where $\mX_{t-1}=[g_{t-1}(v_1),\ldots,g_{t-1}(v_n)]\in\R^{d\times n}$ is the matrix of node labels and $\mW_j=[f_R(v_j,v_1,G)),\ldots,f_R(v_j,v_n,G))]\in\gC^n$. The assumption that $\Phi$ can be extended to a continuous function with respect to node labels means that, for any fixed $\mW_j$, $\Phi(\mX_{t-1},\mW_j)$ is a continuous function with respect to any entry-wise $\ell^p$ norm of $\mX_{t-1}$ with compact support in $\R^{d\times n}$ (since $\gX$ is compact).
\subsubsection{Proof for Theorem \ref{thm2}}
\begin{proof}
Since the first iteration of SEG-WL test can be arbitrarily approximated by performing a linear layer on embeddings generated by concatnating the initial embeddings and absolute positional encodings, according to the universal approximation theorem \citep{hornik1989multilayer} and Lipschitz continuity of feed-forward layers, the key technical challenge in proving Theorem \ref{thm2} is showing that each iteration $i$ in $1,\ldots,t$ of SEG-WL test can be approximated arbitrarily well using the bias-GT network. Let $f$ stands for one iteration of $S$-SEG-WL test, with input and output defined according to $f_t$. We denote $\bm{\alpha}_{i,j}=f_R(v_i,v_j,G)$ for simplicity, and let $\vx_i$ be the input labels of $v_i$ for $f$. Then $f$ can be viewed as:
\begin{align}
    f(\mX,\mA)=[\Phi(\ldblbrace(\vx_i,\bm{\alpha}_{1,i})\rdblbrace_{i=1,\ldots,n}),\ldots,\Phi(\ldblbrace(\vx_i,\bm{\alpha}_{n,i})\rdblbrace_{i=1,\ldots,n})].\label{eq1}
\end{align}
That is, if our Transformer network is capable of approximating the multiset function $\Phi$ that takes the feature matrix and structural encodings as input, then it can approximate $f$ at any precision because the output of $f$ contains $n$ entries computed individually by $\Phi$. As we have mentioned, $\Phi$ can be rewritten to the following equivalent form:
\begin{align}
    &\Phi(\ldblbrace(\vx_i,\bm{\alpha}_{j,i})\rdblbrace_{i=1,\ldots,n})=\Phi(\mX,\mW_j),\text{ where }\mX=[\vx_1,\ldots,\vx_n] \\
    &\text{ and }\mW_j=[\bm{\alpha}_{j,1},\ldots,\bm{\alpha}_{j,n}].
\end{align}
In this form, $\Phi$ is permutation equivariant such that for any permutation matrix $\mP$, $\Phi(\mX\mP,\mW_j\mP)=\Phi(\mX,\mW_j)$.

The major problem is that the bias-GT network $g$ only take $\mX$ as feature input while incorporating structural encodings $\mW_j$ in attention layers as biases. According to our assumptions, we can assume without generality that $\gX\subset(0,1)^d$ and the compact support of extented function $\Phi$ with respect to $\mX$ is contained within $[0,1]^{d\times n}$. We follow the proof structure outlined in \cite{yun2019transformers}.

\paragraph{Step 1: Approximate $f$ by $\bar f$, a piece-wise constant function with respect to $\mX$.} According to previous assumptions and statements in \cite{yun2019transformers}, for any fixed $\mW_j$, $\Phi$ is a uniform continuous function (because $\Phi$ has compact support) with respect to the argument $\mX$.
Suppose for $\mW_j$, there exists $\delta_{\mW_j}$ such that for any $\mX,\mY$,$\|\mX-\mY\|_\infty<\delta_{\mW_j}$ we have $\|\Phi(\mX,\mW_j)-\Phi(\mY,\mW_j)\|_p<\frac{\epsilon}3$. Since the possible graph structures of order $n$ is finite, $\gC$ is a finite set and the possible choices of $\mW_j$ is also finite. Therefore, we can pick $\delta=\min_{\mW_j}\{\delta_{\mW_j}\}$, then for any $\mX,\mY,\mW_j$, if $\|\mX-\mY\|_\infty<\delta$ we have $\|\Phi(\mX,\mW_j)-\Phi(\mY,\mW_j)\|_p<\frac{\epsilon}3$. Accordingly, we can define a piece-wise constant function $\bar\Phi$ to approximate $\Phi$ as

\begin{align}
    \bar \Phi(\mX,\mW_j)=\sum_{\mL\in\sG_{\delta}}\Phi(\mC_{\mL},\mW_j)\mathbbm{1}\{\mX\in\sS_{\mL}\},
\end{align}
where $\sS_{\mL}$ is a cube of width $\delta$ with $\mL$ being one of its vertices, $\mC_{\mL}\in\sS_{\mL}$ is the center point of $\sS_{\mL}$ (Please refer to Appendix B.1 of \citep{yun2019transformers} for a detailed explanation). By the uniform continuity of $\Phi$, we can prove $\|\Phi(\mX,\mW_j)-\bar\Phi(\mX,\mW_j)\|_p<\frac{\epsilon}3$ for any $\mX,\mW_j$. Also, it is trivial to verify that $\bar \Phi$ is permutation equivariant. By defining $\bar f$ by replacing function $\Phi$ with $\bar \Phi$ in Equation \ref{eq1}, we have $\mathsf{d}_p(f,\bar f)\leq\frac{\epsilon}3.$

\paragraph{Step 2: Approximate $\bar f$ with modified bias-GT network.} In this step we aim to approximate $\bar f$ using a modified bias-GT network, where the softmax operator $\sigma[\cdot]$ and $\text{ReLU}(\cdot)$ are replaced by the $\gamma$-hardmax operator $\sigma_{\text{H},\gamma}[\cdot]$ and an activation finction $\phi$ that is a piece-wise linear function with at most three pieces in which at least one piece is constant. Note that the $\gamma$-hardmax operator is defined by adding $\gamma>0$ to non-zero elements of $\sigma_{\text{H}}$.

\begin{proposition}
$\bar\Phi$ can be approximated by a modified bias-GT network $\bar g$ such that $\mathsf{d}_p(\bar f,\bar g)\leq\frac{\epsilon}3.$ \label{apro1}
\end{proposition}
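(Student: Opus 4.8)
The plan is to follow the three-stage construction underlying the universal approximation theorem of Yun et al.~\citep{yun2019transformers}, adapting each stage to account for the structural-encoding biases that distinguish $\bar g$ from a vanilla Transformer. Writing $\bar\Phi(\mX,\mW_j)$ for the permutation-equivariant target, I would assemble $\bar g$ from three blocks of layers: first a stack of feed-forward layers that quantizes every entry of the input $\mX$ onto the grid $\sG_{\delta}$, so that after this block each column lies at a grid point $\mC_{\mL}$; second a stack of self-attention layers, now carrying the attention biases $\text{Embedding}(f_R(v_i,v_j,G))$, that implements a \emph{contextual mapping}, assigning to each column a unique real identifier depending on the whole structural-encoding-weighted multiset $\ldblbrace(\vx_i,\bm{\alpha}_{j,i})\rdblbrace_{i=1,\ldots,n}$ seen from query node $j$; and third a stack of feed-forward layers that reads off each identifier and outputs the correct value $\bar\Phi(\mC_{\mL},\mW_j)$. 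Since the quantization and value-assignment blocks are pure feed-forward constructions, they transfer essentially verbatim from \citep{yun2019transformers}, so the whole content of the proposition is concentrated in the attention block.

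The essential new ingredient, relative to \citep{yun2019transformers}, is that the dependence on the relative structural encoding enters only through the attention biases rather than through the token features. I would exploit that for fixed graph order $n$ the target space $\gC$ is finite, so the biases $\bm{\alpha}_{j,i}$ take finitely many values, and choose the map $\text{Embedding}(\cdot)$ to send distinct relative-encoding values to well-separated biases. Because the regularity of $S$ guarantees $f_R(v_j,v_j,G)\neq f_R(v_j,v_i,G)$ for $i\neq j$, the query column is always distinguishable from the others. With the biases tuned this way, the selective-shift operation at the heart of the contextual mapping lemma can be made to sweep through the tokens grouped by their relative encoding to the query, so that the resulting identifier encodes not merely the quantized feature multiset but the multiset of feature--bias pairs, which is exactly the information $\bar\Phi$ depends on; permutation equivariance is preserved because all constructions act identically across columns.

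I expect the main obstacle to be re-deriving the contextual mapping lemma in the presence of these biases. In the original argument the selective shift relies on the attention logits being \emph{pure} dot products, which can be driven to an arbitrarily sharp hardmax over a single coordinate; adding the bias term perturbs every logit and risks either swamping the feature-dependent selection or failing to separate two tokens that share a feature value but differ in their relative encoding to the query. The delicate point is to choose the scales of the biases and of the query/key projections so that the biases dominate coarsely, separating structural-encoding classes, while the dot products resolve finely, ordering tokens within a class, and then to verify that the composite column-wise map remains injective on the set of admissible multisets. Once this adapted contextual mapping is established, assembling the three blocks and invoking the feed-forward approximation machinery yields $\mathsf{d}_p(\bar f,\bar g)\leq\frac{\epsilon}{3}$ by construction.
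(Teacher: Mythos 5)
Your three-block skeleton (quantize with feed-forward layers, build a contextual mapping with biased attention, read off values with feed-forward layers) matches the paper's overall plan, and you correctly locate all the new content in the attention stage. But the step you explicitly defer --- re-deriving the contextual mapping lemma when every logit is a dot product plus a bias, with ``biases dominating coarsely and dot products resolving finely'' --- is precisely the hard part, and the proposal does not carry it out. As written, nothing establishes that your single mixed-logit attention stage yields a column-wise map that is injective on multisets of feature--encoding pairs $\ldblbrace(\vx_i,\bm{\alpha}_{j,i})\rdblbrace_{i=1,\ldots,n}$, which is the entire content of the proposition. The difficulty you name is real: the selective-shift construction in Yun et al.\ relies on the attention logits being pure rank-one dot products that can be driven to a hardmax over a single coordinate, and it is not clear that adding a bias of a carefully chosen scale preserves the injectivity of the resulting identifiers rather than merely perturbing them.

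The paper sidesteps this by never mixing the two selection mechanisms. Between quantization and the contextual mapping it inserts two extra stages. First, a feed-forward rescaling sends each column id $l_j$ to $\delta^{-\delta^{-1}l_j}$, a positional (one-hot digit) encoding in base $\delta^{-1}>n$, so that a sum of at most $n$ such values is an \emph{injective} descriptor of the multiset being summed --- without this, the class-wise sums below would not determine the multisets. Second, a single biased attention layer with $c=|\gC|$ heads is used, where head $i$ hardmaxes over the indicator $\phi_i(\bm{\alpha}_{r,s})=\mathbbm{1}\{\bm{\alpha}_{r,s}=i\}$; the selection is driven \emph{purely} by the bias, with no dot product, and head $i$ returns the class-wise sum $S(u,i)=\sum_{\bm{\alpha}_{u,v}=i}l_v$. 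The heads are then combined with scales $n!\delta^{(3p+q)i}$ whose ranges and minimal gaps are separated enough (the paper proves a small lemma on sums of well-separated summands) that the combined shift $\bar l_u$ is a unique bounded identifier for $\ldblbrace(\vx_k,\bm{\alpha}_{u,k})\rdblbrace_k$. Only after each column carries this scalar id does the paper invoke the original, unmodified contextual-mapping and value-mapping machinery of Yun et al. To complete your argument you would need to either supply the missing injectivity proof for your mixed-logit scheme, or adopt the paper's decomposition: digit rescaling so that per-class sums are injective on multisets, followed by bias-only indicator heads, followed by the vanilla contextual mapping.
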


\paragraph{Step 3: Approximate modified bias-GT network with (original) bias-GT network.} Finally, we will show that the modified bias-GT $\bar g$ can be approximated by the original bias-GT architecture.
\begin{proposition}
$\bar g$ can be approximated by a bias-GT network $g$ such that $\mathsf{d}_p(g,\bar g)\leq\frac{\epsilon}3.$ \label{apro2}
\end{proposition}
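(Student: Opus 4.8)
The plan is to follow the third step of the universal approximation argument of \cite{yun2019transformers}, adapting it to the one feature that distinguishes bias-GT from a vanilla Transformer: the additive attention bias. Recall that the modified network $\bar g$ differs from a genuine bias-GT network in exactly two places, namely it uses the $\gamma$-hardmax $\sigma_{\text{H},\gamma}[\cdot]$ in place of the softmax $\sigma[\cdot]$ in the attention module, and it uses the three-piece piece-wise linear activation $\phi$ in place of $\text{ReLU}$ in the position-wise feed-forward layers. I would treat these two discrepancies independently and then bound the accumulated error across layers.

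First I would eliminate the activation discrepancy. Since $\phi$ is piece-wise linear with at most three pieces, one of which is constant, it can be written exactly as a fixed affine combination of rectifiers, $\phi(x)=c_0+\sum_j c_j\,\text{ReLU}(x-b_j)$ for suitable constants $c_j,b_j$. This reconstruction is purely local to the feed-forward network, does not touch the attention module, and is therefore completely insensitive to the relative-encoding bias; it follows verbatim from \cite{yun2019transformers} and introduces no error.

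The crux is replacing the $\gamma$-hardmax attention by genuine softmax attention. The key observation is that in bias-GT the attention logit $\bar\alpha_{vu}^i$ is the sum of the scaled dot product and the bias term $\text{Embedding}_i(f_R(v,u,G))$, and both enter the logit linearly. Hence, multiplying $W_Q^i,W_K^i$ and the embedding parameters by a common large factor $\lambda$ scales the entire logit by $\lambda$, and I would choose $g$ so that its logits equal $\lambda$ times those of $\bar g$. For fixed graph order $n$ the structural-encoding space $\gC$ is finite and, on the compact input domain, the logits are bounded; combined with the $\gamma$-margin built into $\sigma_{\text{H},\gamma}$, this guarantees a uniform gap between the selected entry and the rest, so that $\sigma[\lambda\cdot\text{logit}]\to\sigma_{\text{H},\gamma}[\text{logit}]$ uniformly as $\lambda\to\infty$. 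Crucially, $\bar g$ and $g$ carry the same bias, so the bias only shifts which entry is selected, identically in both networks, and the argument reduces to the standard softmax-approximates-hardmax estimate.

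Finally I would control error propagation. On the compact domain every layer map is Lipschitz, so a per-layer attention error of order $O(e^{-c\lambda})$ accumulates into an output error bounded by a constant multiple of the per-layer error; since there are finitely many adjacency matrices for fixed $n$, uniform control of the per-position error yields the $\ell^p$ bound, and choosing $\lambda$ large enough gives $\mathsf{d}_p(g,\bar g)\le\frac{\epsilon}3$. I expect the main obstacle to be precisely the uniform-gap estimate for the biased logits: one must verify that adding the relative-encoding bias does not collapse the margin that drives softmax toward hardmax. This is handled by observing that the bias takes only finitely many values and is absorbed into the very logit on which $\sigma_{\text{H},\gamma}$ already acts, so the margin is preserved and the scaling argument of \cite{yun2019transformers} applies without essential modification.
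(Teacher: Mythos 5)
Your overall route is the same as the paper's: the paper's proof of this proposition is essentially a pointer to Appendix B.2 of Yun et al.\ (2019) plus the observation that $\sigma_{\text{H},\gamma}\to\sigma_{\text{H}}$ as $\gamma\to 0$, and what you have written is an unpacking of exactly that argument (exact ReLU reconstruction of the three-piece activation, logit scaling so softmax tends to hardmax, Lipschitz error propagation), correctly noting that the additive attention bias enters the logits linearly and is therefore compatible with the scaling.

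There is one concrete slip worth fixing: you treat the $\gamma$ in $\sigma_{\text{H},\gamma}$ as a margin on the logits that "guarantees a uniform gap between the selected entry and the rest." But the paper defines the $\gamma$-hardmax by \emph{adding $\gamma$ to the non-zero entries of the hardmax output}, i.e.\ it is a perturbation in the output (probability) space, not in the logit space, so it cannot supply the gap your scaling argument needs. The uniform logit gap instead comes from the fact that, after the quantization in Step 1 and the finiteness of $\gC$, the biased logits range over a finite set and hence have a strictly positive minimum separation between the maximizer and the rest; that is what lets $\sigma[\lambda\cdot\text{logit}]$ converge uniformly to $\sigma_{\text{H}}[\text{logit}]$. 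You then still need the second, separate limit that the paper states explicitly, namely $\sigma_{\text{H},\gamma}\to\sigma_{\text{H}}$ as $\gamma\to 0$, to remove the output-space perturbation; your write-up conflates these two limits into one. With that correction the argument closes as you describe.
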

Following \citep{yun2019transformers}, along with three steps above, we prove that a single $S$-SEG-WL iteration $f$ can be arbitrarily approximated with a bias-GT network $g$. By stacking such bias-GT networks, we show that $S$-SEG-WL with any number of iterations can be approximated by $S$-bias-GT at any precision. We next provide proofs for the two propositions.
\end{proof}

\subsubsection{Proof for Proposition \ref{apro2}}
\label{P2}
\begin{proof}
We only need to notice that for any $\mA$,
$\sigma_{\text{H},\gamma}(\mA)\to \sigma_{\text{H}}(\mA)$ as $\gamma\to 0$. Then together with Appendix B.2 of \cite{yun2019transformers}, we can finish the proof.
\end{proof}

\subsubsection{Proof for Proposition \ref{apro1}}
\begin{proof}
We will prove this statement in five major steps:
\begin{enumerate}
    \item Given input $\mX$, a group of feed-forward layers in the modified Transformer network can quantize $\mX$ to an element $\mL$ on the grid $\sG_{\delta}:=\{0,\delta,\ldots,1-\delta\}^{d\times n}.$
    \item A group of additional feed-forward layers then scales $\mL$ to a different level, where for every $l_j:=\vu^{\top}\mL_{:,j}, $ $l_j\in\{1,\delta^{-1},\delta^{-2},\ldots,\delta^{-\delta^{-d}+1}\}$ holds. ($\vu=(1,\delta^{-1},\delta^{-2},\ldots,\delta^{-d+1})$.) 
    \item A group of biased self-attention layers perform global shift on $\mL$, such that for any $i$ and $j$, the shifted $l_i$ and $l_j$ are different if and only if their corresponding multisets of label-RSE tuples ($\ldblbrace(\vx_k,\bm{\alpha}_{i,k}):k=1,\ldots,n\rdblbrace$ for $l_i$) are different.
    \item Next, a group of self-attention layers map the shifted $\mL$ to the desirable \textbf{contextual mappings} $q(\mL)$. (defined in \cite{yun2019transformers})
    \item Finally, a group of feed-forward layers can map elements of the contextual embeddings $q(\mL)$ to the desirable values in the piece-wise constant function.
\end{enumerate}
Smiliar to Section 4 of \cite{yun2019transformers}, Proposition \ref{apro1} can be proved with five steps above, where the major difference here is in Step 1-3 we create \textbf{contextual mappings} for both node features and relative structural encodings. Next we explain the five steps in detail.

\paragraph{Step 1.} Since $\gX$ is bounded, we can assume without generality that $\gX\subset (0,1)^d$. Thus, according to Lemma 5 in \cite{yun2019transformers}, the input $\mX$ can be quantized to grid $\sG_{\delta}:=\{0,\delta,\ldots,1-\delta\}^{d\times n}.$ We still use $\vx_i$ to denote the quantized feature vector.

\paragraph{Step 2.} Before this step, we have $l_j\in[0:\delta:\delta^{-d+1}-\delta]$. Our goal in this step is to scale each $l_j$ to $\delta^{-\delta^{-1}l_j}$. For every entry $\mL_{:,j}$ in $\mL$, the scaling function is defined as
\begin{align}
    \mL_{:,j}\mapsto\mL_{:,j}+(\delta^{-\delta^{-1}\vu^\top\mL_{:,j}}-\vu^\top\mL_{:,j})\ve^{(1)},
\end{align}
We use a group of feed-forward layers to approximate this function, which is possible because Transformer has residual connections. Note that after this process, $\{1,\delta^{-1},\delta^{-2},\ldots,\delta^{-\delta^{-d}+1}\}$ contains all possible values for $l_j$. As our proof can have $\delta$ arbitrarily small, we assume $\delta^{-1}>n$. 

\paragraph{Step 3.} Since $\gC$ is finite we may assume $\gC=\{1,2,\ldots,c\}$, and let $\mW=\{\bm{\alpha}_{r,s}\}_{r,s=1,\ldots,n}$. We use one self-attention layer consists of $c$ attention heads to perform the desired global shift. We first define
\begin{align}
    &\phi_i(\mW) = \{\phi_i(\bm{\alpha}_{r,s})\}_{r,s=1,\ldots,n},\\
    &\text{where }
    \phi_i(x)=\begin{cases}
    1, \text{ if }x=i,\\
    0, \text{ else.}
    \end{cases}
\end{align}
Then, for $i=1,2,\ldots,c$, the $i$-th attention head is defined as
\begin{align}
    &\psi_i(\mZ)=\ve^{(1)}\vu^\top\mZ\sigma_{\text{H},\delta^{-p+1}/n!}(\phi_i(\mW))
    \label{eq_psi1}
\end{align}
where $p=-\delta^{-d}$. Noticing $\lim_{\delta\to 0}\delta^{-p+1}/n!=0$ and the fact in \ref{P2}, the selected $\sigma_{\text{H},\delta^{-p+1}/n!}$ is acceptable. This $\phi_i(x)$ function can be learned by embedding layers operated on the relative structural encodings. And the final attention layer is computed as
\begin{align}
    \Psi(\mZ)=\mZ+\sum_{i=1}^cn!\delta^{(3p+q)i}\psi_i(\mZ),
    \label{eq_psi2}
\end{align}
where $q$ satisfies $\delta^{q+1}\leq n!\leq\delta^q$. Note that $p$ and $q$ are both negative. For the convenience of further description, we define $\vu^\top\Psi(\mL)=[\bar l_1,\ldots,\bar l_n].$

\paragraph{Explanation on Step 2 and 3.} We aim to generate the \textbf{bijective column id mapping} for each $\ldblbrace(\vx_j,\bm{\alpha}_{i,j}):j=1,\ldots,n\rdblbrace$, while using only $\mX$ as feature input and the structural encodings are leveraged by shift operations in Step 2 and 3. We further prove this in Proposition \ref{apro1} below:
\begin{proposition}
For any $u,v\in\{1,\ldots,n\}$, $\bar l_u=\bar l_v$ if and only if $\ldblbrace(\vx_j,\bm{\alpha}_{u,j}):j=1,\ldots,n\rdblbrace=\ldblbrace(\vx_j,\bm{\alpha}_{v,j}):j=1,\ldots,n\rdblbrace$, and every $\bar l_u$ is bounded.
\label{apro1}
\end{proposition}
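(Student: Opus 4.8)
The plan is to prove that the global-shift layer $\Psi$ turns each quantized column id $l_u$ into a value $\bar l_u$ that is a \emph{bijective} function of the multiset $\ldblbrace(\vx_j,\bm{\alpha}_{u,j}):j=1,\dots,n\rdblbrace$; the two directions of the stated equivalence are then exactly injectivity and ``depends only on the multiset''. Write $S_i(u)=\{j:\bm{\alpha}_{u,j}=i\}$ for the key positions selected by head $i$, and $A_i(u)=\sum_{j\in S_i(u)}l_j$. Partitioning the pairs by their RSE value, the target multiset is equivalent data to the tuple $\big(\ldblbrace\vx_j:j\in S_i(u)\rdblbrace\big)_{i=1}^{c}$. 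Reading off \ref{eq_psi1}--\ref{eq_psi2}: the bias $\phi_i(\mW)$ with $\sigma_{\text{H},\delta^{-p+1}/n!}$ makes head $i$ attend to $S_i(u)$, producing the hardmax average of the selected ids together with a $\gamma$-offset term proportional to their sum $A_i(u)$; the factor $n!$ clears the $1/|S_i(u)|$ normalization to an integer, and $\delta^{(3p+q)i}$ assigns head $i$ its own magnitude scale. Thus $\bar l_u=l_u+\mathrm{shift}_u$, where $\mathrm{shift}_u$ is assembled from $A_1(u),\dots,A_c(u)$.

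First I would settle within-head injectivity. After Step 2 each id has the form $l_j=\delta^{-m(\vx_j)}$ for a fixed bijection $m$ from quantized features onto $\{0,\dots,\delta^{-d}-1\}$, so $A_i(u)=\sum_{m}c_m\,\delta^{-m}$, where $c_m$ is the multiplicity of the feature coded by $m$ inside $S_i(u)$. Because $c_m\le n<\delta^{-1}$ (the standing assumption $\delta^{-1}>n$), this is precisely the base-$\delta^{-1}$ expansion of $A_i(u)$ with digits $c_m$; hence $A_i(u)$ and the multiset $\ldblbrace\vx_j:j\in S_i(u)\rdblbrace$ determine each other, and $|S_i(u)|=\sum_m c_m$ is read off too. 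This is the step where the no-carry condition $\delta^{-1}>n$ is essential.

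Next I would establish cross-head separation. Each $A_i(u)$ occupies $\delta$-exponents within $[-(\delta^{-d}-1),0]$, and the factors $n!\,\delta^{(3p+q)i}$ (with $p=-\delta^{-d}$ and $\delta^{q+1}\le n!\le\delta^{q}$) translate head $i$ into a window around exponent $(3p+q)i$. Since the inter-window gap $|3p+q|$ exceeds a single window's width --- the factor $3$ in $3p$ leaving room for the average and $\gamma$-sum sub-terms of each head --- the windows are pairwise disjoint, so $\mathrm{shift}_u$ is injective in $(A_1(u),\dots,A_c(u))$ (note $|S_i(u)|$ is itself a function of $A_i(u)$) and depends only on the tuple of per-RSE multisets. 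The residual $+l_u$ has magnitude at most $\delta^{-(\delta^{-d}-1)}$, far below every shift window, so it sits in the least-significant band and never disturbs decoding of $\mathrm{shift}_u$. For the direction ``equal multiset $\Rightarrow$ equal $\bar l_u$'' I still need $l_u$ to be a function of the multiset, and here regularity enters: $\bm{\alpha}_{u,u}\neq\bm{\alpha}_{u,j}$ for $j\neq u$ forces $S_{\bm{\alpha}_{u,u}}(u)=\{u\}$, so the self-pair is identifiable inside the multiset and $\vx_u$ (hence $l_u$) is recovered from it.

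Combining the pieces: $\bar l_u=\bar l_v$ gives $\mathrm{shift}_u=\mathrm{shift}_v$ on the top bands, hence equal $A_i$ for all $i$, hence equal per-RSE multisets, hence equal multisets of pairs; conversely, equal multisets yield equal $\mathrm{shift}$ and equal $l_u$, so $\bar l_u=\bar l_v$. For boundedness, $\bar l_u$ is a finite sum whose leading $\delta$-exponent is controlled by $(3p+q)c-(\delta^{-d}-1)$, a quantity depending only on the fixed parameters $n,c,d,\delta$ and not on the input graph, so $|\bar l_u|$ is uniformly bounded. I expect the main obstacle to be precisely this two-level bookkeeping: checking that the engineered exponents $p=-\delta^{-d}$, $q$, the factor $3$, the multiplier $n!$ and the offset $\gamma=\delta^{-p+1}/n!$ jointly rule out carrying both among the $\delta^{-1}$-digits inside a head and among the $c$ head-windows, since this is what upgrades the easy implication into a genuine positional encoding of the tuple of multisets.
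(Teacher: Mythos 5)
Your proposal follows essentially the same route as the paper's proof: partitioning the key positions by RSE value, using the Step-2 rescaling to turn each per-head sum $A_i(u)$ into a carry-free base-$\delta^{-1}$ encoding of the feature multiset (via $n<\delta^{-1}$), separating the $c$ heads into disjoint magnitude windows through the $n!\,\delta^{(3p+q)i}$ factors, and parking the residual $l_u$ in the least-significant band — this is exactly the content of the paper's two ``minimal distance'' lemmas. Your explicit use of regularity to recover $\vx_u$ from the multiset in the ``equal multisets $\Rightarrow$ equal $\bar l_u$'' direction is a small refinement the paper leaves implicit, but it does not change the argument.
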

\begin{proof}
For each node $u$, we define $Y(u,i)=\ldblbrace\vx_v:\bm{\alpha}_{u,v}=i\rdblbrace$ and $S(u,i)=\sum_{\bm{\alpha}_{u,v}=i}l_v$, where $l_v$ is the scaled $\vu^\top\mL_{:,v}$ after Step 2.

Let the first row of $\psi_i(\mZ)$ be $[r_i(1),r_i(2),\ldots,r_i(n)].$ We first show that $r_i(u)=r_i(v)$ if and only if $Y(u,i)=Y(v,i)$. Due to the ingenious construction of $\vu$ in \cite{yun2019transformers}, $l_j$ has been an injective descriptor of $\vx_j$ before the scaling in Step 2. Since the scaling in Step 2 is injective, the scaled $l_j$ also becomes an injective descriptor of $\vx_j$. According to our scaling strategy, the scaled $l_j$ can be viewed as a $p$-digit one-hot representation of $\vx_j$. Noticing $\delta^{-1}>n$, $S(u,j)$, as the summation of these scaled $l_j$, also becomes a unique descriptor of $Y(u,i)$ and $1\leq S(u,j)\leq\delta^p$.

\begin{definition}
Suppose the set of possible values of $a$ is $P$. Then for any $u,v\in P$, if $|u-v|$ is always an integer multiple of $s$, then we call $s$ the minimal distance between any unique choices of $a$.
\end{definition}
Accordingly, the minimal distance between any unique choices of $S(u,j)$ is $1$ because the the minimal distance between any scaled $l_j$ is $1$.

Next, before discussing $\psi_i$ in Equation (\ref{eq_psi1}), (\ref{eq_psi2}) and $r_i$, we first present a lamma:

\begin{lemma}
For real numbers $a,b$, the minimal distance between any unique choices of $b$ is $s$, and $a\leq m$ holds. $a+b$ becomes a unique descriptor of $a$ if $2m<s$.
\label{lemma1}
\end{lemma}
\begin{proof}
Suppose we have $a_1+b_1=a_2+b_2$ and $a_1\neq a_2$. Then we have
\begin{align}
    |a_1-a_2|=|b_1-b_2|,
\end{align}
and $|b_1-b_2|\geq s$, $|a_1-a_2|\leq 2m$. Then the proof is completed by contradiction.
\end{proof}

Given the definition of $\psi_i$ (please refer to Appendix B.5 in \citep{yun2019transformers} for more details on the selective shift operation, which is the basis for the construction of $\psi_i$), we have 
\begin{align}
    r_i(u)=(\frac 1 k+\frac{\delta^{-p+1}}{n!})S(u,i),
\end{align}
where $k=|Y(u,i)|$. According to the range of scaled $l_j$, we have
\begin{align}
    1\leq \frac 1 k S(u,i)\leq\delta^{p+1},\\
    \frac{\delta^{-p+1}}{n!}\leq\frac{\delta^{-p+1}}{n!}S(u,i)\leq \frac{\delta}{n!}.
\end{align}

It is easy to infer that the minimal distance between any unique choices of $\frac 1 k S(u,i)$ is an integer multiple of $\frac{1}{n!}$, and we have
\begin{align}
    2\cdot\frac{\delta}{n!}<2\cdot\frac{1}{2n!}=\frac{1}{n!}.
\end{align}
According to Lemma \ref{lemma1}, $r_i(u)$ is a unique descriptor of $\frac{\delta^{-p+1}}{n!}S(u,i)$, then it is also a unique descriptor of $Y(u,j)$. Now we have $1\leq r_i(u)\leq \delta^p$ and the minimal distance between any unique choices of $r_i(u)$ is $\frac{\delta^{-p+1}}{n!}$. The following Lemma is applied to the construction of $\Psi$:

\begin{lemma}
For $k$ positive real numbers $a_1,a_2,...,a_k$, $\sum_{i=1}^{k} a_i$ is a unique descriptor of $(a_1,a_2,\ldots,a_k)$ if:
\begin{enumerate}
    \item For any $a_i$, there exists $r_i$ such that $a_i\leq r_i$,
    \item Let $s(i)$ be the minimal distance between any unique choices of $a_i$, then $s(i)>\sum_{j=1}^{i-1}r_j$ holds for any $i$.
\end{enumerate}
\end{lemma}

\begin{proof}
Assuming that there exists two groups of positive real numbers $\{a_i^{(1)}\}_{i=1}^k$ and $\{a_i^{(2)}\}_{i=1}^k$ which both satisfy conditions above and $\sum_{i=1}^k a_i^{(1)}=\sum_{i=1}^k a_i^{(2)}$. Besides, the two group of numbers are not totally equal correspondingly, which means there must exist one $l\in\{1,2,...,c\}$ such that $a_l^{(1)}\neq a_l^{(2)}$ and for any $j>l$, $a_j^{(1)}=a_j^{(2)}$ holds.

According to the second condition, $|a_l^{(1)}-a_l^{(2)}|>\sum_{j=1}^{l-1}r_j$ holds. Since $\sum_{i=1}^k a_i^{(1)}=\sum_{i=1}^k a_i^{(2)}$, then
\begin{align}
    |a_l^{(1)}-a_l^{(2)}|&=|\sum_{j=1}^{l-1}a_j^{(1)}-\sum_{j=1}^{l-1}a_j^{(2)}|\\
    &=|\sum_{j=1}^{l-1}(a_j^{(1)}-a_j^{(2)})|\\
    &\leq\sum_{j=1}^{l-1}|a_j^{(1)}-a_j^{(2)}|\\
    &\leq\sum_{j=1}^{l-1}r_j,
\end{align}
where the proof is completed by contradiction.
\end{proof}

Finally we consider the definition of $\Psi$. We have
\begin{align}
    \bar l_u=l_u+\sum_{i=1}^c n!\delta^{(3p+q)i}r_i(u),
\end{align}
and it can be concluded that
\begin{align}
    &1\leq l_u\leq \delta^{p+1},\\
    &\delta^{3ip+(i+1)q+1}\leq n!\delta^{(3p+q)i}r_i(u)\leq \delta^{(3i+1)p+(i+1)q},\text{ for }i=1,\ldots,c.
\end{align}
And the minimal distance between any unique choices of $n!\delta^{(3p+q)i}r_i(u)$ is 
\begin{align}
    s(i)=\delta^{(3i-1)p+iq+1}.
\end{align}
If $i=1$, then $s(1)=\delta^{2p+q+1}>\delta^{p+1}$; if $i=j+1$, then trivially we have
\begin{align}
    s(j+1)=\delta^{(3j+2)p+(j+1)q+1}>\sum_{s=1}^{j}\delta^{(3s+1)p+(s+1)q}.
\end{align}

Thus, according to the lemma above, $\bar l_u$ also becomes a unique descriptor of $\ldblbrace Y(u,i):i=1,\ldots,n\rdblbrace$, then it must be a unique descriptor of $\ldblbrace(\vx_u,\bm{\alpha}_{u,j}):j=1,\ldots,n\rdblbrace$. We also have $\bar l_u$ bounded as $\bar l_u<\delta^{(3c+1)p+(c+1)q-1}$, which completes the proof. \end{proof}

\paragraph{Step 4.} After the previous steps, $\bar l_u$ is the unique id for $\ldblbrace(\vx_j,\bm{\alpha}_{u,j}):j=1,\ldots,n\rdblbrace$, and we have $\bar l_u\in[0:\delta:\delta^{(3c+1)p+(c+1)q-1}-\delta].$ It can be observed that if we define $d'=(3c+1)p+(c+1)q-1$ and treat $d'$ as the "new" $d$, we can apply exactly the same methods in Appendix B.5 of \cite{yun2019transformers} to employ multiple selective shift operations and generate contextual embeddings for $\gH=[\ldblbrace(\vx_j,\bm{\alpha}_{i,j}):j=1,\ldots,n\rdblbrace]_{i=1,\ldots,n}$. Note that since we assume $\gX\subset (0,1)^{d\times n}$, only Category 1 and 2 (Appendix B.5 of \cite{yun2019transformers}) need to be considered.

\paragraph{Step 5.} Now with contextual embeddings $q(\gH)$, we can use methods in Appendix B.6 of \cite{yun2019transformers} to map every mapping values to the desired output computed by $\bar f$, which completes the proof.
\end{proof}

\subsection{Proof for Theorem \ref{thm3}}
\begin{proof}
Let the label mappings generated by $S'$-SEG-WL and $S$-SEG-WL at iteration $t$ be $g'_t$ and $g_t$ respectively. We denote the conditions in Equation \ref{thm3eq1} and \ref{thm3eq2} as $f_A=p_A(f_A')$ and $f_R=p_R(f_R')$. For graphs $G_v=(V_v,E_v)$ and $G_u=(V_u,E_u)$ ($G_v$ and $G_u$ may be the same graph), we first show that for any node $v\in V_v$ and $u\in V_u$ at iteration $t$, if $S'$-SEG-WL generates $g_t'(v)=g_t'(u)$, then $S$-SEG-WL also gets $g_t(v)=g_t(u)$. For $t=0$ this holds because if $g_0'(v)=g_0'(u)$ then $v$ and $u$ must have $h_0(v)=h_0(u)$ and $f_A'(v,G_v)=f_A'(u,G_u)$. Since $f_A=p_A(f_A')$, it means that $f_A(v,G_v)=f_A(u,G_u)$, which leads to $g_0(v)=g_0(u)$. Suppose this condition holds for iteration $0,1,\ldots,t$ and $g_{t+1}'(v)=g_{t+1}'(u)$. From the injectiveness of function $\Phi$, we have
\begin{align}
    \ldblbrace (g_t'(r),f_R'(v,r,G_v)):r\in V_v\rdblbrace=\ldblbrace (g_t'(r),f_R'(u,r,G_u)):r\in V_u\rdblbrace.
\end{align}

If two finite multisets are identical, then the elements in the two multisets can be matched in pairs. The condition $f_R=p_R(f_R')$ implies that for any $r,s$, $f_R'(v,r,G_v)=f_R'(u,s,G_u)\implies f_R(v,r,G_v)=f_R(u,s,G_u)$. Together with the assumption that $g_t'(v)=g_t'(u)$ implies $g_t(v)=g_t(u)$, we can conclude that
\begin{align}
    &(g_t'(r),f_R'(v,r,G_v))=(g_t'(s),f_R'(u,s,G_u))\implies\\
    &(g_t(r),f_R(v,r,G_v))=(g_t(s),f_R(u,s,G_u)).
\end{align}
Therefore, we have
\begin{align}
    \ldblbrace (g_t(r),f_R(v,r,G_v)):r\in V_v\rdblbrace=\ldblbrace (g_t(r),f_R(u,r,G_u)):r\in V_u\rdblbrace,
\end{align}
which directly leads to $g_{t+1}(v)=g_{t+1}(u)$. Then the proposition above is proved by induction. Now that for any iteration $t$ we have $g_t'(v)=g_t'(u)\implies g_t(v)=g_t(u)$, indicating that a mapping $\psi_t$ exists such that for any node $v$, $g_t(v)=\psi_t(g_t'(v))$.

Now consider two graphs $G_1$ and $G_2$ where $S$-SEG-WL decides them as non-isomorphic after $t$ iterations, then the multiset of all updated node labels $\ldblbrace g_t(v):v\in V\rdblbrace$ must be different for two graphs. Since $\ldblbrace g_t(v):v\in V\rdblbrace=\ldblbrace \psi_t(g_t'(v)):v\in V\rdblbrace$, $\ldblbrace g_t'(v):v\in V\rdblbrace$ must also be different for two graphs or we will reach a contradiction, which suggests that $S'$-SEG-WL distinguishes $G_1$ and $G_2$ after $t$ iterations.
\end{proof}

\subsection{Proof for Theorem \ref{thm4}}
\begin{proof}
The formal definition for WL test is presented in Appendix \ref{asec2}. Here we denote $N^+(v)=N(v)\cup\{v\}$ as the ego subgraph of node $v$. For label update of node $v$, the values of $\textit{Neighbor}_R$ divides the node set $V$ into three parts: the central node $v$, the neighborhood nodes $N(v)$ and nodes out of $v$'s ego subgraph $V\setminus N^+(v)$. Thus, the node label update function controlled by $\textit{Neighbor}_R$ can be viewed as
\begin{align}
    g_t(v)=\Phi(g_t(v),\ldblbrace g_t(r):r\in N(v)\rdblbrace,\ldblbrace g_t(s):s\in V\setminus N^+(v)\rdblbrace).
\end{align}

For the first part of the proof, we prove that \textit{Neighbor}-SEG-WL can distinguish any non-isomorphic graphs distinguishable by WL test. We first show that for any node $v,u$ at iteration $t$, if \textit{Neighbor}-SEG-WL generates $g_t(v)=g_t(u)$, then WL will obtain $w_t(v)=w_t(u)$. For $t=0$ this obviously holds. Suppose this condition holds for iteration $0,1,\ldots,t$ and $g_{t+1}(v)=g_{t+1}(u)$. From the injectiveness of function $\Phi$, we have
\begin{align}
    &(g_t(v),\ldblbrace g_t(r):r\in N(v)\rdblbrace,\ldblbrace g_t(s):s\in V_v\setminus N^+(v)\rdblbrace)\\
    =&(g_t(u),\ldblbrace g_t(r):r\in N(u)\rdblbrace,\ldblbrace g_t(s):s\in V_u\setminus N^+(u)\rdblbrace),
\end{align}
where $V_v$ is the node set of graph that $v$ belongs to, which is the same for $V_u$.
Slicing the two equivalent tuples above will also get equivalent results, as
\begin{align}
    (g_t(v),\ldblbrace g_t(r):r\in N(v)\rdblbrace)=(g_t(u),\ldblbrace g_t(r):r\in N(u)\rdblbrace).
\end{align}

Therefore we have $w_{t+1}(v)=w_{t+1}(u)$, and the proposition above is proved by induction. Now that for any iteration $t$ we have $g_t(v)=g_t(u)\implies w_t(v)=w_t(u)$, indicating that a mapping $\psi_t$ exists such that for any node $v$, $w_t(v)=\psi_t(g_t(v))$.

Consider two graphs $G_1$ and $G_2$ where WL decides them as non-isomorphic after $t$ iterations, then the multiset of all updated node labels $\ldblbrace w_t(v):v\in V\rdblbrace$ must be different for two graphs. Since $\ldblbrace w_t(v):v\in V\rdblbrace=\ldblbrace \psi_t(g_t(v)):v\in V\rdblbrace$, $\ldblbrace g_t(v):v\in V\rdblbrace$ must also be different for two graphs, which suggests that \textit{Neighbor}-SEG-WL can distinguish $G_1$ and $G_2$ after $t$ iterations.

In the second part of the proof we only need to show that any non-isomorphic graphs indistinguishable by WL test can not be distinguished by \textit{Neighbor}-SEG-WL. Suppose there are two graphs $G_1=(V_1,E_1)$ and $G_2=(V_2,E_2)$ that WL test cannot distinguish and the iteration converges at iteration $t$. Then for any $v,u\in V_1$, $w_t(v)=w_t(u)$ implies $w_{t+1}(v)=w_{t+1}(u)$ (the same for $V_2$), and there exists a bijective mapping $\theta:V_1\to V_2$ such that for any $v\in V_1$, $w_t(v)=w_t(\theta(v))$ and $w_{t+1}(v)=w_{t+1}(\theta(v))$. Since $w_t$ can be viewed as an absolute structural encoding function, we denote $\textit{Neighbor}^+=(w_t,\textit{Neighbor}_R)$ and $\textit{Neighbor}^+$ must be more powerful than \textit{Neighbor} according to Theorem \ref{thm3} because $\textit{Neighbor}^+\succeq\textit{Neighbor}$. Let $g^+$ be the label mapping generated by $\textit{Neighbor}^+$-SEG-WL on $G_1$ and $G_2$, and we may assume without generality that $g^+_0=w_t$. For node $v\in V_1$, its first updated label is computed by
\begin{align}
    g_1^+(v)=\Phi(w_t(v),\ldblbrace w_t(r):r\in N(v)\rdblbrace,\ldblbrace w_t(s):s\in V_1\setminus N^+(v)\rdblbrace).
\end{align}
Consider $v,u\in V_1$ where $w_t(v)=w_t(u)$. According to the definition of WL test, $w_{t+1}(v)=w_{t+1}(u)$ implies $\ldblbrace w_t(r):r\in N(v)\rdblbrace=\ldblbrace w_t(r):r\in N(u)\rdblbrace$. And because $v$ and $u$ belongs to the same graph, we also have $\ldblbrace w_t(s):s\in V_1\setminus N^+(v)\rdblbrace=\ldblbrace w_t(s):s\in V_1\setminus N^+(u)\rdblbrace$. This results in $g_1^+(v)=g_1^+(u)$.

Next we consider $v\in V_1$ and $\theta(v)\in V_2$ where $w_t(v)=w_t(\theta(v))$ and $w_{t+1}(v)=w_{t+1}(\theta(v))$. According to the definition of WL test, $w_{t+1}(v)=w_{t+1}(\theta(v))$ implies $\ldblbrace w_t(r):r\in N(v)\rdblbrace=\ldblbrace w_t(r):r\in N(\theta(v))\rdblbrace$. Since WL test can not distinguish $G_1$ and $G_2$, we have $\ldblbrace w_t(s):s\in V_1\rdblbrace=\ldblbrace w_t(s):s\in V_2\rdblbrace$, indicating that $\ldblbrace w_t(s):s\in V_1\setminus N^+(v)\rdblbrace=\ldblbrace w_t(s):s\in V_2\setminus N^+(\theta(v))\rdblbrace$, which shows $g_1^+(v)=g_1^+(\theta(v))$.

Together with statements above, for any $v,u\in V_1$ with $g_0^+(v)=g_0^+(u)$, we have $g_1^+(v)=g_1^+(u)$ and $g_1^+(v)=g_1^+(\theta(v))$. As $\theta$ is a bijective mapping, we can conclude that a mapping $\mu$ exists such that for any $v\in V_1\cup V_2$, $g_1^+(v)=\mu(g_0^+(v))$, which tells us that $\ldblbrace g_1^+(s):s\in V_1\rdblbrace=\ldblbrace g_1^+(s):s\in V_2\rdblbrace$ and $\textit{Neighbor}^+$-SEG-WL has not update any useful information in its first iteration. Therefore, we can see that $\ldblbrace g_t^+(s):s\in V_1\rdblbrace=\ldblbrace g_t^+(s):s\in V_2\rdblbrace$ for any $t$ by induction, then $\textit{Neighbor}^+$-SEG-WL can not distinguish $G_1$ and $G_2$. Because $\textit{Neighbor}^+$-SEG-WL is more powerful than $\textit{Neighbor}$-SEG-WL, $\textit{Neighbor}$-SEG-WL also can not distinguish the two graphs, meaning that any non-isomorphic graphs indistinguishable by WL test can not be distinguished by $\textit{Neighbor}$-SEG-WL, which completes the proof.
\end{proof}

\subsection{Proof for Theorem \ref{thm5}}
\begin{proof}
We can easily show that \textit{SPD}-SEG-WL is more powerful than \textit{Neighbor}-SEG-WL using Theorem \ref{thm3} since two nodes are linked if there shortest path distance is 1. And according to Theorem \ref{thm4}, \textit{Neighbor}-SEG-WL is as powerful as WL, then \textit{SPD}-SEG-WL is more powerful than WL.

Figure \ref{afig1} below shows a pair of graphs that can be distinguished by \textit{SPD}-SEG-WL but not WL, which completes the proof.
\end{proof}

\begin{figure}[h]
\centering
\includegraphics[width=0.5\linewidth]{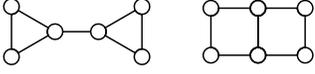}
\caption{Two graphs that can be distinguished by \textit{SPD}-SEG-WL but not WL.}
\label{afig1}
\end{figure}

\subsection{Proof for Proposition \ref{mpro1}}
\begin{proof}
Let $C_l$ denote the cycle graph of length $l$. Then consider two graphs $G_1$ and $G_2$, where $G_1$ consists of $2k+4$ identical $C_{2k+3}$ graphs, and $G_2$ consists of $2k+3$ identical $C_{2k+4}$ graphs. $G_1$ and $G_2$ have the same number of nodes, and the induced $k$-hop neighborhood of any node in either of the two graphs is simply a path of length $2k+1$. As a result, for structural encoding scheme $S$ with $k$-hop receptive field, $S$-SEG-WL generates identical labels for every node in the two graphs, making $G_1$ and $G_2$ indistinguishable for $S$-SEG-WL. However, in $G_2$ there exists shortest paths of length $k+2$ while $G_1$ not, so \textit{SPD}-SEG-WL can distinguish the two graphs.
\end{proof}

\subsection{Proof for Theorem \ref{thm6}}
\begin{proof}
Considering $\textit{SPD}_R$ is the first dimension of $\textit{SPIS}_R$, we have $\textit{SPIS}\succeq\textit{SPD}$ and we can prove $\textit{SPIS}$-SEG-WL is more powerful than $\textit{SPD}$-SEG-WL according to Theorem \ref{thm3}.

Figure \ref{afig2} below shows a pair of graphs that can be distinguished by \textit{SPIS}-SEG-WL but not \textit{SPD}-SEG-WL. It is trivial to verify that \textit{SPD}-SEG-WL can not distinguish them. For \textit{SPIS}-SEG-WL, to understand this, Figure \ref{afig2} colors examples of SPIS between non-adjacent nodes in the two graphs, where the nodes at two endpoints are colored as red. In the first graph, every SPIS between non-adjacent nodes has 3 nodes, but in the second graph there exists SPIS between non-adjacent nodes that has 4 nodes, so \textit{SPIS}-SEG-WL can distinguish them.
\end{proof}

\begin{figure}[h]
\centering
\includegraphics[width=0.5\linewidth]{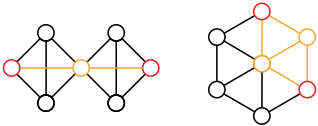}
\caption{Two graphs that can be distinguished by \textit{SPIS}-SEG-WL but not \textit{SPD}-SEG-WL.}
\label{afig2}
\end{figure}

\subsection{Proof for Proposition \ref{mpro2}}
\begin{proof}
It is trivial to verify that regular graphs with different parameters can be distinguished by WL, so we focus on strongly regular graphs with the same $n$ and $k$ but different $\lambda$ and $\mu$. For $\text{SRG}(n,k,\lambda,\mu)$, since every non-adjacent pair of nodes has $\mu$ neighbors in common, the SPIS between evry non-adjacent pair of nodes will have $\mu+2$ nodes, which implies that \textit{SPIS}-SEG-WL can distinguish strongly regular graphs with different $n,k,\mu$. Besides, the four parameters of strongly regular graphs are not independent, they satisfy
\begin{align}
    \lambda=k-1-\frac \mu k(n-k-1),
\end{align}
so \textit{SPIS}-SEG-WL can distinguish strongly regular graphs with different parameters.
\end{proof}

\subsection{Proof for Proposition \ref{mpro3}}
\begin{proof}
Figure \ref{afig3} below shows a pair of graphs that can be distinguished by \textit{SPIS}-SEG-WL but not 3-WL. The two graphs, named as the Shrikhande graph and the Rook's $4\times 4$ graph, are both $\text{SRG}(16,6,2,2)$ and the most popular example for indistinguishability with 3-WL \citep{arvind2020weisfeiler}. To show they can be distinguished by \textit{SPIS}-SEG-WL, Figure \ref{afig3} also colors examples of SPIS between non-adjacent nodes, where the nodes at two endpoints are colored as red. In the second graph (the Shrikhande graph), one can verify that every SPIS between non-adjacent nodes has 4 nodes and 4 edges, but in the first graph (the Rook's $4\times 4$ graph) there exists SPIS between non-adjacent nodes that has 5 edges, making \textit{SPIS}-SEG-WL capable of distinguishing them.
\end{proof}

\begin{figure}[h]
\centering
\includegraphics[width=0.8\linewidth]{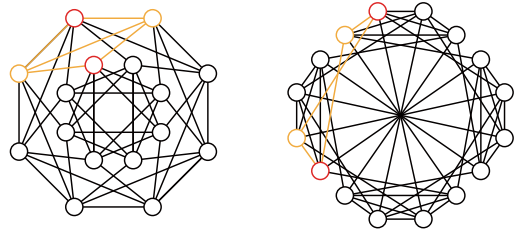}
\caption{Two graphs (the Shrikhande graph and the Rook's $4\times 4$ graph) that can be distinguished by \textit{SPIS}-SEG-WL but not 3-WL.}
\label{afig3}
\end{figure}

\section{More Discussions}
\label{asec3}
\paragraph{Isomorphic Structural Encodings.} For any structural encoding function, we say $f_A$ is \textit{isomorphic} to $f_A'$ if there exists a bijective mapping $p$ such that $f_A=p(f_A')$, which is the same for $f_R$. It is trivial to conclude that isomorphic structural encodings have the same expressive power.

\paragraph{Reduction of Absolute Structural Encodings.} It can be observed that for structural encoding scheme $S=(f_A,f_R)$, $f_A$ and $f_R$ may express overlapping information and can be reduced to form a more concise representation. Since the principal phase of SEG-WL test is the label update controlled by relative structural encodings, we focus on the case where $f_A$ can be deduced from $f_R$, and we can reduce $f_A$ to eliminate redundant information, which is defined as

\begin{definition}[Reduction of Absolute Structural Encodings]
A structural encoding scheme $S=(f_A,f_R)$ can be reduced to $S'=(\textit{id}_A,\textit{f}_R)$ if there exists mapping $p$ such that for any $G=(V,E)$ and $v\in V$ we have
\begin{align}
    f_A(v,G)=p(\ldblbrace f_R(v,u,G):u\in V\rdblbrace)
\end{align}
\end{definition}

\begin{proposition}
If structural encoding scheme $S$ can be reduced to $S'$, then two graphs can be distinguished by $S$-SEG-WL if and only if they are distinguishable by $S'$-SEG-WL.
\label{apro8}
\end{proposition}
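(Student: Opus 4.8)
The plan is to prove the two implications separately, getting one of them essentially for free from Theorem~\ref{thm3} and establishing the other by a direct inductive refinement argument whose engine is the reduction hypothesis. Throughout, write $g_t$ for the labels produced by $S$-SEG-WL and $g'_t$ for those produced by $S'$-SEG-WL, where $S=(f_A,f_R)$ and $S'=(\textit{id}_A,f_R)$.

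First I would dispatch the ``if'' direction ($S'$ distinguishes $\Rightarrow$ $S$ distinguishes) by observing that $S\succeq S'$. Indeed, taking $p_A$ to be the constant map to $0$ we get $\textit{id}_A(v,G)=0=p_A(f_A(v,G))$, and taking $p_R$ to be the identity we get $f_R=p_R(f_R)$, so the defining conditions \eqref{thm3eq1}--\eqref{thm3eq2} of $S\succeq S'$ hold. Theorem~\ref{thm3} then shows $S$-SEG-WL is at least as expressive as $S'$-SEG-WL, which is exactly this implication.

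The substance is the ``only if'' direction ($S$ distinguishes $\Rightarrow$ $S'$ distinguishes), and the intuition is that $f_A$ is \emph{redundant} in the presence of $f_R$: the reduction hypothesis says each node can reconstruct $f_A(v,G)=p(\ldblbrace f_R(v,u,G):u\in V\rdblbrace)$ from precisely the multiset of relative encodings that a single $S'$-SEG-WL iteration already aggregates, so $S'$ needs only one extra round to ``catch up'' with the head start $S$ gets from $f_A$. Concretely, I would prove by induction on $t$ that $g'_{t+1}(v)=g'_{t+1}(u)$ implies $g_t(v)=g_t(u)$ for all nodes $v,u$, i.e.\ that $g_t$ factors through $g'_{t+1}$ via some map $\psi_t$. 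In the inductive step, $g'_{t+1}(v)=g'_{t+1}(u)$ together with injectivity of $\Phi$ gives the multiset identity $\ldblbrace(g'_t(w),f_R(v,w,G)):w\in V\rdblbrace=\ldblbrace(g'_t(w),f_R(u,w,G)):w\in V\rdblbrace$; applying the previous map $\psi_{t-1}$ to the first coordinates reproduces the $S$-aggregate and yields $g_t(v)=g_t(u)$. Granting this, if $S$-SEG-WL separates $G_1,G_2$ after $T$ rounds, so the multisets $\ldblbrace g_T(v)\rdblbrace$ differ, then since $g_T=\psi_T\circ g'_{T+1}$ the multisets $\ldblbrace g'_{T+1}(v)\rdblbrace$ must differ as well, and $S'$-SEG-WL separates the graphs after $T+1$ rounds.

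The hard part will be the base case, where one must recover $g_0$ rather than merely re-aggregate. From $g'_1(v)=g'_1(u)$ the second-coordinate projection of the matched multisets gives $\ldblbrace f_R(v,w,G):w\rdblbrace=\ldblbrace f_R(u,w,G):w\rdblbrace$, hence $f_A(v,G)=f_A(u,G)$ after applying $p$ --- this is the clean conceptual heart and uses the reduction hypothesis directly. What is delicate is extracting $g'_0(v)=g'_0(u)$ (equivalently $h_0(v)=h_0(u)$), which requires identifying the self-term of the multiset, i.e.\ the pair carrying the self relative-encoding $f_R(v,v,G)$. This identification is exactly the regularity property $f_R(v,v,G)\neq f_R(v,u,G)$ already assumed for Theorems~\ref{thm2}; it also guarantees the $S'$-coloring only refines across a round, so no information is lost by the extra iteration. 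I would therefore either invoke regularity of $S$, or note that in the degenerate non-regular case both tests collapse the self-information in the same way so that the conclusion is unaffected; making this corner precise is the one subtle point, whereas the $f_A$-recovery step that drives the whole argument is immediate.
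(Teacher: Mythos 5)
Your proposal is correct and follows essentially the same route as the paper's proof: one direction from $S\succeq S'$ via Theorem~\ref{thm3}, and the other via the lagged induction $g'_{t+1}(v)=g'_{t+1}(u)\implies g_t(v)=g_t(u)$, whose base case recovers $f_A$ by projecting the matched multisets onto their relative-encoding coordinates and applying the reduction map $p$. The base-case subtlety you flag --- that one must also recover $h_0(v)=h_0(u)$ by identifying the self-term, which needs regularity --- is genuine, but the paper's own proof silently elides it (it concludes $g_0(v)=g_0(u)$ directly from $f_A(v,G_v)=f_A(u,G_u)$), so on this point you are if anything more careful than the source.
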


\begin{proof}
According to Theorem \ref{thm3}, $S$-SEG-WL is more powerful than $S'$-SEG-WL, thus we only need to prove that any graphs distinguishable by $S$-SEG-WL can be distinguished by $S'$-SEG-WL. Let the label mappings generated by $S'$-SEG-WL and $S$-SEG-WL at iteration $t$ be $g'_t$ and $g_t$ respectively. For graphs $G_v=(V_v,E_v)$ and $G_u=(V_u,E_u)$ ($G_v$ and $G_u$ may be the same graph), we first show that for any node $v\in V_v$ and $u\in V_u$ at iteration $t$, if $S'$-SEG-WL generates $g_{t+1}(v)=g_{t+1}(u)$, then $S$-SEG-WL also gets $g_t(v)=g_t(u)$. For $t=0$, from the injectiveness of $\Phi$ we have 
\begin{align}
    \ldblbrace (g_0'(r),f_R'(v,r,G_v)):r\in V_v\rdblbrace=\ldblbrace (g_0'(r),f_R'(u,r,G_u)):r\in V_u\rdblbrace.
\end{align}
Accordingly, we have
\begin{align}
    \ldblbrace f_R'(v,r,G_v):r\in V_v\rdblbrace=\ldblbrace f_R'(u,r,G_u):r\in V_u\rdblbrace,
\end{align}
which directly leads to $f_A(v,G_v)=f_A(u,G_u)$. According to the definition of $S'$, we have $g_0(v)=g_0(u)$. Suppose this condition holds for iteration $0,\ldots,t$ and $g_{t+1}'(v)=g_{t+1}'(u)$. From the injectiveness of function $\Phi$, we have
\begin{align}
    \ldblbrace (g_t'(r),f_R(v,r,G_v)):r\in V_v\rdblbrace=\ldblbrace (g_t'(r),f_R(u,r,G_u)):r\in V_u\rdblbrace.
\end{align}
According to the assumption that $g_t'(v)=g_t'(u)$ implies $g_{t-1}(v)=g_{t-1}(u)$, we can infer that
\begin{align}
    \ldblbrace (g_{t-1}(r),f_R(v,r,G_v)):r\in V_v\rdblbrace=\ldblbrace (g_{t-1}(r),f_R(u,r,G_u)):r\in V_u\rdblbrace,
\end{align}
which directly leads to $g_t(v)=g_t(u)$. Then the proposition above is proved by induction. Now that for any iteration $t$ we have $g_{t+1}'(v)=g_{t+1}'(u)\implies g_t(v)=g_t(u)$, indicating that a mapping $\psi_t$ exists such that for any node $v$, $g_t(v)=\psi_t(g_{t+1}'(v))$.

Now consider two graphs $G_1$ and $G_2$ where $S$-SEG-WL decides them as non-isomorphic after $t$ iterations, then the multiset of all updated node labels $\ldblbrace g_t(v):v\in V\rdblbrace$ must be different for two graphs. Since $\ldblbrace g_t(v):v\in V\rdblbrace=\ldblbrace \psi_t(g_{t+1}'(v)):v\in V\rdblbrace$, $\ldblbrace g_{t+1}'(v):v\in V\rdblbrace$ must also be different for two graphs or we will reach a contradiction, which suggests that $S'$-SEG-WL distinguishes $G_1$ and $G_2$ after $t+1$ iterations.
\end{proof}

The proposition above guarantees that the reduction of redundant encodings will not influence the expressive power of corresponding SEG-WL test. For example, since the degree of nodes can be obtained by counting its neighbors, then $(\textit{Deg}_A,\textit{Neighbor}_R)$ can be reduced to $(\textit{id}_A,\textit{Neighbor}_R)$. 

\section{Connections between SEG-WL Test and Previous Graph Transformers}
\label{asec4}
As we have discussed above, SEG-WL test is capable of characterizing the expressive power of most graph Transformers, and here we will present some examples. Note that in the scope of this paper, we only consider simple undirected graphs with node features.

\paragraph{Graphormer \citep{ying2021transformers}.} The Graphormer model utilizes three types of structural encodings: \textit{Centrality Encoding} that encodes node degrees, \textit{Spatial Encoding} that encodes the structural relation between nodes via shortest path distance, and \textit{Edge Encoding} that captures information of edges that connect two nodes (which we do not consider since it relates to edge feature). The \textit{Centrality Encoding} corresponds to the $\textit{Deg}_A$ absolute structural encoding we discuss in Section \ref{sec41}, and the \textit{Spatial Encoding} is equivalent to the shortest path distance encoding $\textit{SPD}_R$ in Section \ref{sec41}. Therefore, similar to the proof for Proposition \ref{apro0}, we can prove that the expressivity of Graphormer with two types of structural encoding above can be characterized with \textit{Graphormer}-SEG-WL, where
\begin{align}
    \textit{Graphormer}=(\textit{Deg}_A,\textit{SPD}_R).
\end{align}
According to Proposition \ref{apro8}, the \textit{Graphormer} encoding above can be reduced to $\textit{SPD}=(\textit{id}_A,\textit{SPD}_R)$ since the degree of node $v$ can be inferred from the number of node $v$ such that $\textit{SPD}(v,u)=1$. Thus, the expressivity of Graphormer can be characterized with $\textit{SPD}$-SEG-WL. According to our analysis in Section \ref{sec51}, $\textit{SPD}$-SEG-WL is strictly more powerful than WL and has unique expressive power elaborated by Proposition \ref{mpro1}.

\paragraph{SEG-WL \citep{dwivedi2020generalization} and SAN \citep{kreuzer2021rethinking}.} SEG-WL and SAN both employ Laplacian eigenvalues and eigenvectors as absolute structural encodings, and during Transformer layers the embedding update strategy is determined by link connections. For both models, it can be easily verified that $\textit{Laplacian}_A^k$ below characterizes their absolute structural encodings:
\begin{align}
    \textit{Laplacian}_A^k(v,G)=(\Lambda_G^k, \lambda_v^k),
\end{align}
where $\Lambda_G^k$ is the $k$ smallest Laplacian eigenvalues of graph $G$, $\lambda_v^k$ is the Laplacian eigenvector of $v$ in $G$ corresponding to $\Lambda_G^k$, and every $\textit{Laplacian}_A^k(v,G)$ comes from a \textit{deterministic factorization policy for graph Laplacian matrix}. As for relative structural encoding, since during Transformer layers both models only consider if two nodes are linked, we can conclude that $\textit{Neighbor}_R$ summarizes the expressivity of embedding update process. Therefore, $\textit{Laplacian}^k$-SEG-WL is an upper bound on the expressivity of SAN and SEG-WL model, where
\begin{align}
    \textit{Laplacian}^k=(\textit{Laplacian}_A^k,\textit{Neighbor}_R).
\end{align}
It is quite difficult to accurately analyze the expressive power of $\textit{Laplacian}^k$ since it relates to the sign invariance of Laplacian eigenvectors and contents of spectral graph theory. However, since $\textit{Laplacian}^k$ only involves the $\textit{Neighbor}_R$ relative encoding, our Theorem \ref{thm4} shows that for SAN and SEG-WL, the exploitation of Transformer network results in no improvement on the structural expressive power when comparing with GNNs using $\textit{Laplacian}_A^k$ as additional node features.

\paragraph{Gophormer \citep{zhao2021gophormer}.} Gophormer is a scalable graph Transformer model for node classification with proximity-enhanced multi-head attention (PE-MHA) as the core module for learning graph structure. When analyzing the structural expressive power of Gophormer, the global nodes added to represent global information are ignored. It can be concluded that the following $\textit{Proximity}^k_R$ relative structural encoding characterizes the expressivity of PE-MHA in Gophormer:
\begin{align}
    \textit{Proximity}^k_R(v_i,u_j,G)=(\mI(i,j),\tilde\mA(i,j),\ldots,\tilde\mA^k(i,j)),
\end{align}
where $\mI$ is the identity matrix, and $\tilde \mA=\text{Norm}(\mA+\mI)$ is the normalized adjacency matrix with self-loop. Since Gophormer employs no absolute structural encoding, $\textit{Proximity}^k$-SEG-WL describes the expressivity of Gophormer, where $\textit{Proximity}^k=(\textit{id}_A,\textit{Proximity}^k_R)$. 

As for any $v_i,v_j$, $\mA(i,j)$ can be inferred from $(\mI(i,j),\tilde\mA(i,j))$, the $\textit{Proximity}^k$ structural encoding is more expressive than \textit{Neighbor} when $k\leq 1$. As a result, according to Theorem \ref{thm4}, $\textit{Proximity}^k$-SEG-WL is more powerful than WL, and one can easily verify that two graphs in Figure \ref{afig1} can be distinguished by $\textit{Proximity}^k$-SEG-WL. Therefore, we can conclude that Gophormer with $\textit{Proximity}^k$ encoding is strictly more powerful than WL.

\paragraph{SAT \citep{chen2022structure}} SAT propose the Structure-Aware Transformer with its new self-attention mechanism which incorporates structural information into the original self-attention by extracting a subgraph representation rooted at each node using GNNs before computing the attention. Theoretical results in the SAT paper guarantees that SAT is at least as expressive as the GNN subgraph extractor, and using SEG-WL test we will arrive at the similar result. In the framework of SEG-WL test, regardless of absolute structural encoding, SAT model incorporates the node features generated by GNNs as relative structural encoding at each structure-aware attention:

\begin{align}
    &\textit{SAT}_R^\text{ subtree}(v,u,G)=(\text{GNN}_G^{(k)}(v),\text{GNN}_G^{(k)}(u))\\
    &\text{ ($k$-subtree GNN extractor)},\\
    &\textit{SAT}_R^\text{ subgraph}(v,u,G)=(\sum_{u\in N_k(v)}\text{GNN}_G^{(k)}(u),\sum_{r\in N_k(u)}\text{GNN}_G^{(k)}(r)),\\
    &\text{ ($k$-subgraph GNN extractor)}.
\end{align}
For $k$-subtree GNN extractor, considering that $\text{GNN}_G^{(k)}(v)$ can be inferred from $\ldblbrace\textit{SAT}_R^\text{ subtree}(v,u,G):u\in V\rdblbrace$ by choosing the first element of each tuple, with proposition \ref{apro8} we can conclude that $\text{SAT}^\text{ subtree}$ can be viewed as having absolute structural encoding generated by $\text{GNN}_G^k$, which is the same for $k$-subgraph GNN extractor. Therefore, $\textit{SAT}^\text{ subtree}$-SEG-WL is more powerful than $\phi(v,G)=\text{GNN}^{(k)}_G(v)$, and $\textit{SAT}^\text{ subgraph}$-SEG-WL is more powerful than $\phi(v,G)=\sum_{u\in N_k(v)}\text{GNN}_G^{(k)}(u)$, which shows that the expressivity upper bound of SAT is more powerful than its GNN feature extractor.
\section{Graph Representation Learning Experiments}
\label{asec5}

\subsection{Datasets}
\label{asec51}

\begin{table*}[htbp]
    \begin{tabular}{l|cccccc}
        \toprule
        Datasets & \#Graphs & \#Nodes & \#Node Attributes & \#Edges & \#Edge Attributes & \#Tasks \\
        \midrule
        ZINC(subset) & 12,000 & 277920 & 1 & 597960 & 1 & 1  \\
        QM9          & 130831 & 2359210& 11& 4883516& 4 & 12 \\
        QM8          & 21786  & 169339 & 79& 352356 & 10& 16 \\
        ESOL         & 1128   & 14991  & 9 & 15428  & 3 & 1  \\
        \bottomrule
    \end{tabular}
    \caption{Statics for graph regression datasets.}
    \label{regression}
    
\end{table*}

\begin{table*}[htbp]
    \begin{tabular}{l|cccccc}
        \toprule
        Datasets & \#Graphs & \#Nodes & \#Node Attributes & \#Edges & \#Edge Attributes & \#Classes \\
        \midrule
        PTC-MR   & 344  & 4015    & 18  & 10108 & 4   & 2   \\
        MUTAG    & 188  & 3371    & 7   & 7442  & 4   & 2   \\
        COX2     & 467  & 19252   & 35  & 40578 & -   & 2   \\
        PROTEINS & 1113 & 43471   & 3   & 162088& -   & 2   \\
        \bottomrule
    \end{tabular}
    \caption{Statics for graph classification datasets.}
    \label{classification}
    
\end{table*}

Statistics of the datasets used in this work are summarized in Table \ref{regression} and \ref{classification}. 

\subsection{Settings}

\label{asec52}
\subsubsection{Graphormer and GraphGPS Variants}
\label{asec521}
\paragraph{Model Description.} In graph representation learning experiments, We use four Graphormer variants based on four structural encoding schemes discussed in the main paper: \textit{SPIS}, \textit{SPD}, \textit{Neighbor} and \textit{id}. For Graphormer-\textit{SPIS}, to incorporate the extra structural information encoded by \textit{SPIS} encoding while not making significant changes to the model architecture, we replace the spatial encoding $b_{\textit{SPD}_R(v,u,G)}$ in Graphormer with $b_{\textit{SPD}_R(v,u,G)}+\text{Linear}(|V_{\textit{SPIS}(v,u)}|,|E_{\textit{SPIS}(v,u)}|)$, and keep the remaining network components unchanged. Graphormer-\textit{SPD} is basically the original Graphormer architecture. In Graphormer-\textit{Neighbor}, we remove the edge encoding since it contains information beyond the neighborhood connections, and replace the spatial encoding $b_{\textit{SPD}_R(v,u,G)}$ in Graphormer with $b_{\textit{Neighbor}_R(v,u,G)}$. Similarly, for Graphormer-\textit{id}, we remove the centrality encoding and edge encoding, and substitute the spatial encoding $b_{\textit{SPD}_R(v,u,G)}$ in Graphormer with $b_{\textit{id}_R(v,u,G)}$. For GraphGPS, we use the optimal settings reported by the original paper on ogb-PCQM4M dataset.

\begin{table*}[htbp]
    \begin{tabular}{l|cccccccc}
        \toprule
        & ZINC     & QM9      & QM8      & ESOL     & PTC-MR   & MUTAG    & COX2     & PROTEINS \\
        \midrule
        peak\_learning\_rate     & 2e-4 & 3e-4 & 3e-4 & 5e-4 & 0.01     & 0.01     & 0.01     & 0.01     \\
        end\_learning\_rate      & 1e-9 & 1e-9 & 1e-9 & 1e-9 & 1e-9 & 1e-9 & 1e-9 & 1e-9 \\
        hidden\_dim              & 80       & 512      & 256      & 256      & 256      & 256      & 256      & 256      \\
        ffn\_dim                 & 80       & 512      & 256      & 256      & 256      & 256      & 256      & 256      \\
        weight\_decay            & 0.01     & 0.0      & 0.0    & 0.0        & 0.0      & 0.0      & 0.0      & 0.0        \\
        input\_dropout\_rate     & 0.1      & 0.0        & 0.0        & 0.0     & 0.0  & 0.0        & 0.0        & 0.0        \\
        attention\_dropout\_rate & 0.1      & 0.0      & 0.1      & 0.1      & 0.1      & 0.1      & 0.1      & 0.1      \\
        dropout\_rate            & 0.1      & 0.0        & 0.1      & 0.1      & 0.1      & 0.1      & 0.1      & 0.1      \\
        num\_layers              & 12       & 20       & 6        & 16       & 16       & 16       & 16       & 16       \\
        num\_heads               & 8        & 32       & 16       & 16       & 16       & 16       & 16       & 16       \\
        \bottomrule
    \end{tabular}
    \caption{Model configurations and hyper-parameters of Graphormer with different types of structural encoding.}
    \label{hyper_graphormer}

\end{table*}

\paragraph{Model Configurations.}
We report the detailed hyper-parameter settings used for training the Graphormer variants in Table \ref{hyper_graphormer}. We use the source code provided by \citep{ying2021transformers} (MIT 2.0 license) and use AdamW \citep{loshchilov2018decoupled} as optimizer and linear decay as learning rate scheduler. All models are trained on 2 NVIDIA RTX 3090 GPUs for up to 12 hours.

\subsubsection{Baselines}
\label{asec522}
\begin{table*}[htbp]
    \begin{tabular}{l|cccccccc}
        \toprule
        & ZINC     & QM9      & QM8      & ESOL     & PTC-MR   & MUTAG    & COX2     & PROTEINS \\
        \midrule
        peak\_learning\_rate       & 3e-4 & 3e-4 & 3e-4 & 1e-3 & 3e-4 & 3e-4 & 3e-4 & 3e-4 \\
        end\_learning\_rate        & 1e-9 & 1e-9 & 1e-5 & 1e-9 & 1e-9 & 1e-9 & 1e-9 & 1e-9 \\
        hidden\_dim                & 256      & 256      & 256      & 512      & 256      & 256      & 256      & 256      \\
        weight\_decay              & 0.01     & 0.01     & 0.0        & 0.01     & 0.01     & 0.01     & 0.01     & 0.01     \\
        input\_dropout\_rate       & 0.0        & 0.0        & 0.1      & 0.1      & 0.0    & 0.0   & 0.0        & 0.0        \\
        dropout\_rate              & 0.1      & 0.1      & 0.1      & 0.1      & 0.1      & 0.1      & 0.1      & 0.1      \\
        num\_layers                & 16       & 16       & 16       & 5        & 5        & 5        & 5        & 5        \\
        num\_heads(only for GAT)   & 4        & 4        & 4        & 4        & 4        & 4        & 4        & 4        \\
        \bottomrule
    \end{tabular}
    \caption{Model configurations and hyper-parameters of GNN baselines.}
    \label{hyper_gnn}

\end{table*}

\paragraph{Model Configurations.}
We report the detailed hyper-parameter settings used for training GNN baselines including GCN \citep{kipf2016semi}, GAT \citep{velivckovic2017graph}, GIN \citep{xu2018powerful} and GraphSAGE \citep{hamilton2018inductive} in Table \ref{hyper_gnn}. During training stage, we use AdamW \citep{loshchilov2018decoupled} as optimizer and decay the learning rate with a cosine annealing utilized in \citep{loshchilov2016sgdr}. All models are trained on 2 NVIDIA RTX 3090 GPUs until convergence for up to 12 hours.

For SAT \citep{chen2022structure} model, it has substantially higher complexity than all proposed methods and baselines with its GNN-based feature extractor. We follow the instructions and run the code in \url{https://github.com/BorgwardtLab/SAT} on ZINC dataset. Due to limitations on computational resources, to give a fair comparison, we run the model for 3.5 days with almost 1000 epochs, and report the best performance.

\subsection{Performances on QM9}
\label{asec53}

\begin{table*}[htbp]
    \begin{tabular}{l|c|cccc}
        \toprule
         \multirow{2}*{Task} & \multirow{2}*{Unit} & \multicolumn{4}{c}{MAE} \\ 
         & & Graphormer-\textit{id} & Graphormer-\textit{Neighbor} & Graphormer-\textit{SPD} & Graphormer-\textit{SPIS} \\
        \midrule
        $\mu$  & \text{D} & 8.1654\small$\pm$0.1095 & 0.6926\small$\pm$1.646e-4 & 0.3688\small$\pm$3.010e-4 & 0.3536\small$\pm$3.727e-4 \\
        $\alpha$ & $a_0^3$ & 24.562\small$\pm$0.1815 & 0.8597\small$\pm$6.886e-4 & 0.2417\small$\pm$8.542e-7 & 0.2365\small$\pm$1.105e-3 \\
        $\epsilon_\text{HOMO}$ & \text{eV} & 1.5222\small$\pm$0.0283 & 0.1962\small$\pm$3.667e-4 & 0.0683\small$\pm$2.186e-5 & 0.0664\small$\pm$5.848e-5 \\
        $\epsilon_\text{LUMO}$ & \text{eV} & 4.3868\small$\pm$0.2717 & 0.2644\small$\pm$5.850e-5 & 0.0699\small$\pm$1.036e-5 & 0.0686\small$\pm$9.445e-5  \\
        $\Delta\epsilon$ & \text{eV} & 0.6235\small$\pm$0.0126 & 0.3407\small$\pm$4.459e-4 & 0.0933\small$\pm$1.420e-4 & 0.0904\small$\pm$2.811e-4 \\
        $\langle R^2\rangle$  & $a_0^2$ & 166.64\small$\pm$12.339 & 76.885\small$\pm$2.309e-2 & 18.774\small$\pm$7.047e-2 & 18.174\small$\pm$3.046e-2 \\
        $\text{ZPVE}$ & \text{eV} & 1.3654\small$\pm$0.0391 & 0.0165\small$\pm$3.954e-6 & 0.0061\small$\pm$2.012e-4 & 0.0055\small$\pm$5.311e-7 \\
        $U_0$  & \text{eV} & 3457.2\small$\pm$274.96 & 1.0558\small$\pm$8.925e-4 & 3.8210\small$\pm$7.458e-2 & 2.1069\small$\pm$3.581e-4 \\
        $U$ & \text{eV} & 2041.3\small$\pm$47.641 & 1.0552\small$\pm$2.932e-4 & 3.8882\small$\pm$2.049e-1 & 2.1069\small$\pm$3.694e-4 \\
        $H$  & \text{eV} & 3593.4\small$\pm$31.424 & 1.0540\small$\pm$6.737e-4 & 3.7888\small$\pm$1.232e-1 & 2.1007\small$\pm$4.798e-4 \\
        $G$ & \text{eV}  & 1468.9\small$\pm$97.816 & 1.0505\small$\pm$6.409e-4 & 3.8175\small$\pm$1.508e-1 & 2.0994\small$\pm$3.115e-4 \\
        $c_\text{v}$   & $\frac{\text{cal}}{\text{mol K}}$ & 5.4585\small$\pm$0.1456 & 0.4510\small$\pm$1.725e-4 & 0.1034\small$\pm$5.555e-5 & 0.1027\small$\pm$6.856e-7 \\
        \bottomrule
    \end{tabular}
    \caption{Performance on QM9, reported by separate tasks.}
    \label{QM9_mae}
    
\end{table*}

Here we additionally report the performance of Graphormer variants over 12 tasks individually on QM9 dataset in Table \ref{QM9_mae}.

\subsection{Code}
The experiment code is available in \url{https://drive.google.com/file/d/1umXMdH1wz3wk3dxZ6XOoe0eys7x9AoW8/view?usp=share_link}.

\section{Limitations and Possible Negative Societal Impacts}
\label{asec6}

\paragraph{Limitations.} It is well-known that self-attention in Transformer network has quadratic complexity with respect to the input size, and since SEG-WL test is proposed to characterize the expressivity of graph Transformers, it inherits this complexity issue and each label update iteration of SEG-WL test costs $O(n^2)$ complexity (equivalent to 2-WL), where $n$ is the input graph size. Besides, the structural encodings may be computed by algorithms with relative high complexity, like \textit{SPD} which is obtained by the $O(n^3)$ Floyd-Warshall algorithm, and in Appendix \ref{asec3} we formulate the complexity of proposed \textit{SPIS} as $O(n^3+n^2t^2)$. Still, we believe it is worth studying the expressive power of graph Transformers despite these limitations on complexity. It is shown that the global receptive field brought by self-attention can lead to higher performance than traditional GNNs on real-world benchmarks \citep{ying2021transformers}. Additionally, as Transformer gain popularity in multiple areas of machine learning, the complexity issue of Transformers can be mostly resolved by low-complexity self-attention techniques and modern computational devices specially optimized for Transformers. Therefore, together with our theoretical results which show graph Transformers can exhibit outstanding expressive power, we believe Transformers will be widely used in graph machine learning due to their performance and expressivity, despite their higher complexity than GNNs.

\paragraph{Ethic Statement and Possible Negative Societal Impacts.} This work is a foundational research on the expressivity of graph Transformers and is not tied to any particular applications. Therefore, our work may have potential negative societal impacts with malicious use of graph neural models (like generating fake profiles) or environmental impact (like training huge graph Transformers).

\end{document}